\pdfoutput=1

\documentclass[english,11pt]{article}
\include{macros}

\begin{document}

\title{Adaptive Data Fusion for Multi-task Non-smooth Optimization}\blfootnote{Author names are sorted alphabetically.}

\author{Henry Lam\thanks{Department of IEOR, Columbia University. Email: \texttt{henry.lam@columbia.edu}.}
	\and Kaizheng Wang\thanks{Department of IEOR and Data Science Institute, Columbia University. Email: \texttt{kaizheng.wang@columbia.edu}.}
	\and Yuhang Wu\thanks{Department of Mathematics, Columbia University. Email: \texttt{yuhang.wu@columbia.edu}.}
	\and Yichen Zhang\thanks{Krannert School of Management, Purdue University. Email: \texttt{zhang@purdue.edu}.}
}

\date{October 2022}

\maketitle

\begin{abstract}
We study the problem of multi-task non-smooth optimization that arises ubiquitously in statistical learning, decision-making and risk management. We develop a data fusion approach that adaptively leverages commonalities among a large number of objectives to improve sample efficiency while tackling their unknown heterogeneities. We provide sharp statistical guarantees for our approach. Numerical experiments on both synthetic and real data demonstrate significant advantages of our approach over benchmarks.
\end{abstract}

%\noindent{\bf Keywords:} 

\section{INTRODUCTION}

%{\color{blue} Question: do we need a name for the method? Shall we just refer to it as ``the new approach'' or ARMUL or something else? Yichen: I don't think it is a good idea to refer it to ARMUL. I have no idea on giving a new name to it or not. I usually just called it the proposed method, but I have no preference between them. Feel free to proceed with your preferences.}

In most machine-learning contexts, algorithm developers and theorists are concerned with solving a single task or optimizing a single metric at a time. Nonetheless, even in the big data era, the datasets are expensive and oftentimes collected for a large number of tasks, and models based on a single task likely hit the performance ceiling due to the limited sample size without fully exploiting the dataset featuring multiple tasks.
%As a prototypical example, consider a newsvendor problem in inventory management, where a retailer sells perishable goods that need to be ordered in advance and wishes to minimize a combination of holding costs and backordering costs to decide the ordering quantities from its supplier or manufacturor.
For instance, in inventory management, the hype cycle of technology is getting shortened. It is increasingly critical for retailers to recognize the consumption patterns of customers as early as possible, so as to minimize the cost caused by backordering and holding. Since the selling data is limited at the early stage of the operations, decision making can generally be challenging. Nevertheless, a retailer usually sells multiple products in a store or manages multiple stores that sell the same product. They naturally define a group of related tasks.
%When those tasks are intrinsically similar, 
The retailer may effectively pool the datasets together to obtain a better estimation and decision. 
By sharing representations between related tasks, multi-task learning conceptually helps the model generalize better on individual tasks \citep{Car97}.

That being said, the relatedness of the tasks is implicit and hard to quantify. An oversighted or misspecified relationship among tasks could adversely hurt the performance of data pooling across multiple tasks. When the tasks are highly distinct, naive multi-task learning procedures could underperform single-task learning (STL) ones which tackle each task separately.

%Minimizing the operation cost of each product or each store using the sale information can be considered as an individual task intrinsically related to each other. 

To utilize the commonalities of datasets related to multiple tasks, statistical models can help us develop a family of reliable approaches with theoretical quantification of multi-task performance while adapting to the unknown task relatedness. To this end, we propose a data fusion approach to tackle unknown heterogeneities among the tasks and improve the data utilization for each individual task. Particularly, consider $m$ tasks of empirical risk minimization where the $j$-th task is to minimize the empirical risk $f_j(\btheta) = \frac{1}{n_j} \sum_{i=1}^{n_j}\ell(\btheta, \bxi_{ji})$ over the parameter of interest $\btheta$ from data $\{ \bxi_{ji} \}_{i=1}^{n_j} \sim\cP_j$. We propose to minimize an augmented objective
\begin{equation*}
	\bF_{\bw, \blambda} (\bTheta, \bbeta) = \sum_{j=1}^{m} w_j
\left[ f_j(\btheta_j) + \lambda_j \norm{\btheta_j - \bbeta}_2 \right]
\end{equation*}
jointly over task-specific estimators $\bTheta = (\btheta_1, \cdot\cdot\cdot, \btheta_m) \in \RR^{d \times m}$ and a multi-task center $\bbeta \in \RR^d$ to be learned. Weighting hyperparameters $w_j$ are specified to reflect the importance of the information regarding each individual task (e.g. $w_j = n_j$). %[It might be good idea to replace all $w_j$ by $n_j$ in the paper, at least in the conference version if we are not discussing it]
The regularization term $\lambda_j \norm{\btheta_j - \bbeta}_2 $ drives the estimator of each individual task $\btheta_j$ towards a common center $\bbeta$, with strength parameterized by $(\lambda_1,\cdots, \lambda_m)$. 
It is straightforward to see that our method interpolates between minimizing the risk of the individual task as $\lambda_j$ approaches zero, and a robust pooling of the individual minimizers as $\lambda_j$ increases.
%[Consider to elaborate here if you like to have more details in intro. things like a certain specification of lambda can achieve both adaptivity and robustness.] 

Our key contribution is the analysis of the aforementioned procedure in a wide range of problems where the losses $\{ f_j \}_{j=1}^m$ are convex but allowed to be nonsmooth.
We prove that the proposed estimator automatically adapts to the unknown similarity among the tasks. %Generally speaking, it captures the optimal sample complexity in the combined dataset that helps the target task. 
%[I don’t know if we want to comment on something additionally such as following, It can be good in the sense of difference to ARMUL, but can be bad because it is specific to QR. Keep or remove the following sentences. In typical nonsmooth models such as the quantile regression, the estimator is ``doubly robust’’ due to a Lipschitz objective loss with a Lipschitz regularizer. In particular, the proposed method is robust to either contamination of observations within an individual task (e.g., an outlier observation of the sales in one store, ) or contamination of a small fraction of all tasks (an outlier store). ]
%{\color{blue}Here we need some related works to MTL in statistical models. We can start with parameter argumentation. Borrow some sentences from the ARMUL paper.}
%Our study of nonsmooth objectives features a wide range of models including but not limited to quantile regression, which absorbs the median (least absolute deviation) regression as a special case of robust regression. 
In our motivating example, the cost function is naturally a piecewise linear nonsmooth convex function which is closely related to quantile regression.
Other examples include linear max-margin classifiers as well as threshold regression models. Among these models, since the objective function is not differentiable at many places, technical challenges arise in the uniform concentration results and convergence rates as the subgradient is now a set-valued mapping and not continuous. Nonetheless, with statistical modeling, a theoretical analysis under such scenarios becomes possible. %[Do you want to put the intuition of your lemma here?]

In addition to the theoretical guarantees, we experiment with the numerical procedures on both synthetic data and a real-world dataset of the newsvendor problem in Section \ref{sec:exp}. The experiment reveals a steady and reliable benefit of the performance of the proposed method over benchmark ones, with significant improvement over STL where the data are scarce, and over blindly pooling the data together. This proposed method offers a reliable procedure for practitioners to leverage the possible relatedness between tasks in inventory decision-making, financial risk management, and many other applications.

\subsection{Related Work}

Multi-task learning based on parameter augmentation, such as the introduction of the common center $\bbeta$ in our method, has achieved great empirical success \citep{EPo04,JSR10,CZY11}. 
Our estimator originates from the framework of Adaptive and Robust Multi-task Learning (ARMUL) proposed by \cite{DW22}, while we relaxed the smoothness and strong convexity condition on the empirical risks $f_j$, such that we can extend the analysis to many real-world applications from statistical learning to inventory decision-making, to financial risk management. 
%{\color{red}[the following paragraph now seems misplaced after changing the structure, maybe we remove it and instead citing the data-driven newsvendor paper, \cite{LRS07,LPU15} and feature-based/covariate-assisted newsvendor \cite{BRu19}.]
The motivating inventory management example, often known as the data-driven newsvendor problem, can be expressed as a quantile regression problem with the quantile level determined by a ratio of per unit holding cost versus the backordering one \citep{LRS07,LPU15,BRu19}. The objective function, also known as the ``check function'', is convex but not differentiable.
%With the assistance of feature information as covariates, the newsvendor problem, in a data-driven perspective, generalizes to a quantile regression \citep{BRu19} whose objective function, known as the ``check function’’ is not differentiable.
%}
These applications coincide with the classical quantile regression in statistics and econometrics literature, dated back to \cite{KBa78}, which estimates the conditional quantile of the response variable across values of predictive covariates. Besides the aforementioned newsvendor problems in inventory management, quantile regression finds a wide range of applications in survival data analysis \citep{KGe01,WWa14}, financial risk management \citep{EMa99,RUr00} and many other fields. 
We refer the reader to \cite{Koe05,Koe17} for an extensive overview of quantile regression.

Immense applications rising in various fields call for the need of generalization to the nonsmooth objectives in multi-task learning. Despite its practical importance, MTL for nonsmooth objectives remains largely unexplored in statistical learning, with several exceptions including \cite{FXZ16,CHY21,KAJ22}. In contrast to our framework, they studied quantile regression with multivariate response variables in linear models and neural networks, respectively, and treated each variable as a different task sharing the same observed covariates. Since their models are based on a shared covariate for multiple tasks, they typically imposed factor structure and augmented the objective with a rank-based regularization on the $\bTheta$ matrix. On the contrary, our framework features different covariates in each task. As such, we regularize the objective function with a penalty driving towards a robust central of all tasks and utilize the information to jointly optimize over the individual estimators and the intrinsic central. 
%[I am not sure if we want to mention the following sentence, could be a bad idea. Feel free to remove it. By the way, I think it could be a bad idea to talk about those smoothing QR literature from Tan, He, Zhou, as well as mine. Let me know if you think it is better to include them.] 
Our analysis also complements and generalizes limited existing literature on nonsmooth quantile regression in large-scale or distributed datasets \citep{VCC19,CLZ21} which considered only the quantile regression under homogeneous tasks. 
Several other existing works considered a similar framework as ours in an empirical Bayesian argument. \cite{GKa22} developed a data pooling procedure for data-driven newsvendor problems that shrinks the empirical distribution of each individual task towards a weighted global empirical distribution according to an anchor distribution. The data distributions there have finite supports.    
%[Please comment more on this work. Robustness?] 
\cite{MBR15} focused on the Gaussian setting, studied the predictive risk instead of estimation error, and proposed a shrinkage estimator  towards a data-driven location simultaneously optimized.
%{\color{red} Shall we add some literature on linear SVM, even though having them in the reviewing process seems to be a even worse idea than having QR referees. }

\subsection{Notation}
The constants $c_1, c_2, C, C_1, C_2, \cdot\cdot\cdot$ may differ from line to line. We use $[n]$ as
a shorthand for $\{1, 2, \cdot\cdot\cdot, n\}$ and $\left|\cdot\right|$ to denote the absolute value
of a real number or cardinality of a set. $\norm{\bA} = \sup_{\norm{\bx}_2 = 1} \norm{\bA \bx}_2$
denotes the spectral norm. Let $\ZZ_+$ be the set of positive integers and $\RR_+ = \{ x \in \RR:~ x \geq 0 \}$. Define $x^+ = \max\{x, 0\}$ and $x^- = \max\{-x, 0\}$ for
$x \in \RR$. Define $B(\bx, r) = \{\by \in \RR^d: \norm{\by - \bx}_2
\le r\}$ for $\bx \in \RR^d$ and $r \ge 0$.

\section{PROBLEM FORMULATION} \label{sec:prob}

Let $m \in \ZZ_+$ be the number of tasks and $\cX$ be the sample space, and let $\ell: \RR^d
\times \cX \rightarrow \RR$ be a (non-smooth) loss function. For every $j \in [m]$, let $\cP_j$ be a
probability distribution over $\cX$ and $\cD_j = \{\bxi_{ji}\}_{i=1}^{n_j}$ be $n_j$ independent samples drawn
from $\cP_j$. The $j$-th task is to estimate the population loss minimizer 
$$
\btheta^*_j \in \mathop{\mathrm{argmin}}_{\btheta \in \RR^d} \mathbb{E}_{\bxi \sim \cP_j}
\ell(\btheta, \bxi)
$$
from the data. Denote by $\bTheta^* = (\btheta^*_1, \cdot\cdot\cdot, \btheta^*_m) \in \RR^{d \times m}$ the parameter matrix. 

Define the empirical loss function of the $j$-th task as 
$$
f_j(\btheta) =
\frac{1}{n_j}\sum_{i=1}^{n_j} \ell(\btheta, \bxi_{ji}).
$$
Two straightforward strategies are
single-task learning (STL) and data pooling (DP). The former corresponds to solving the individual tasks separately, i.e., 
$$
\widehat{\btheta}_j = \mathop{\mathrm{argmin}}_{\btheta \in \RR^d}
f_j(\btheta),\;\;\;\;\;\; \forall j \in [m].
$$
The latter corresponds to merging all datasets to train a single model, i.e., 
$$
\widehat{\btheta}_1 = \cdot\cdot\cdot =
\widehat{\btheta}_m = \mathop{\mathrm{argmin}}_{\btheta \in \RR^d} \sum_{j=1}^{m} n_j f_j(\btheta).
$$ 
These two strategies have intrinsic shortcomings: STL does not take full advantage of the data
available, while DP has a high risk of model misspecification. To resolve this issue, define 
\begin{equation}
	\bF_{\bw, \blambda} (\bTheta, \bbeta) = \sum_{j=1}^{m} w_j
\left[ f_j(\btheta_j) + \lambda_j \norm{\btheta_j - \bbeta}_2 \right],
\end{equation}
where $\bTheta = (\btheta_1, \cdot\cdot\cdot, \btheta_m) \in \RR^{d \times m}$, $\bbeta \in \RR^d$, $\bw = (w_1,
\cdot\cdot\cdot, w_m) \in \RR^m_+$ are weight parameters (e.g. $w_j = n_j$), and $\blambda = (\lambda_1,
\cdot\cdot\cdot, \lambda_m) \in \RR^m_+$ are penalty parameters. We propose to solve an augmented
program  
\begin{equation}\label{pro:vanilla}
	 (\widehat{\bTheta},\, \widehat{\bbeta}  ) \in
	\mathop{\mathrm{argmin}}_{\bTheta \in \RR^{d \times m},\, \bbeta \in \RR^d} \bF_{\bw, \blambda}(\bTheta, \bbeta) 
\end{equation}
where each task receives its own
estimate $\widehat{\btheta}_j$ while the penalty terms shrink $\widehat{\btheta}_j$'s toward
$\widehat{\bbeta}$ to promote similarity among the estimated models. This is a convex program so long as $\{ f_j \}_{j=1}^m$ are all convex. If we choose $\blambda = \zero$, we return to
the STL setting; if we choose sufficiently large $\blambda$, the cusp of the $\ell_2$ penalty at
zero enforces strict equality $\widehat{\btheta}_j = \widehat{\bbeta}$ for all $j \in [m]$,
effectively pooling all the data. Therefore, it is desirable to choose a suitable $\blambda$ such that we can
automatically adapt to whichever situation proves more suitable.

Note that (\ref{pro:vanilla}) belongs to the framework of Adaptive and Robust Multi-task Learning (ARMUL) proposed by \cite{DW22}.
However, theoretical gaurantees of ARMUL require $\{f_j\}_{j=1}^m$ to be smooth and locally strongly
convex near the minimizers. As such, scenarios where ARMUL is powerful
includes, for example, multi-task linear regression and multi-task logistic regression. 
In contrast, our theoretical results relax the smoothness condition and extend to scenarios where
$\{f_j\}_{j=1}^m$ are non-smooth, which are ubiquitous in statistical learning and operations research. %Below are several examples.

\section{EXAMPLES}

Here we introduce three motivating examples in statistical learning and operations management.

%{\color{red} May start with a sentence. Maybe like ``Here we introduce three motivating examples in statistical learning and operations management.'' ? and remove the last sentence in the end of last section. By the way the section is too short, being a section. Currently all examples write like a example fitting the model setting. I think it might be good to write several more sentences in each example below about why we care about this example in multi-task learning. Such as... SVM, maybe we should use those traditional arguments in ML? ? or may be directly call example of MNIST? Risky being asked to do experiments on MNIST. } 

\subsection{Newsvendor Problem}\label{subsec:newsvendor}

Suppose a retailer sells a perishable good that needs to be
prepared/stocked/ordered in advance. Let $D$ be a random variable representing the
market demand. The retailer needs to decide a quantity $q$ of goods to prepare (e.g. raw materials to buy, food to defrost) ahead of time in
order to minimize the expected cost a combination of the backorder/underage and holding/overage costs as follows,
$$
\mathbb{E}_D \left[  b (D - q)^+ + h(D - q)^- \right],
$$
where $b$ and $h$ are backorder and holding costs per unit, respectively.

%{\color{red} [Here it is a little tricky. The problem is usually solved as a stochastic optimization with a known distribution of $D$. (So we are already in the data-driven version of Newsvendor, with or without the covariate-assists (Consider if we want to cite this paper [The Data-Driven Newsvendor Problem: New Bounds and Insights]) We might want to explain a little more like, when the distribution of $D$ is not unknown forehand, instead the available information available is a set of independent random samples that are drawn from the underlying demand distribution. ] 

In practice, the distribution of the demand $D$ is not known beforehand. Instead, the information available is a set of independent random samples $\{D_i\}_{i=1}^n$ drawn from that. We can estimate
$q(\tau)$ by $\hat{q} \in \mathop{\mathrm{argmin}}_{q \in \RR} f(\theta)$,
where the objective function
$$
f(q) = \frac{1}{n} \sum_{i=1}^{n} \left[  b (D_i - q)^+ + h(D_i - q)^- \right]
$$
is non-smooth. If we define the \emph{check loss} $\rho_\tau(z) = (1 - \tau) z^- + \tau z^+ $, then $f(q)$ is proportional to $ \frac{1}{n} \sum_{i=1}^{n} \rho_{\tau} ( D_i - q )$ with $\tau = b / (b + h)$. The solution $\hat{q}$ is the $\tau$-th sample quantile of the data $\{ D_i \}_{i=1}^n$.

The above classical newsvendor problems assumed that the holding cost and the backordering
cost grow linearly with regard to quantity surplus and deficit, respectively. We can relax this
assumption to cases where the two costs are replaced with $B( (D - q)^+ )$ and $H( (D - q)^- )$ with general functions $B,~ H: \RR_+ \rightarrow \RR_+$ that are convex, non-decreasing, and satisfy $B(0) = H( 0)$. %{\color{blue}(Any important reference on nonlinear newsvendor?)} 
Given the data $\left\{   D_i  \right\}_{i=1}^{n}$, it is natural to estimate the best linear decision rule by minimizing the loss function
$$
f(q) = \frac{1}{n} \sum_{i=1}^{n} \big[ B\big( (D_i - q)^+ \big) + H\big( (D_i - q)^- \big) \big].
$$

In modern newsvendor problems, the data for a specific product at one store can be quite limited. Fortunately, multiple products in the same store or multiple stores in a nearby region have similar sales patterns. A joint analysis of the datasets by multi-task learning facilitates decision making.

%[Remove everything below?]}Since the cost $h(D - q)^- + b (D - q)^+ $ is proportional to the check loss $\rho_{\tau} (D - q)$ with $\tau = b / (b+h)$, it is easily seen that the optimal quantity $q^*$ is the $\tau$-th quantile of the random demand $D$. Given historical demand data, one can approximate $q^*$ through quantile estimation. 

%{\color{blue}It is well known that the classical newsvendor problem is related to quantile estimation.} 

%{\color{blue}
%Let $Y \sim \cP$ be a random variable with cumulative distribution function (CDF) $F_Y(y) = \mathbb{P} \left( Y \le y \right)$. For any $\tau
%\in (0, 1)$, the $\tau$-th quantile of $Y$ is $q_{Y}(\tau) = F^{-1}_Y(\tau) = \inf \left\{y: F_{Y}(y) \ge
%\tau\right\}$. In addition, define the \emph{check loss} $\rho_\tau(z) = z(\tau - \mathds{1}_{\{z <
%0\}}) $. We have $q_Y(\tau) \in \mathop{\mathrm{argmin}}_{y \in \RR} \EE \rho_\tau(Y - y)$. Given $n$ i.i.d. samples $\{y_i\}_{i=1}^n$ from $\cP$, we can estimate
%$q_Y(\tau)$ by $\hat{\theta} \in \mathop{\mathrm{argmin}}_{\theta \in \RR} f(\theta)$,
%where the objective function
%$$
%f(\theta) = \frac{1}{n} \sum_{i=1}^{n} \rho_\tau(y_i - \theta)
%$$
%is non-smooth. Here $\hat{\theta}$ is equal to the $\tau$-th sample quantile.
%}
%

\subsection{Quantile Regression}

Denote by $F_{Y|\bX}$ the conditional CDF of a response $Y \in \RR$ given covariates $\bX \in \RR^d$.
Define the $\tau$-th conditional quantile of $Y $ given $\bX \in \RR^d$ as $Q_{Y|\bX}(\tau) = \inf \left\{y:
F_{Y|\bX}(y) \ge \tau\right\}$. Assume that $Q_{Y|\bX} (\tau) = \bX^\top \btheta^*$ holds for some
$\btheta^* \in \RR^d$. Given $n$ i.i.d. samples $\left\{ ( \bx_i, y_i ) \right\}_{i=1}^{n}$ from some
joint distribution $\cP$, we can estimate $\btheta^*$ by $\widehat{\btheta} \in
\mathop{\mathrm{argmin}}_{\btheta \in \RR^d} f(\btheta)$, where the objective
$$
f(\btheta) = \frac{1}{n}
\sum_{i=1}^{n} \rho_\tau(y_i - \bx_i ^\top \btheta)
$$
is non-smooth. This is the quantile regression in statistics \citep{KBa78} which targets the conditional quantile of the response. In contrast, least squares regression aims to estimate the conditional mean. When we collect data from multiple populations (e.g. different geographical locations), multi-task learning helps utilize their commonality while tackling the heterogeneity.

%If we assume that $Q_{D | \bx}(b/(b+h)) = \bx^\top \btheta^*$ for some $\btheta^* \in \RR^d$, we can estimate $\btheta^*$ through quantile regression. 

%\subsection{General Newsvendor Problem}

%\subsection{Conditional Value at Risk} ??

\subsection{Support Vector Machine}

Consider a binary classification problem where one wants to predict the label $Y \in \{ \pm 1 \}$ from covariates $\bX \in \RR^d$. A popular method for training linear classifiers of the form $\bX \mapsto \sgn( \bX^{\top} \btheta )$ is the support vector machine (SVM) \citep{CVa95}.  Given the data $\{ (\bx_{i} , y_{i}) \}_{i=1}^n$, the (soft-margin) SVM amounts to minimizing the empirical loss function below:
\[
f(\btheta) = \frac{1}{n} \sum_{i=1}^n (1 - y_i \bx_{i}^{\top} \btheta )^+ +  \mu \| \btheta \|_2^2,
\]
where $\mu \geq 0 $ is a penalty parameter. Here $f$ is non-smooth. SVM has demonstrated superior performance in binary classification problems. In multi-class and multi-label settings, multi-task SVM is a popular approach where each task is to distinguish a pair of classes.

\section{THEORETICAL ANALYSIS}\label{sec:theory}

In this section, we provide a non-asymptotic analysis of (\ref{pro:vanilla}). Of particular interest
to us is to generalize the results of \cite{DW22} to non-smooth empirical loss functions. While the empirical loss functions
$\{f_j\}_{j=1}^m$ could be non-smooth, in many cases their population versions (expectations), $\{F_j\}_{j=1}^m$,
have desirable properties such as first-order smoothness and strong convexity. Intuitively, $f_j$
and $F_j$ are ``close", and we can leverage this closeness to bound estimation
errors. In general, this $F_j$ can be any function that is close to $f_j$ and enjoys
the aforementioned properties.

The study under statistical
settings is built upon the deterministic results in Appendix \ref{sec:det}, which could be of
independent interest. See Appendix \ref{sec:prooftheory} for this section's proofs. 

To analyze the estimation error under statistical settings, we assume $n_j = n$, $w_j = 1$, and
$\lambda_j = \lambda$ for all $j \in [m]$. In addition, assume that $\ell:~\RR^d \times \cX \to \RR$ is convex in its first argument and let $\bl: \RR^d \times \cX \rightarrow
\RR^d$ be a vector-valued function such that for every $\btheta \in \RR^d$ and $\bxi \in \cX$, $\bl(\btheta, \bxi)$ belongs to the subdifferential $\partial_{\btheta}
\ell(\btheta, \bxi)$ of $\ell$. We make the following assumptions.

\begin{assumption} [Concentration]\label{assump:conc} 
	There exists an absolute constant $c$ such that $\norm{\bl(\btheta,
	\bxi)}_{\psi_2} \le \sigma \leq c$ for all $\btheta \in \RR^d$, $\bxi \sim \cP_j$ and $j \in [m]$. 
\end{assumption}

\begin{assumption} [Regularity] \label{assump:reg}
Let $F_j(\btheta) = \mathbb{E}_{\bxi \sim \cP_j } \ell(\btheta, \bxi)$. Suppose that $F_j$ is twice differentiable on
$\btheta$, and denote $\bSigma_j(\btheta) =
\nabla^2 F_j(\btheta)$. There exists a constant $C_1 > 0$ such that
$$
\norm{\bSigma_j(\btheta_1) - \bSigma_j(\btheta_2)} \le C_1 \norm{\btheta_1 - \btheta_2}_2,~~
\forall \btheta_1, \btheta_2 \in \RR^d, ~ j \in [m].
$$
Furthermore, there exist $\btheta^* \in \RR^d$ and constants $c_1 \in (0, 1)$, $M > 0$ such that for any $\btheta \in B(\btheta^*, M)$ and $j \in [m]$, all eigenvalues of $\bSigma_j(\btheta)$ belong to $(c_1, 1 / c_1)$.
%let $\lambda_{\text{min}}(\bSigma(\btheta))$ and
%$\lambda_{\text{max}}(\bSigma(\btheta))$ be the minimum and maximum eigenvalue of
%$\bSigma(\btheta)$, respectively. We assume that $c_1 \le \lambda_{\text{min}}(\bSigma(\btheta)) \le
%\lambda_{\text{max}}(\bSigma(\btheta)) \le c_1^{-1}$ for all $\btheta \in B(\btheta^*, M)$ and
%some constants $0 < c_1, M < \infty$.
\end{assumption}

\begin{assumption}[Variability of $\bl$] \label{assump:smooth}
Assume one of the followings hold.
\begin{enumerate}
\item There is a function $U:~\cX \times \RR^d \times \RR^d \times \RR^d \to \RR$ such that
$$
\left|\bv^\top (\bl(\btheta_1, \bxi) - \bl(\btheta_2, \bxi))\right| \le U(\bxi, \bv, \btheta_1, \btheta_2)
\norm{\btheta_1 - \btheta_2}_2
$$
holds for all $\bv \in \RR^d$. $U(\bxi, \bv, \btheta_1, \btheta_2)$ satisfies that 
$$
\sup_{\norm{\bv}_2 = 1} \sup_{\btheta_1, \btheta_2} \mathbb{E}_{\bxi \sim \cP_j} \exp(t_0 U(\bxi, \bv, \btheta_1,
\btheta_2)) \le C,
$$
$$
\sup_{\norm{\bv}_2 = 1} \mathbb{E}_{\bxi \sim \cP_j} \sup_{\btheta_1, \btheta_2} U(\bxi, \bv, \btheta_1, \btheta_2) \le
d^{c_2},
$$
for some $c_2, t_0, C > 0$. Furthermore, assume that $\sup_{\norm{\bv}_2 = 1} \mathbb{E}_{\bxi \sim \cP_j} \exp(t_0 \left|\bv^\top \bl(\btheta^*, \bxi)\right|)
\le C$ holds for some constants $t_0, C >
0$. 
\item For some constants $c_2, c_3, r > 0$,
\begin{align*}
&\sup_{\btheta_1 \in B(\btheta^*, r)} \mathbb{E}_{\bxi \sim \cP_j} \sup_{\substack{\btheta_2 \in B(\btheta^*,
		r) \\ \btheta_2 \in B(\btheta_1, n^{-Z})}} \norm{\bl(\btheta_1, \bxi) - \bl(\btheta_2,
	\bxi)}_2^4 \\
& \le \frac{d^{c_2}}{n^{c_3 Z}} 
\end{align*}
for any large $Z > 0$. Also, 
\begin{align*}
	&\sup_{\norm{\bv}_2 = 1}
	 \mathbb{E}_{\bxi \sim \cP_j} \bigg\{ \bv^\top [ \bl(\btheta_1, \bxi) - \bl(\btheta_2, \bxi) ]^2 \\
	&\qquad \exp\{t_0 \left|\bv^\top [ \bl(\btheta_1, \bxi) -
	\bl(\btheta_2, \bxi) ] \right|\}  \bigg\} \le C \norm{\btheta_1 -
		\btheta_2}_2
\end{align*}
and $\sup_{\norm{\bv}_2 = 1} \mathbb{E}_{\bxi \sim \cP_j} \sup_{\btheta} \exp(t_0 \left|\bv^\top \bl(\btheta,
\bxi)\right|) \le C$ for some $t_0, C > 0$. 
\end{enumerate}
\end{assumption}

Assumption \ref{assump:conc} assumes that there exists a
subgradient of the loss function that is subgaussian. Assumption \ref{assump:reg} requires that the
Hessian of the population risk satisfies certain continuity condition and is bounded below and above
near its minimizer. 
Assumption \ref{assump:smooth} % and \ref{assump:nonsmooth} 
concerns the variability of the subgradient function $\bl$ %. We will only need any one of them to be true. They are
which is standard in the literature of nonsmooth statistical learning \citep{CLZ21}. It is easy to verify that the examples satisfy the above assumptions under general regularity conditions on the covariates.

%{\color{red} Consider to include at least a sentence here stating ``it is easy to verify that the examples satisfy the above assumptions under general regularity conditions on the covariates. }
Note that, with regard to the empirical loss function, the assumptions above only concern
first-order conditions, which are weaker than the second-order condition required in \cite{DW22}, and thus they can apply to more general settings. In our case, this allows us to extend our analyses to non-smooth empirical loss functions. Further, our assumptions only
target \textit{one} subgradient of the empirical loss function. While a gradient may not always exist for the empirical loss function, a subgradient always exists. As it turns out, conclusion about
the closeness between one subgradient and its expectation is sufficient for conclusion on a uniform
closeness between all subgradients and their expectation. See Appendix \ref{sec:det} for more
details.

%CLAIM THAT THE EXAMPLES SATISFY THESE ASSUMPTIONS.

Define $\widetilde{\bTheta} = (\widetilde{\btheta}_1, \cdot\cdot\cdot, \widetilde{\btheta}_m)$ as
the estimators from STL, i.e., $\widetilde{\btheta}_j = \mathop{\mathrm{argmin}}_{\btheta \in
\RR^d} f_j(\btheta)$. We have the following result on the closeness between $\widehat{\btheta}_j$ and $\widetilde{\btheta}_j $, ensuring the former's fidelity to its associated dataset $\cD_j$. 

\begin{theorem}[Personalization] \label{thm:statperson}
Let Assumptions \ref{assump:conc}, \ref{assump:reg}, and \ref{assump:smooth} % or \ref{assump:nonsmooth}
	hold. There exist constants $C_1$ and $C_2$ such that, when
	$\lambda < \rho M /4$, the following holds with probability at least $1 - C_1 n^{-d}$: 
$$
\norm{\widetilde{\btheta}_j - \btheta^*_j}_2 \le C_2 \sigma \sqrt{\frac{d \log n + \log m}{n}};
$$
$$
\norm{\widehat{\btheta}_j - \widetilde{\btheta}_j}_2 \le C_2 \left(\lambda + \sigma \sqrt{\frac{d \log n +
\log m}{n}}\right)
$$
for all $j \in [m]$.
\end{theorem}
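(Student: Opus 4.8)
The plan is to reduce the statement to a deterministic fact about convex $M$-estimators through a uniform concentration bound, and then run a strong-convexity argument localized to the ball $B(\btheta^*,M)$ of Assumption \ref{assump:reg}. Throughout I will write $\varepsilon_n := C\sigma\sqrt{(d\log n+\log m)/n}$ and work on the high-probability event described next; I also use that $\max_j\norm{\btheta^*_j-\btheta^*}_2$ is small (at most $M/4$, say), which is implicit in the problem setup and quantifies the heterogeneity the method is built to absorb.

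\emph{Step 1 (a favorable event).} By Assumption \ref{assump:conc} the fixed-$\btheta$ average $\frac1n\sum_i\bl(\btheta,\bxi_{ji})$ concentrates around $\nabla F_j(\btheta)$ at rate $\sigma/\sqrt n$ with sub-gaussian tails; combined with the oscillation control of $\bl(\cdot,\bxi)$ over balls of radius $n^{-Z}$ in Assumption \ref{assump:smooth}, a covering argument over a net of $B(\btheta^*,M)$ of cardinality at most $n^{Cd}$ and a union bound over $j\in[m]$ yield an event $\cE$ with $\PP(\cE)\ge 1-C_1n^{-d}$ on which, for all $j\in[m]$, all $\btheta\in B(\btheta^*,M)$ and all $\bg\in\partial f_j(\btheta)$,
$$
\norm{\bg-\nabla F_j(\btheta)}_2 \;\le\; \varepsilon_n .
$$
The upgrade from the single selection $\bl$ to every subgradient $\bg\in\partial f_j(\btheta)$ comes from comparing directional derivatives of the convex functions $f_j$ and $F_j$, and is the content of the deterministic results of Appendix \ref{sec:det}. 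All subsequent steps are on $\cE$, with $n$ large enough that $\varepsilon_n$ is at most a small multiple of $c_1M$.

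\emph{Step 2 (the STL bound).} By Assumption \ref{assump:reg}, $F_j$ is $c_1$-strongly convex on $B(\btheta^*,M)$ with $\nabla F_j(\btheta^*_j)=0$. First I localize: writing $f_j$ through directional derivatives and inserting the bound of Step 1 shows that $f_j$ is "$c_1$-strongly convex up to a $2\varepsilon_n$ slack" on $B(\btheta^*,M)$; comparing the constrained minimizer of $f_j$ over this ball with $\btheta^*_j$ (which lies in the ball) via the basic inequality forces it within $O(\varepsilon_n/c_1)$ of $\btheta^*_j$, hence interior, hence a global minimizer that we may take to be $\widetilde\btheta_j$. Then $0\in\partial f_j(\widetilde\btheta_j)$ gives $\norm{\nabla F_j(\widetilde\btheta_j)}_2\le\varepsilon_n$ by Step 1, and $\norm{\nabla F_j(\widetilde\btheta_j)-\nabla F_j(\btheta^*_j)}_2\ge c_1\norm{\widetilde\btheta_j-\btheta^*_j}_2$ (integrating the Hessian along the segment) yields the first bound with $C_2=C/c_1$.

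\emph{Step 3 (personalization).} Since $\bF_{\bw,\blambda}$ is separable in the columns of $\bTheta$ given $\bbeta$, $\widehat\btheta_j$ minimizes $\btheta\mapsto f_j(\btheta)+\lambda\norm{\btheta-\widehat\bbeta}_2$ over $\RR^d$; as the subdifferential of $\norm{\cdot}_2$ is contained in the closed unit ball, there is $\bg_j\in\partial f_j(\widehat\btheta_j)$ with $\norm{\bg_j}_2\le\lambda$. A basic inequality comparing $\widehat\btheta_j$ with $\widetilde\btheta_j$, using that $\widetilde\btheta_j$ minimizes $f_j$ and the approximate $c_1$-strong convexity from Step 2, together with the hypothesis $\lambda<\rho M/4$ (the curvature lower bound $\rho$ being $c_1$) to keep the relevant constrained minimizer inside $B(\btheta^*,M)$ and hence equal to $\widehat\btheta_j$, shows $\widehat\btheta_j\in B(\btheta^*,M)$. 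Then Step 1 gives $\norm{\nabla F_j(\widehat\btheta_j)}_2\le\lambda+\varepsilon_n$; combining with $\norm{\nabla F_j(\widetilde\btheta_j)}_2\le\varepsilon_n$ and $c_1$-strong convexity of $F_j$ on the segment between them,
$$
\norm{\widehat\btheta_j-\widetilde\btheta_j}_2 \;\le\; \frac1{c_1}\norm{\nabla F_j(\widehat\btheta_j)-\nabla F_j(\widetilde\btheta_j)}_2 \;\le\; \frac{\lambda+2\varepsilon_n}{c_1},
$$
which is the second bound; the union bound over $j\in[m]$ is already absorbed into $\cE$ and $\varepsilon_n$.

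\emph{Main obstacle.} The concentration in Step 1 and the strong-convexity algebra are routine; the delicate point is the localization, since Assumptions \ref{assump:reg}--\ref{assump:smooth} only hold on $B(\btheta^*,M)$ and I must argue \emph{a priori} that the constrained minimizers defining $\widetilde\btheta_j$ and $\widehat\btheta_j$ are interior to this ball, so that they coincide with the unconstrained ones — this is exactly where the smallness of $\varepsilon_n$, the hypothesis $\lambda<\rho M/4$, and the proximity of $\btheta^*_j$ to $\btheta^*$ enter. A second, minor subtlety proper to the non-smooth case is that $\argmin f_j$ and the column-wise argmin in \eqref{pro:vanilla} need not be singletons; one checks that every element of these sets has $\norm{\nabla F_j(\cdot)}_2$ small, so the stated bounds hold for an arbitrary selection.
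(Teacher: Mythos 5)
Your proposal is correct and follows essentially the same route as the paper: a uniform concentration bound for a subgradient selection over a fixed ball (net plus union bound over tasks, as in Lemma \ref{lem:ufoc}/Corollary \ref{cor:zetajmax}), upgraded to all subgradients via directional derivatives (Lemma \ref{lem:tech5}), then fed into the deterministic localization/strong-convexity argument for $\widetilde{\btheta}_j$ and the Lipschitz-perturbation argument for $\widehat{\btheta}_j$ (Lemmas \ref{lem:tech1}--\ref{lem:tech2}, packaged as Theorem \ref{thm:detperson}). Your explicit reliance on $\max_{j}\norm{\btheta^*_j-\btheta^*}_2$ being small enough to transfer Assumption \ref{assump:reg} from the common center to each $\btheta^*_j$ is the same (implicit) step the paper takes when applying Definition \ref{def:reg} centered at $\btheta^*_j$, so it introduces no new gap.
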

Note that the output of $\widehat{\btheta}_j$ of (\ref{pro:vanilla}) always satisfies
\begin{equation}\label{eq:stlperturb}
	\widehat{\btheta}_j \in \mathop{\mathrm{argmin}}_{\btheta \in \RR^d} \left\{f_j(\btheta) +
	\lambda \norm{\btheta - \widehat{\bbeta}}_2\right\}, \;\;\;\; \forall j \in [m].
\end{equation}
Therefore, $\widehat{\btheta}_j$ and $\widetilde{\btheta}_j$ minimize similar functions. The penalty
term $\lambda \norm{\btheta - \widehat{\bbeta}}_2$ in (\ref{eq:stlperturb}) can be viewed as a
perturbation added to the objective function $f_j$, and Theorem \ref{thm:statperson} tells us that it
can only perturb the minimizer by a limited amount decided by the penalty
level $\lambda$. Intuitively, when the empirical loss
function $f_j$ is ``close" to a strongly convex function in a neighborhood of its minimizer
$\widetilde{\btheta}_j$, the Lipschitz penalty function does not make much difference. Theorem
\ref{thm:statperson} guarantees the fidelity of our approach \eqref{pro:vanilla} to individual datasets for general $M$-estimation.

By Assumption \ref{assump:conc}, we have $\sigma \lesssim 1$. Theorem \ref{thm:statperson} implies
that when $\lambda \lesssim \sigma \sqrt{\frac{d \log n + \log m}{n}}$, the bound
$\norm{\widehat{\btheta}_j - \btheta^*_j}_2 \lesssim \sigma \sqrt{\frac{d \log n + \log m}{n}}$
simultaneously holds for all $j \in [m]$ with high probability. In that case, our approach \eqref{pro:vanilla} achieves the same
estimation error rate of STL up to logarithmic factors. 

In the definition and theorem to follow, we consider all the tasks and study the
adaptivity and robustness of (\ref{pro:vanilla}).  

\begin{assumption} [Task Relatedness] \label{assump:relate}
	There exist $\varepsilon,\, \delta \ge 0$ and subset $S \subseteq [m]$ such that
$$
\left|S^c\right|  \le \varepsilon m \;\;\;\;\text{and}\;\;\;\; \min_{\btheta \in \RR^d} \max_{j \in S}  \norm{\btheta^*_j - \btheta}_2  \le \delta.
$$
\end{assumption}
It is worth pointing out that any $m$ tasks are $(0, \max_{j \in
[m]} \norm{\btheta_j^*}_2 )$-related. Smaller $\varepsilon$ and
$\delta$ imply stronger similarity among the tasks. When all but a small proportion of
$\{\btheta_j^*\}_{j=1}^m$ are close to each other, the following theorem shows that a single choice
of $\lambda$ can automatically enforce an appropriate degree of relatedness among the learned
models, while tolerating a reasonable fraction of exceptional tasks that are dissimilar to others. 

\begin{theorem}[Adaptivity] \label{thm:statvanilla}
Let Assumptions \ref{assump:conc}, \ref{assump:reg}, \ref{assump:smooth} and \ref{assump:relate} hold. % In addition, let Assumption \ref{assump:smooth} or	\ref{assump:nonsmooth} hold.
	 When $ n^{d(m - 1)}m^{m - d} \geq 1$, there exist positive
	constants $\{C_i\}_{i=0}^4$ such that, if
$$
C_1 \sigma\sqrt{\frac{d \log n + \log m}{n}} < \lambda < C_2 \sigma
$$
and $0 \le \varepsilon < C_3$, the following bounds hold with probability at least $1 - C_4(m^{-d}+1)n^{-d}$:
\begin{align*}
&\max_{j \in S}
\norm{\widehat{\btheta}_j - \btheta_j^*}_2  \le C_0 \bigg(\sigma\sqrt{\frac{d \log (mn)}{mn}} + \min
\left\{\lambda, \delta\right\} + \varepsilon \lambda\bigg) , \\
&\max_{j \in S^c} \norm{\widehat{\btheta}_j - \btheta_j^*}_2  \le C_0 \lambda.
\end{align*}
Moreover, there exists a constant $C_5$ such that under the conditions $\varepsilon = 0$ and $C_5
\delta \sqrt{n} < \sigma\sqrt{d \log n + \log m}$, we have $\widehat{\btheta}_1 = \cdot \cdot \cdot = \widehat{\btheta}_m
= \mathop{\mathrm{argmin}}_{\btheta \in \RR^d} \{\sum_{j=1}^{m} f_j(\btheta)\}$.
\end{theorem}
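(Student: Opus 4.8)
The plan is to obtain the statistical bounds from the deterministic analysis of \eqref{pro:vanilla} in Appendix~\ref{sec:det}, once the relevant ``good events'' on the subgradients of $\{f_j\}$ have been shown to hold with the stated probability. Concretely, I would first invoke Assumptions~\ref{assump:conc} and~\ref{assump:smooth} to establish, with probability at least $1-C_4(m^{-d}+1)n^{-d}$, a uniform control of the empirical subgradient $\frac1n\sum_{i=1}^{n}\bl(\btheta,\bxi_{ji})\in\partial f_j(\btheta)$ of the form $\sup_{\btheta\in B(\btheta^*,M)}\norm{\frac1n\sum_i \bl(\btheta,\bxi_{ji}) - \nabla F_j(\btheta)}_2\lesssim \sigma\sqrt{(d\log n+\log m)/n}$ simultaneously over $j\in[m]$, together with an averaged version at the faster rate $\sigma\sqrt{d\log(mn)/(mn)}$ needed for the learned center; as indicated in the discussion after Theorem~\ref{thm:statperson}, the single-subgradient bound upgrades via the variability bounds of Assumption~\ref{assump:smooth} to control of the full subdifferential set $\partial f_j$. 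By Assumption~\ref{assump:reg}, $F_j$ is strongly convex with Lipschitz Hessian on $B(\btheta^*,M)$, so on this event each $f_j$ behaves like a strongly convex function near its minimizer, which is precisely the input the deterministic lemmas require.

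The bound for outlier tasks $j\in S^c$ is immediate: the two displays of Theorem~\ref{thm:statperson} give $\norm{\widehat\btheta_j-\btheta_j^*}_2\le\norm{\widehat\btheta_j-\widetilde\btheta_j}_2+\norm{\widetilde\btheta_j-\btheta_j^*}_2\lesssim\lambda+\sigma\sqrt{(d\log n+\log m)/n}\lesssim\lambda$, using the lower bound $\lambda>C_1\sigma\sqrt{(d\log n+\log m)/n}$. The bound for $j\in S$ is the core of the argument and proceeds in two steps. Let $\bar\btheta$ attain $\min_\btheta\max_{j\in S}\norm{\btheta_j^*-\btheta}_2\le\delta$. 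First I would control $\widehat\bbeta$: comparing $\bF_{\bw,\blambda}(\widehat\bTheta,\widehat\bbeta)$ with its value at $\bbeta=\bar\btheta$ (and the corresponding task-optimal $\btheta_j$'s) and exploiting the first-order optimality condition in $\bbeta$, which aggregates all $m$ score functions, yields $\norm{\widehat\bbeta-\bar\btheta}_2\lesssim\sigma\sqrt{d\log(mn)/(mn)}+\min\{\lambda,\delta\}+\varepsilon\lambda$ — the $1/\sqrt{mn}$ term from the averaged concentration, $\min\{\lambda,\delta\}$ from the competition between the penalty strength and the intrinsic heterogeneity of $S$, and $\varepsilon\lambda$ from the $\le\varepsilon m$ outlier tasks, each able to move the center by at most $O(\lambda)$. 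Second, I would feed this into the per-task characterization \eqref{eq:stlperturb}, $\widehat\btheta_j\in\argmin_\btheta\{f_j(\btheta)+\lambda\norm{\btheta-\widehat\bbeta}_2\}$, and use local strong convexity of $F_j$ to turn the (small) first-order perturbation into an estimation-error bound $\norm{\widehat\btheta_j-\btheta_j^*}_2\lesssim\norm{\widehat\bbeta-\btheta_j^*}_2+\min\{\lambda,\delta\}$, which gives the claimed rate for all $j\in S$; the extra $\min\{\lambda,\delta\}$ again requires treating the weak-penalty and small-heterogeneity regimes separately.

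For the ``moreover'' claim, let $\bar\btheta_{\mathrm{pool}}\in\argmin_\btheta\sum_{j=1}^m f_j(\btheta)$. It suffices to produce a subgradient certificate that $(\btheta_1,\dots,\btheta_m,\bbeta)=(\bar\btheta_{\mathrm{pool}},\dots,\bar\btheta_{\mathrm{pool}},\bar\btheta_{\mathrm{pool}})$ solves \eqref{pro:vanilla}: since each $\btheta_j-\bbeta=\zero$ and $\partial\norm{\cdot}_2(\zero)=B(\zero,1)$, the stationarity conditions (with $w_j=1,\lambda_j=\lambda$) reduce to the existence of $\bg_j\in\partial f_j(\bar\btheta_{\mathrm{pool}})$ with $\sum_{j=1}^m\bg_j=0$ and $\norm{\bg_j}_2\le\lambda$ for every $j$. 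Optimality of $\bar\btheta_{\mathrm{pool}}$ supplies a selection with $\sum_j\bg_j=0$. For the norm bound, on the good event the diameter of $\partial f_j(\bar\btheta_{\mathrm{pool}})$ is $O(\sigma\sqrt{(d\log n+\log m)/n})$ and $\norm{\nabla F_j(\bar\btheta_{\mathrm{pool}})}_2=\norm{\nabla F_j(\bar\btheta_{\mathrm{pool}})-\nabla F_j(\btheta_j^*)}_2\lesssim\norm{\bar\btheta_{\mathrm{pool}}-\btheta_j^*}_2\lesssim\sigma\sqrt{(d\log n+\log m)/n}+\delta$, using that (with $\varepsilon=0$) $\bar\btheta_{\mathrm{pool}}$ lies within $O(\sigma\sqrt{d\log(mn)/(mn)}+\delta)$ of $\bar\btheta$ and $\norm{\btheta_j^*-\bar\btheta}_2\le\delta$; under $C_5\delta\sqrt n<\sigma\sqrt{d\log n+\log m}$ this is $\lesssim\sigma\sqrt{(d\log n+\log m)/n}<\lambda$ once $C_1$ is large, so \emph{every} element of $\partial f_j(\bar\btheta_{\mathrm{pool}})$ has norm below $\lambda$. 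Local strong convexity of $\sum_j f_j$ on the good event makes its minimizer unique, so the certificate yields the displayed identity for any solution of \eqref{pro:vanilla}.

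The main obstacle I anticipate is twofold. First, the localization must be made rigorous: Assumption~\ref{assump:reg} only governs $B(\btheta^*,M)$, so one must first establish crude bounds placing $\widehat\bbeta$, all $\widehat\btheta_j$, and $\bar\btheta_{\mathrm{pool}}$ inside this ball — a bootstrapping step relying on $\lambda<C_2\sigma$ and $\varepsilon<C_3$ — before the sharp analysis applies. Second, extracting the \emph{sharp} $\sigma\sqrt{d\log(mn)/(mn)}$ rate for the center, rather than the crude per-task $\sigma\sqrt{(d\log n+\log m)/n}$, requires exploiting the coupling of the $m$ optimality conditions through $\bbeta$ and an averaged (not union-bounded) concentration inequality, plus careful bookkeeping of how the at most $\varepsilon m$ contaminated tasks enter; this is where the deterministic lemmas of Appendix~\ref{sec:det} and the uniform-over-subgradients machinery do the essential work.
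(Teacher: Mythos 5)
Your overall route is the paper's route: a per-task union-bounded subgradient concentration bound at rate $\sigma\sqrt{(d\log n+\log m)/n}$ (Corollary \ref{cor:zetajmax}), an averaged bound for the aggregated score at rate $\sigma\sqrt{d\log(mn)/(mn)}$ (Corollary \ref{cor:zetasorder}), the personalization theorem for $S^c$, and a deterministic adaptivity argument with the same regime split between small and large $\delta$ to produce $\min\{\lambda,\delta\}$; the paper packages the deterministic step as Theorem \ref{thm:detvanilla} via Lemmas \ref{lem:vanilladp}--\ref{lem:vanillaprep}, whose essential mechanism is that for $\lambda$ above the local subgradient norms the infimal convolution $f_j\square(\lambda\|\cdot\|_2)$ coincides with $f_j$ on a ball (Lemma \ref{lem:tech3}), forcing the exact collapse $\widehat{\btheta}_j=\widehat{\bbeta}$ for $j\in S$, rather than your looser two-step ``bound $\widehat{\bbeta}$, then perturb each task'' bookkeeping. (Two small points: the upgrade from one subgradient selection to the full subdifferential is done by Lemma \ref{lem:tech5}, a convexity/continuity argument, not by Assumption \ref{assump:smooth}; and the hypothesis $n^{d(m-1)}m^{m-d}\ge 1$ is used precisely to guarantee $\zeta_S\le m\max_{j\in S}\zeta_j$ so the task-relatedness definition applies, which your sketch never touches.)

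The genuine gap is in the ``moreover'' claim. Your certificate argument only shows that the collapsed point $(\bar\btheta_{\mathrm{pool}},\dots,\bar\btheta_{\mathrm{pool}},\bar\btheta_{\mathrm{pool}})$ is \emph{a} minimizer of $\bF_{\bw,\blambda}$; the theorem asserts that \emph{the} solution $(\widehat{\bTheta},\widehat{\bbeta})$ chosen in \eqref{pro:vanilla} is collapsed and minimizes $\sum_j f_j$, so you must rule out non-collapsed minimizers. The step you invoke for this --- ``local strong convexity of $\sum_j f_j$ on the good event makes its minimizer unique'' --- is false in the present nonsmooth setting: $\sum_j f_j$ is only uniformly close in subgradient to the strongly convex $\sum_j F_j$, and is itself typically piecewise linear (pooled check loss), so neither its minimizer nor that of the augmented objective need be unique, and uniqueness of the pooled minimizer would in any case not by itself preclude non-collapsed solutions of the augmented program. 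The paper closes this by working with the profiled objective $f_0(\bbeta)=\sum_j f_j\square(\lambda\|\cdot\|_2)(\bbeta)$: Lemma \ref{lem:tech1} localizes \emph{all} of its minimizers to a ball on which $f_0=\sum_j f_j$ (Lemma \ref{lem:tech3}), giving the set identity $\argmin f_0=\argmin\sum_j f_j$, and then Lemma \ref{lem:tech3} applied to the per-task characterization \eqref{eq:stlperturb} forces $\widehat{\btheta}_j=\widehat{\bbeta}$ for every $j$. Your argument needs this (or an equivalent set-level statement) in place of the uniqueness claim; the rest of your plan, including the localization bootstrap you flag, matches the paper.
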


It is worth pointing out that the same error bound on $\max_{j \in S} \norm{\widehat{\btheta}_j - \btheta_j^*}_2$ holds even if the data of the tasks in $S^c$ have been arbitrarily contaminated.  

Theorem \ref{thm:statvanilla} simultaneously controls the estimation errors for all individual tasks
in $S$ and suggests choosing $\lambda = C \sigma \sqrt{\frac{d \log n + \log m}{n}}$ for some
constant $C$. In practice, this $C$ can be selected by cross-validation. This allows
us to choose a single $\lambda$ to achieve minimax optimality (up to a logarithmic factor), matching the minimax lower bound in \cite{DW22}. Indeed, that lower bound is proved for a class of smooth losses that are included by our general function classes.

For any $\varepsilon$ and $\delta$, a simple bound $\norm{\widehat{\btheta}_j - \btheta^*_j}_2
\lesssim \lambda$ always holds for all $j \in [m]$, which echoes Theorem \ref{thm:statperson}. In
comparison, Theorem \ref{thm:statvanilla} implies more refined results. When $\delta = \varepsilon =
0$, all target parameters are the same and $S = [m]$. Data pooling becomes a natural approach, whose
error rate is $O(\sigma \sqrt{d / mn})$. Our approach \eqref{pro:vanilla} has the same rate (up to a logarithmic factor). When
$\varepsilon = 0$ ad $\delta$ grows from 0 to $+\infty$, the error bounds smoothly transit from
those for data pooling to those for STL. See Section \ref{sec:exp} for illustration.  

The second term $\min \{\lambda, \delta\}$ is non-decreasing in the discrepancy $\delta$ among
$\{\btheta_j^*\}_{j \in S}$. It increases first and then flattens out, never exceeding the error
rate of STL. Combined with the first term, when $\varepsilon = 0$, our approach \eqref{pro:vanilla} achieves the smaller error
rate between data pooling and STL. When $\varepsilon > 0$, the third term $\varepsilon\lambda$ is
the price we pay for not knowing the index set $S^c$ for outlier tasks. 

In summary, the theoretical investigation yields a principled approach for choosing a single
regulatization parameter $\lambda$ for all tasks. The resulting estimators automatically adapt to
unknown task relatedness and are robust against a certain fraction of outlier tasks.

\section{NUMERICAL EXPERIMENTS}\label{sec:exp}

We conduct experiments on synthetic and real data to test our approach in various scenarios. Below we present descriptions and key findings. 
The curves and error bands show the means and their $95\%$ confidence intervals computed from 100 independent runs, respectively.
%Please refer to the supplementary material for more details.

\subsection{Synthetic Data}

We first generate synthetic data for multi-task quantile regression. The number of tasks is $m = 50$. For every $j \in [m]$, the dataset $\cD_j$ consists of $n = 200$ samples $\{ (\bx_{ji}, y_{ji}) \}_{i=1}^n$. The covariate vectors $\{ \bx_{ji} \}_{(i, j) \in [n] \times [m]}$ are i.i.d.~from the 20-dimensional standard normal distribution, given which we sample each response $y_{ji} = \bx_{ji}^{\top} \bgamma^{\star}_j + \varepsilon_{ji}$ from a linear model with noise term $\varepsilon_{ji} \sim N(0,0.25)$ being independent of the covariates. The coefficient vectors $\{ \bgamma^{\star}_{j} \}_{j=1}^m \subseteq \RR^{20}$ are generated according to the prescribed level of task relatedness defined in Assumption \ref{assump:relate}. For every $\varepsilon \in \{ 0, 0.1 \}$ and $\delta \in \{ 0, 0.2, 0.4, \cdots, 2 \}$ we use the procedure below to get $m$ tasks that are $(\varepsilon,\delta)$-related and share the same signal strength.

\begin{itemize}
%\vspace{-1em}	
%\setlength\itemsep{-0em}
\item Select $\varepsilon m$ tasks uniformly at random and let $S$ be the index set of \emph{unselected} tasks;
\item Draw $m$ i.i.d.~random vectors $\{ \bm{\eta}_j \}_{j=1}^m$ uniformly from the unit sphere, and set $\bgamma^{\star}_j = 2\bm{\eta}_j$ for all $j \notin S$;
\item For each $j \in S$, set $\bgamma^{\star}_j = (2\cos \alpha) \be_1 + (2\sin \alpha ) \bm{\eta}_j$, where $\be_1 = (1,0,\cdots,0)$ and $\alpha = 2 \arcsin (\delta / 4)$.
\end{itemize}
We have $\| \bgamma^{\star}_j - 2 \be_1 \|_2 = \delta$, $\forall j \in S$ and $\| \bgamma^{\star}_j \|_2 = 2$, $\forall j \in [m]$.

Our target quantile level is $\tau = 0.9$. Given the covariates $\bx_{ji}$, the $\tau$-th quantile of the response $y_{ji}$ is $\bx_{ji}^{\top} \bgamma^{\star}_j + 0.5 \Phi^{-1}(\tau) $, where $\Phi$ is the cumulative distribution function of $N(0,1)$. In quantile regression, we add an all-one covariate and enlarge the input dimension to $d = 21$. For the $j$-th task, the true coefficients in the quantile function are $ \btheta^{\star}_j = ( 0.5 \Phi^{-1}(\tau), \bgamma^{\star \top}_j  )^{\top} \in \RR^{21}$. For any algorithm that produces estimates $\{ \widehat{\btheta}_j \}_{j=1}^{m} $ of $\{  \btheta^{\star}_j \}_{j=1}^m$, we compute the maximum estimation error $\max_{j \in [m]} \| \widehat{\btheta}_j  - \btheta^{\star}_j \|_2$ and its restricted version $\max_{j \in S} \| \widehat{\btheta}_j  - \btheta^{\star}_j \|_2$ on the subset $S$ containing similar tasks (if $\varepsilon > 0$).

Following our theories, we set the regularization parameter $\lambda$ for our approach to be $C \sqrt{d / n}$ and select $C$ from $\{ 0.1, 0.2, \cdots, 1 \}$ by 5-fold cross-validation. We compare the approach with single-task learning (STL) and data pooling (DP). Figures \ref{fig-epsilon_0} and \ref{fig-epsilon_2} demonstrate how the estimation errors grow with the heterogeneity parameter $\delta$. 

\begin{figure}[!h]
%	\vspace{.3in}
	\centering
	\includegraphics[width=0.4\linewidth]{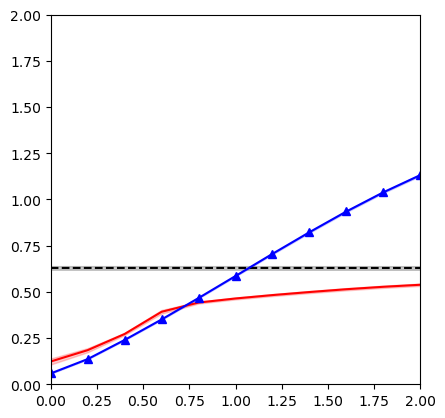}   
%	\vspace{.3in}
	\caption{Impact of task relatedness when $\varepsilon = 0$. $x$-axis: $\delta$. $y$-axis: $\max_{j \in [m]} \| \widehat{\btheta}_j - \btheta^{\star}_j \|_2 $. Red solid line: new approach. Blue triangles: DP. Black dashed line: STL.}\label{fig-epsilon_0}
\end{figure}

\begin{figure}[!h]
	\centering
	\subfigure{
		\includegraphics[width=0.4\linewidth]{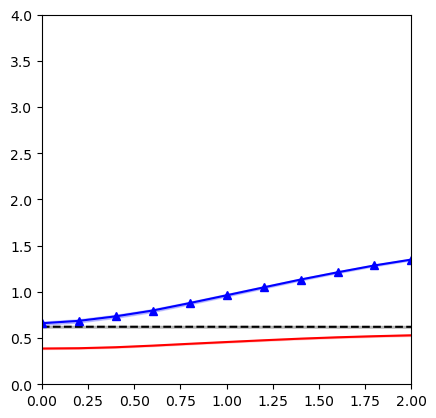}   
	}
	\subfigure{
		\includegraphics[width=0.4\linewidth]{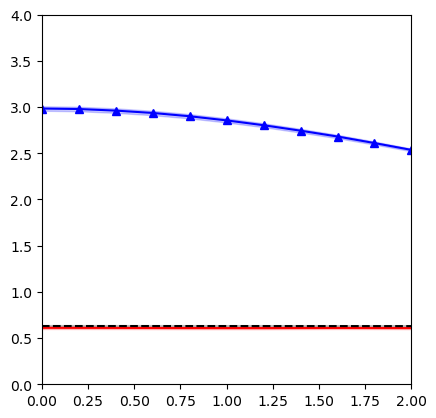}   
	}
	\caption{Impact of task relatedness when $\varepsilon = 0.2$. $x$-axis: $\delta$. $y$-axis: $\max_{j \in S} \| \widehat{\btheta}_j - \btheta^{\star}_j \|_2 $ (left) or $\max_{j \in [m]} \| \widehat{\btheta}_j - \btheta^{\star}_j \|_2 $ (right). Red solid lines: new approach. Blue triangles: DP. Black dashed lines: STL.}\label{fig-epsilon_2}
\end{figure}

The simulations confirm the theoretical guarantees Theorem \ref{thm:statvanilla} for our proposed method. When $\varepsilon = 0$ and $\delta$ is small, it behaves similarly as DP. As $\delta$ increases, the new method tackles the heterogeneity and never underperform STL, while DP's estimation error grows rapidly. When $\varepsilon=0.2$, the new method works well on the set $S$ of related tasks, and DP makes huge errors due to the $\varepsilon$-fraction of exceptional tasks. The two panels of Figure \ref{fig-epsilon_2} imply that the new method behaves similarly as STL on $S^c$. This agrees with Theorem \ref{thm:statperson} that our estimates for individual tasks are never too far from the corresponding empirical loss minimizers. Data pooling performs poorly on $S^c$.

\subsection{Real Data}

%The data can be downloaded from their Github page.\footnote{https://github.com/opimwue/A-structured-evaluation-of-data-driven-newsvendor-approaches/tree/main/Data/final}

We apply the proposed method to a data-driven newsvendor problem. 
We use a real-world dataset made publicly available by \cite{BPS22} that contains sales data at 35 different stores in a local bakery chain over a period of 1215 days, from January 2016 to April 2019.
According to the authors, every evening each store orders products to be delivered the next morning from a central factory. Unsold goods will be disposed of at the end of the day. The authors use the sales data as the demand because all products are everyday items with typically high stock levels, which makes censored demand unlikely. The dataset also contains information about the weather, promotions, holidays, calendric (e.g. year, month, weekday) and lag features (e.g. mean demand over the past 7 days). 
To utilize the features we generalize the classical newsvendor problem in Example \ref{subsec:newsvendor} into a covariate-assisted data-driven newsvendor problem. %[Which do you like, covariate-assisted, or feature-based, or others]

In particular, recall that $\{ D_i \}_{i=1}^n \subseteq \RR$ are the realized daily market demands and $\{ \bx_i \}_{i=1}^n \subseteq \RR^d$ are the covariates for the corresponding days. Suppose we want to decide the ordering quantity $q_i$ using a linear combination of the $d$-dimension features, as $q_i=\bx_i^\top \btheta$ with a coefficient parameter $ \btheta \in\mathbb{R}^d$ to be determined. 
Assume that any leftover at the end of the day leads to a holding cost of $\$ h$ per unit. Meanwhile, any demand that cannot be satisfied results in a backorder cost of $\$ b$ per unit. The cost on the $i$-th day is $h (D_i - q_i)^+ + b (D_i  - q_i)^-$ dollars, which is proportional to the check loss $\rho_{\tau} ( D_i - q_i )$ with $\tau = b / (b + h)$. 
We can estimate the best $\btheta$ with the minimum expected cost through minimizing the following nonsmooth objective function,
$$
f( \btheta) = \frac{1}{n} \sum_{i=1}^{n} \left[  b (D_i - x_i^\top  \btheta)^+ + h(D_i - x_i^\top  \btheta)^- \right].
$$
%[Maybe we state here it can also be view as a quantile regression problem with $\tau=b/(b+h)$. Not sure whether to include this or not.]
It can also be viewed as a quantile regression problem with $\tau=b/(b+h)$. 

We study the first product in the dataset, which is sold at $m = 32$ stores. Each store needs a model that decides its order quantity every day to minimize the cost. Throughout our experiments, we fix $\tau = 0.9$.
For every $j \in [m]$, the $j$-th %{\color{red} maybe use $j$-th or $j$'th} 
store has historical data $\{ (\bx_{ji} , D_{ji}) \}_{i=1}^n$, where $\bx_{ji}$ consists of real-valued covariates available before the $i$-th day, and $D_{ji}$ is the demand on that day. We use 19 covariates and add an all-one covariate. Therefore, $\bx_{ji}$ has dimension $d = 20$. We focus on linear decision rules of the form $\bx_{ji} \mapsto \bx_{ji}^{\top} \btheta_j$, where $\btheta_j \in \RR^d$. The problem is formulated as multi-task quantile regression. The loss function of task $j$ is $f_j (\btheta) = \frac{1}{n} \sum_{i=1}^{n} \rho_{\tau} ( D_{ji} - \bx_{ji}^{\top} \btheta ) $. Same as our experiments on synthetic data, here we also compare the new method with single-task learning (STL) and data pooling (DP). 

Our testing set consists of all the data in 2019 (four months). For each $k \in [12]$, we implement all methods on the data over the $k$ months before 2019. The penalty parameter for our new method is $\lambda = C \sqrt{d / n}$ with $C \in \{ 0.1, 0.2, \cdots, 1 \}$. We run the method on the first $80\%$ of the training data for each $C$ and evaluate them on the rest $20\%$. Then, we choose the one with the lowest validation error, refit the models on the whole training set. We measure the performance of three methods by their average testing losses over all the $m$ tasks, which are proportional to the average daily costs of those decision rules.

Figure \ref{fig-newsvendor} reveals how the testing losses decrease as more training data become available. In particular, the new method is always the best. When there are only one or two month's data for training, both the new method and DP outperform STL. Then the curve of DP flattens out, as its model misspecification error dominates the statistical error. The new method and STL benefit from increased sample size. The former is significantly better by a large margin when there are at most 8 months' data for training. The two approaches have little difference when the training set is sufficiently large. Therefore, our approach is always a good choice, especially when the data are scarce.

\begin{figure}[!h]
	%	\vspace{.3in}
	\centering
	\includegraphics[width=0.4\linewidth]{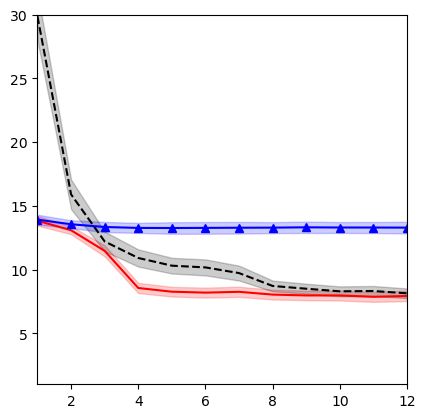}   
	%	\vspace{.3in}
	\caption{Impact of sample sizes. $x$-axis: number of months for training. $y$-axis: average testing loss. Red solid line: new approach. Blue triangles: DP. Black dashed line: STL.}\label{fig-newsvendor}
\end{figure}

\section{DISCUSSIONS}

We have studied a simple approach for multi-task optimization problems with possibly nonsmooth loss, theoretically proved its adaptivity to the unknown task relatedness, and demonstrated its power on real data. There are several directions we plan to pursue in future research. We will develop efficient algorithms for solving the multi-task non-smooth optimization problems we studied. The algorithms should fit for distributed computing architectures and preserve the privacy of individual dataset owners. We will also develop statistical tools for uncertainty quantification in the multi-task setting. %online setting?

%\section*{Acknowledgement}

\newpage 
\appendix

\section{DETERMINISTIC RESULTS} \label{sec:det}
In this section, we present deterministic results on (\ref{pro:vanilla}); see Appendix \ref{sec:proofdet} for
the proofs.

\begin{definition}[Regularity] \label{def:reg}
	Let $\btheta^* \in \RR^d$, $0 < M \le \infty$, $0 < \rho \le L < \infty$, $F:~\RR^d \to \RR$, and $0 \le \zeta < \infty$. A convex function $f:~\RR^d \to \RR$ is said to be $(\btheta^*, M, \rho, L, F, \zeta)$-regular if 
	\begin{itemize}
		%\item $f$ is convex;
		\item $F$ is convex and twice differentiable;
		\item $\rho \bI \preceq \nabla^2 F(\btheta) \preceq L \bI$ holds for all $\btheta \in
		B(\btheta^*, M)$;
		\item $\sup_{\btheta \in B(\btheta^*, M)} \norm{\bbf -
			\nabla F(\btheta)}_2 \le \zeta \le \rho M /2$ where $\bbf: \RR^d \rightarrow \RR^d$ is such that for
		every $\btheta \in \RR^d$, $\bbf(\btheta) \in \partial f(\btheta)$;
		\item $\norm{\nabla F(\btheta^*)}_2 \le \rho M / 2 - \zeta$.
	\end{itemize}
\end{definition}

\begin{theorem}[Personalization] \label{thm:detperson}
	If $f_j$ is $(\btheta^*_j, M, \rho, L, F_j, \zeta_j)$-regular and $0 \le \lambda < \rho M / 2 -
	\zeta_j$,
	then
	\begin{align*}
	&\norm{\widehat{\btheta}_j - \widetilde{\btheta}_j}_2 \le
	\frac{\lambda}{\rho} + \frac{\zeta_j}{\rho},\\
	&\norm{\widetilde{\btheta}_j - \btheta^*_j}_2 \le \frac{\norm{\nabla F(\btheta^*_j)}_2}{\rho} +
	\frac{\zeta_j}{\rho}.
	\end{align*}
\end{theorem}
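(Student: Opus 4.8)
The plan is to combine a \emph{localization} step --- which confines both $\widetilde\btheta_j$ and $\widehat\btheta_j$ to the ball $B(\btheta^*_j,M)$ on which the regularity data are valid --- with a \emph{comparison} step that exploits the $\rho$-strong convexity of $F_j$ there. Throughout, write $g_j := f_j + \lambda\norm{\cdot-\widehat\bbeta}_2$, so that $\widetilde\btheta_j$ minimizes $f_j$ and, by \eqref{eq:stlperturb}, $\widehat\btheta_j$ minimizes $g_j$. For any $\btheta\in B(\btheta^*_j,M)$, I would decompose $\langle\nabla F_j(\btheta),\btheta-\btheta^*_j\rangle=\langle\nabla F_j(\btheta)-\nabla F_j(\btheta^*_j),\btheta-\btheta^*_j\rangle+\langle\nabla F_j(\btheta^*_j),\btheta-\btheta^*_j\rangle$ and bound the first term below by $\rho\norm{\btheta-\btheta^*_j}_2^2$, integrating $\nabla^2 F_j\succeq\rho\bI$ along the segment (which stays in the convex ball). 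Combined with $\norm{\bbf_j(\btheta)-\nabla F_j(\btheta)}_2\le\zeta_j$ and the fact that every subgradient of $\norm{\cdot-\widehat\bbeta}_2$ has Euclidean norm at most $1$, this shows: once $\norm{\btheta-\btheta^*_j}_2$ exceeds $r_0:=(\norm{\nabla F_j(\btheta^*_j)}_2+\zeta_j)/\rho$ --- resp.\ $r_1:=r_0+\lambda/\rho$, which is still below $M$ because $\norm{\nabla F_j(\btheta^*_j)}_2\le\rho M/2-\zeta_j$ and $\lambda<\rho M/2-\zeta_j$ --- some subgradient of $f_j$ (resp.\ of $g_j$) at $\btheta$ has strictly positive inner product with $\btheta-\btheta^*_j$.

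From this I would obtain the localization by a short convexity argument: if $\btheta'$ were a global minimizer of $f_j$ (resp.\ $g_j$) with $\norm{\btheta'-\btheta^*_j}_2>r_0$ (resp.\ $>r_1$), then following the segment from $\btheta^*_j$ to $\btheta'$ until it reaches a point $\btheta_s$ with $r_0<\norm{\btheta_s-\btheta^*_j}_2\le M$ (resp.\ with $r_1$ in place of $r_0$), the outward-subgradient property plus the subgradient inequality force $f_j(\btheta^*_j)<f_j(\btheta_s)$, whereas convexity together with the optimality of $\btheta'$ force $f_j(\btheta_s)\le f_j(\btheta^*_j)$ --- a contradiction. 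Hence every global minimizer of $f_j$ lies in $B(\btheta^*_j,r_0)$ and every global minimizer of $g_j$ lies in $B(\btheta^*_j,r_1)$; the former statement is precisely the second displayed bound of the theorem, and both $\widetilde\btheta_j$ and $\widehat\btheta_j$ are now interior points of $B(\btheta^*_j,M)$.

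For the comparison step I would first record a directional sublemma: for any interior point $\btheta$, any $\bg\in\partial f_j(\btheta)$, and any unit vector $\bv$, applying monotonicity of $\partial f_j$ to $\btheta$ and $\btheta\pm\epsilon\bv$ and letting $\epsilon\downarrow0$ --- using $\norm{\bbf_j(\cdot)-\nabla F_j(\cdot)}_2\le\zeta_j$ and continuity of $\nabla F_j$ --- gives $|\langle\bg-\nabla F_j(\btheta),\bv\rangle|\le\zeta_j$. With $\bg=0\in\partial f_j(\widetilde\btheta_j)$ this yields $\norm{\nabla F_j(\widetilde\btheta_j)}_2\le\zeta_j$; with $\bg=-\lambda\bu\in\partial f_j(\widehat\btheta_j)$ coming from the first-order optimality of $\widehat\btheta_j$ for $g_j$ (here $\norm{\bu}_2\le1$) and $\bv=(\widehat\btheta_j-\widetilde\btheta_j)/\norm{\widehat\btheta_j-\widetilde\btheta_j}_2$, it yields $\langle\nabla F_j(\widehat\btheta_j),\bv\rangle\le\lambda+\zeta_j$ and $\langle\nabla F_j(\widetilde\btheta_j),\bv\rangle\ge-\zeta_j$. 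Finally, $\rho$-strong convexity of $F_j$ on $B(\btheta^*_j,M)$ gives $\langle\nabla F_j(\widehat\btheta_j)-\nabla F_j(\widetilde\btheta_j),\widehat\btheta_j-\widetilde\btheta_j\rangle\ge\rho\norm{\widehat\btheta_j-\widetilde\btheta_j}_2^2$; dividing through by $\norm{\widehat\btheta_j-\widetilde\btheta_j}_2$ and inserting the three estimates above yields $\rho\norm{\widehat\btheta_j-\widetilde\btheta_j}_2\le\lambda+2\zeta_j$, i.e.\ a bound of the advertised form $\lambda/\rho+O(\zeta_j/\rho)$.

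The step I expect to be the main obstacle is the interaction of non-smoothness with localization: the regularity hypothesis controls only \emph{one} subgradient selection $\bbf_j$ and only on $B(\btheta^*_j,M)$, so none of the strong-convexity arguments can even be started until the two minimizers have been independently shown to stay inside that ball --- which is exactly what the a priori bound on $\norm{\nabla F_j(\btheta^*_j)}_2$ in Definition \ref{def:reg} and the hypothesis $\lambda<\rho M/2-\zeta_j$ are for. The remaining work --- upgrading ``one subgradient is close to $\nabla F_j$'' to ``every subgradient is directionally close to $\nabla F_j$'' --- is carried entirely by monotonicity of the subdifferential and uses no quantitative smoothness of $f_j$ itself.
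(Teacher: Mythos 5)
Your overall strategy is essentially the paper's argument reassembled: your ``directional sublemma'' (upgrading closeness of the single selection $\bbf_j$ to closeness of every subgradient via monotonicity of $\partial f_j$) is exactly Lemma \ref{lem:tech5}, and your localization and comparison steps play the roles of Lemmas \ref{lem:tech1} and \ref{lem:tech2}. However, one step fails as written. In the localization you claim that the outward-subgradient property at $\btheta_s$ together with the subgradient inequality forces $f_j(\btheta^*_j) < f_j(\btheta_s)$. The subgradient inequality at $\btheta_s$ with an outward subgradient $\bg$ only gives $f_j(\btheta^*_j) \ge f_j(\btheta_s) - \langle \bg, \btheta_s - \btheta^*_j\rangle$, which is vacuous here; worse, the inequality you assert is false in general at radii just above $r_0$: take $d=1$, $f_j = F_j = \tfrac{\rho}{2}(\theta - a)^2$, $\btheta^*_j = 0$, $\zeta_j = 0$, $M = 2a$, so that $r_0 = a$ and $f_j(\theta) < f_j(0)$ for all $\theta$ slightly beyond radius $r_0$. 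A function-value comparison against $f_j(\btheta^*_j)$ can only exclude minimizers beyond radius $2r_0$ (this is precisely the $\rho/2$ loss in the first part of Lemma \ref{lem:tech1}), which would double the second bound of the theorem---the one your localization is supposed to deliver exactly. The fix is short and stays inside your framework: with $\btheta_s$ strictly between $\btheta^*_j$ and the putative minimizer $\btheta'$, the outward subgradient gives $f_j(\btheta') \ge f_j(\btheta_s) + \langle \bg, \btheta' - \btheta_s\rangle > f_j(\btheta_s)$, since $\btheta' - \btheta_s$ is a positive multiple of $\btheta_s - \btheta^*_j$, contradicting $f_j(\btheta_s) \ge f_j(\btheta')$. (The paper instead reaches the sharp radius by a two-stage argument in Lemma \ref{lem:tech1}: crude localization, then $\norm{\nabla F_j(\btheta^*_j)}_2 \ge \rho \norm{\btheta^*_j - \widetilde{\btheta}_j}_2 - \zeta_j$.)

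Second, your comparison step yields $\norm{\widehat{\btheta}_j - \widetilde{\btheta}_j}_2 \le (\lambda + 2\zeta_j)/\rho$, because you pay $\zeta_j$ twice (once at $\widehat{\btheta}_j$ and once through $\norm{\nabla F_j(\widetilde{\btheta}_j)}_2 \le \zeta_j$), whereas the theorem states $(\lambda + \zeta_j)/\rho$; as written you therefore prove a strictly weaker inequality than the one claimed. This is harmless for all downstream statistical results, where such constants are absorbed, and in fairness the paper's own proof reaches the stated constant by applying Lemma \ref{lem:tech2} centered at $\widetilde{\btheta}_j$, tacitly treating $\widetilde{\btheta}_j$ as a stationary point of $F_j$; a careful application incurs the same extra $\zeta_j/\rho$, so your accounting is arguably the more honest one---but you should flag the discrepancy with the stated bound rather than bury it in an $O(\zeta_j/\rho)$. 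With the localization step repaired as above, the rest of your proposal (interiority of both minimizers in $B(\btheta^*_j, M)$, the monotonicity bridge, and the strong-monotonicity comparison of first-order conditions) is correct.
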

In most cases, $\norm{\nabla F(\btheta^*_j)}_2 = 0$ since the $F_j$ that
we relate $f_j$ with would be the population loss function and $\btheta^*_j$ is the population
loss minimizer. As such, the optimality gap of STL is due to $\zeta_j$, a uniform first-order
upper-bound.

\begin{definition}[Task relatedness] \label{def:relate}
	Let $\varepsilon, \;\delta, \; \zeta_S \ge 0$, $\{\btheta^*_j\}_{j=1}^{m} \subseteq \RR^d$, $0 < M \le \infty$,
	$0 < \rho \le L < +\infty$, and $S \subseteq [m]$. $\{f_j\}_{j=1}^m$ are said to be $(\varepsilon, \delta,
	\zeta_S)$-related with regularity parameters $(\{\btheta^*_j\}_{j=1}^{m}, M, \rho, L, S, \{F_j\}_{j \in
		S}, \{\zeta_j\}_{j \in S})$ if
	\begin{itemize}
		\item for any $j \in S$, $f_j$ is $(\btheta^*_j, M, \rho, L, F_j, \zeta_j)$-regular
		(Definition \ref{def:reg});
		\item $\min_{\btheta \in \RR^d} \max_{j \in S} \{\norm{\btheta^*_j - \btheta}_2\} \le
		\delta$;
		\item $ \left|S^c\right| / \left|S\right| \le \varepsilon$;
		\item $\sup_{\btheta \in B(\btheta^*, M)} \norm{\bg_S(\btheta) -
			\nabla G_S(\btheta)}_2 \le \zeta_S \le \sum_{j \in S} \zeta_j$, where $G_S = \sum_{j \in S} F_j$
		and $\bg_S: \RR^d \rightarrow \RR^d$ is such that for every $\btheta \in \RR^d$, $\bg_S(\btheta)
		\in \partial \sum_{j \in S} f_j(\btheta)$. 
	\end{itemize}
\end{definition}

\begin{theorem}[Adaptivity and Robustness] \label{thm:detvanilla}
	Let $\{f_j\}_{j=1}^m$ be $(\varepsilon, \delta,
	\zeta_S)$-related with regularity parameters $(\{\btheta^*_j\}_{j=1}^{m}, M, \rho, L, S,$ $\{F_j\}_{j \in
		S}, \{\zeta_j\}_{j \in S})$. Define $
	\kappa = L / \rho$. Suppose $\kappa \varepsilon < 1$
	and
	\begin{equation}\label{cond:vanillalambda}
	\frac{5 \kappa}{1 - \kappa \varepsilon} \max_{j \in S} \{
	(\norm{\nabla F_j (\btheta^*_j)}_2 + \zeta_j)\} < \lambda < \frac{\rho M}{2}.
	\end{equation}
	Then, the estimators $\{\widehat{\btheta}_j\}_{j \in S}$ in (\ref{pro:vanilla}) satisfy
	\begin{align*}
	&\norm{\widehat{\btheta}_j - \btheta^*_j}_2 \le \frac{\norm{\sum_{k \in S} \nabla F_k
			(\btheta^*_k)}_2}{\rho \left|S\right|} \\ &\;\;\;\;\;\;\;\; + \frac{7}{(1 - \kappa
		\varepsilon)} \min \left\{3 \kappa^2 \delta, \frac{2\lambda}{5\rho}\right\} +
	\frac{\varepsilon \lambda}{\rho} + \frac{2 \zeta_S}{\rho \left|S\right|}.
	\end{align*}
	Moreover, there exists a constant $C$ such that under the conditions $\varepsilon = 0$ and $C \kappa
	L \delta < \lambda$, we have $\widehat{\btheta}_1 = \cdot \cdot \cdot = \widehat{\btheta}_m
	= \mathop{\mathrm{argmin}}_{\btheta \in \RR^d} \{\sum_{j=1}^{m} f_j(\btheta)\}$. 
\end{theorem}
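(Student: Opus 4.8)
The plan is to reduce \eqref{pro:vanilla} to its first-order (KKT) conditions and then control the individual estimators and the learned center separately (following the template of \cite{DW22}, with non-smoothness absorbed by a subgradient-closeness lemma). Subdifferential optimality of \eqref{pro:vanilla} (with $\bw=\one$, $\blambda=\lambda\one$) gives, for each $j$, some $\bbf_j\in\partial f_j(\widehat\btheta_j)$ and $\bu_j$ with $\norm{\bu_j}_2\le1$, equal to $(\widehat\btheta_j-\widehat\bbeta)/\norm{\widehat\btheta_j-\widehat\bbeta}_2$ when $\widehat\btheta_j\neq\widehat\bbeta$, such that $\bbf_j+\lambda\bu_j=\zero$, together with $\sum_{j=1}^m\bu_j=\zero$ from optimality in $\bbeta$; also each $\widehat\btheta_j$ solves \eqref{eq:stlperturb}, so $\norm{\bbf_j}_2\le\lambda$. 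First I would record the deterministic fact (Appendix~\ref{sec:det}) that \emph{one} uniformly close subgradient forces \emph{all} of them to be close: if $\sup_{\btheta\in B(\btheta^*_j,M)}\norm{\bbf(\btheta)-\nabla F_j(\btheta)}_2\le\zeta_j$ for some selection, integrating directional derivatives of $f_j$ shows $f_j-F_j$ is $\zeta_j$-Lipschitz on $B(\btheta^*_j,M)$, so every element of $\partial f_j$ there lies within $\zeta_j$ of $\nabla F_j$ (and likewise $\sum_{j\in S}f_j$ versus $G_S$, with slack $\zeta_S$). Hence on the relevant balls the subgradient maps of $f_j$ and of $\sum_{j\in S}f_j$ are $\rho$- (resp.\ $\rho\abr{S}$-) strongly monotone and $L$- (resp.\ $L\abr{S}$-) Lipschitz up to an additive $O(\zeta)$ error.

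Next I would localize and extract the crude branch of the bound. Theorem~\ref{thm:detperson} (and its proof) applied to each regular $f_j$ gives, for $j\in S$, $\widehat\btheta_j\in B(\btheta^*_j,M)$ and $\norm{\widehat\btheta_j-\btheta^*_j}_2\le(\lambda+\norm{\nabla F_j(\btheta^*_j)}_2+2\zeta_j)/\rho$; the lower bound on $\lambda$ in \eqref{cond:vanillalambda} makes the last two terms lower order, so this is $O(\lambda/\rho)$, which is the $\tfrac{2\lambda}{5\rho}$ alternative in the $\min$. A standard robustness property of the Fermat--Weber point (of which $\widehat\bbeta$ is the minimizer in the $\bbeta$-subproblem), together with $\abr{S^c}<\abr{S}$, also keeps $\widehat\bbeta\in B(\btheta^*_j,M)$ for $j\in S$.

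The heart of the proof is pinning $\widehat\bbeta$ near the true center $\bar\btheta$, where $\norm{\btheta^*_j-\bar\btheta}_2\le\delta$ for $j\in S$ (Definition~\ref{def:relate}). From $\sum_j\bu_j=\zero$ and $\abr{S^c}\le\varepsilon\abr{S}$ we get $\norm{\sum_{j\in S}\bu_j}_2=\norm{\sum_{j\in S^c}\bu_j}_2\le\varepsilon\abr{S}$, so the outlier tasks enter only through this bounded vector --- which is why the bound on $\max_{j\in S}\norm{\widehat\btheta_j-\btheta^*_j}_2$ survives arbitrary corruption of the tasks in $S^c$. Combining the transferred strong monotonicity of $\partial f_j$ with Lipschitzness of $\nabla F_j$ and the lower bound on $\lambda$ yields a per-task contraction $\norm{\widehat\btheta_j-\widehat\bbeta}_2\le\kappa\norm{\widehat\bbeta-\btheta^*_j}_2$ for $j\in S$. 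Using $\bbf_j=-\lambda\bu_j$, a first-order expansion of $\nabla F_j$ about $\bar\btheta$, and $\nabla\big(\sum_{k\in S}F_k\big)=\zero$ at its own minimizer, I would aggregate the first-order conditions over $S$ --- treating $\sum_{j\in S}\bbf_j$ and $\nabla\sum_{k\in S}F_k$ as wholes, so that the averaged quantities $\norm{\sum_{k\in S}\nabla F_k(\btheta^*_k)}_2/(\rho\abr{S})$ and $\zeta_S$ appear rather than per-task sums --- and feed in the per-task contraction to convert them into a self-referential inequality for $\norm{\widehat\bbeta-\bar\btheta}_2$; solving it gives $\norm{\widehat\bbeta-\bar\btheta}_2\lesssim\frac1{1-\kappa\varepsilon}\big(\frac{\norm{\sum_{k\in S}\nabla F_k(\btheta^*_k)}_2}{\rho\abr{S}}+\kappa\delta+\frac{\zeta_S}{\rho\abr{S}}+\frac{\varepsilon\lambda}{\rho}\big)$, the hypothesis $\kappa\varepsilon<1$ being exactly what keeps the leading coefficient positive. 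Then $\norm{\widehat\btheta_j-\btheta^*_j}_2\le\norm{\widehat\btheta_j-\widehat\bbeta}_2+\norm{\widehat\bbeta-\btheta^*_j}_2\le(\kappa+1)\big(\norm{\widehat\bbeta-\bar\btheta}_2+\delta\big)$ gives the stated bound with the $\kappa^2\delta$ term, and intersecting with the crude bound yields $\min\{3\kappa^2\delta,\tfrac{2\lambda}{5\rho}\}$. I expect this center-estimation step to be the main obstacle: the delicate point is to show that the Hessian-weighted scatter of $\{\widehat\btheta_j\}_{j\in S}$ about $\widehat\bbeta$ couples to the $\varepsilon$-fraction of outliers so as to cost only the factor $(1-\kappa\varepsilon)^{-1}$, rather than diverging.

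Finally, for the ``moreover'' claim, take $\varepsilon=0$, so $S=[m]$, and let $\btheta^\sharp\in\argmin_\btheta\sum_{j=1}^m f_j(\btheta)$. The same personalization argument applied to $\sum_j f_j$ gives $\norm{\btheta^\sharp-\bar\btheta}_2\lesssim\frac{\norm{\sum_j\nabla F_j(\btheta^*_j)}_2}{\rho m}+\kappa\delta+\frac{\zeta_S}{\rho m}$, so every $\bh_j\in\partial f_j(\btheta^\sharp)$ obeys $\norm{\bh_j}_2\le\norm{\nabla F_j(\btheta^*_j)}_2+L(\norm{\btheta^\sharp-\bar\btheta}_2+\delta)+\zeta_j$, which \eqref{cond:vanillalambda} and the hypothesis $C\kappa L\delta<\lambda$ (for a suitable absolute $C$) force to be $<\lambda$. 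Since $\zero\in\partial\sum_j f_j(\btheta^\sharp)=\sum_j\partial f_j(\btheta^\sharp)$, pick $\bh_j\in\partial f_j(\btheta^\sharp)$ with $\sum_j\bh_j=\zero$ and set $\bu_j=-\bh_j/\lambda$, so $\norm{\bu_j}_2<1$ and $\sum_j\bu_j=\zero$; then $(\btheta^\sharp\one^\top,\btheta^\sharp)$ satisfies the KKT conditions of \eqref{pro:vanilla}, hence is a minimizer, giving $\widehat\btheta_1=\cdots=\widehat\btheta_m=\btheta^\sharp$ (uniqueness of the $\widehat\btheta_j$: any minimizer has each $\widehat\btheta_j\in B(\btheta^*_j,M)$ by the localization step, where the transferred strong convexity fixes the solution).
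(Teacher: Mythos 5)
Your outer scaffolding (subgradient-selection-to-all-subgradients transfer, the crude $O(\lambda/\rho)$ branch from Theorem \ref{thm:detperson}, and the KKT certificate for the ``moreover'' claim) is sound, but the heart of the argument --- the step you yourself flag as the main obstacle --- does not go through as sketched. Your plan is: per-task contraction $\norm{\widehat{\btheta}_j - \widehat{\bbeta}}_2 \le \kappa \norm{\widehat{\bbeta} - \btheta^*_j}_2$, aggregate the stationarity conditions over $S$, and solve a self-referential inequality for $\norm{\widehat{\bbeta} - \bar{\btheta}}_2$ using $\kappa\varepsilon < 1$. Write out the aggregation: $\sum_{j \in S} \bbf_j(\widehat{\btheta}_j) = -\lambda \sum_{j\in S}\bu_j$ with the right side of norm at most $\varepsilon\lambda\abr{S}$, and on the left you must pass from the points $\widehat{\btheta}_j$ to a common point before $\nabla G_S$ can be used. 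The gradient-transport term is $\sum_{j\in S} \norm{\nabla F_j(\widehat{\btheta}_j) - \nabla F_j(\widehat{\bbeta})}_2 \le L \sum_{j\in S}\norm{\widehat{\btheta}_j - \widehat{\bbeta}}_2 \le L\kappa\abr{S}\big(\norm{\widehat{\bbeta}-\bar{\btheta}}_2 + \delta\big)$, so the self-referential inequality has coefficient $L\kappa/\rho = \kappa^2 \ge 1$ on $\norm{\widehat{\bbeta}-\bar{\btheta}}_2$, not $\kappa\varepsilon$; it cannot be solved, and $\kappa\varepsilon<1$ does not rescue it. Moreover, $\zeta_S$ only controls $\norm{\bg_S(\btheta) - \nabla G_S(\btheta)}_2$ when the sum of subgradients is evaluated at one common point; with the $\widehat{\btheta}_j$ merely close to $\widehat{\bbeta}$ you are forced to pay $\sum_{j\in S}\zeta_j$ instead of $\zeta_S$, so the stated bound with $2\zeta_S/(\rho\abr{S})$ (and hence the improved $\sqrt{mn}$ rate it feeds into) would not be obtained. (The contraction inequality itself is also asserted, not proved, and would carry additive $\zeta_j$ and $\lambda$ slack --- but even granted, it is the wrong tool.)

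What makes the theorem work in the paper is \emph{exact} collapse rather than contraction: in the regime $3L\delta \lesssim \gamma + \frac{1-\kappa\varepsilon}{5\kappa}\lambda$, one shows $\widehat{\btheta}_j = \widehat{\bbeta}$ for every $j \in S$, so the transport term vanishes identically and $\zeta_S$ can be applied at the single point $\widehat{\bbeta}$. This is done through the profile objective $\sum_j f_j \square (\lambda\norm{\cdot}_2)$: Lemma \ref{lem:tech3} shows $f_j = f_j \square (\lambda\norm{\cdot}_2)$ on a ball around the center once $\lambda$ exceeds the local subgradient norm, Lemma \ref{lem:tech4} shows the $S^c$ tasks contribute only a $\lambda\abr{S^c}$-Lipschitz perturbation, and Lemma \ref{lem:tech2} converts that into the $\varepsilon\lambda/\rho$ shift; the factor $(1-\kappa\varepsilon)^{-1}$ arises from the self-consistency condition on $\lambda$ needed for collapse (the $\varepsilon\lambda/\rho$ shift of $\widehat{\bbeta}$ raises local gradient norms by $\kappa\varepsilon\lambda$), not from inverting a fixed-point map. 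The complementary regime of large $\delta$ is handled by the crude personalization bound, and the case split produces $\min\{3\kappa^2\delta, 2\lambda/(5\rho)\}$. To repair your proof you would need to replace the contraction step by a proof of exact collapse on $S$ (e.g.\ by verifying that every subgradient of $f_j$ at the minimizer of the perturbed pooled problem has norm strictly below $\lambda$), at which point you are essentially reconstructing Lemmas \ref{lem:vanilladp}--\ref{lem:vanillaprep}. Your ``moreover'' argument is essentially fine in spirit (it parallels Lemma \ref{lem:vanilladp}), though establishing that \emph{every} minimizer of \eqref{pro:vanilla} has the pooled form, not just exhibiting one, still needs the local identity $f_j = f_j\square(\lambda\norm{\cdot}_2)$ or an equivalent uniqueness argument.
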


\section{PROOF OF DETERMINISTIC RESULTS} \label{sec:proofdet}

\subsection{Proof of Theorem \ref{thm:detperson}}
By Lemma \ref{lem:tech1} and $\norm{\nabla F_j(\btheta^*_j)}_2 \le \rho M / 2 - \zeta_j $,
any minimizer $\widetilde{\btheta}_j$ of $f_j$ satisfies $\norm{\widetilde{\btheta}_j -
	\btheta^*_j}_2 \le \norm{\nabla F_j(\btheta^*_j)}_2 / \rho + \zeta_j / \rho < M / 2$. Hence $\nabla^2
F_j(\btheta) \succeq \rho \bI$ holds for all $\btheta \in B(\widetilde{\btheta}_j, M / 2)$. By Lemma
\ref{lem:tech2} and $\lambda < \rho M / 2 - \zeta_j$, $\norm{\widehat{\btheta}_j -
	\widetilde{\btheta}_j}_2 \le \lambda / \rho + \zeta_j / \rho$. 

\subsection{Proof of Theorem \ref{thm:detvanilla}}
First, note that the task-relatedness, combined with Lemma \ref{lem:tech5}, yields 
$$
\sup_{\btheta \in B(\btheta^*, M)} \sup_{\bg_S \in \partial g_S(\btheta)} \norm{\bg_S -
	\nabla G_S(\btheta)}_2 \le \zeta_S,
$$
$$
\sup_{\btheta \in B(\btheta^*, M)} \sup_{\bbf_j \in \partial f_j (\btheta)}\norm{\bbf_j -
	\nabla F_j(\btheta)}_2 \le \zeta_j, \;\;\;\;\;\; \forall j \in S,
$$
where $g_S = \sum_{j \in S} f_j$ and $G_S = \sum_{j \in S} F_j$.

Define $\gamma = \max_{j \in S} \{
\norm{\nabla F_j (\btheta^*_j)}_2 + \zeta_j\}$. We first assume 
\begin{equation} \label{assump:biglambda}
3L\delta < \gamma + \frac{1 - \kappa \varepsilon}{5
	\kappa}\lambda.
\end{equation}
Define $\btheta^* = \mathop{\mathrm{argmin}}_{\btheta \in \RR^d} \max_{j \in S} \norm{\btheta^*_j -
	\btheta}_2$, and recall that $\min_{\btheta \in \RR} \max_{j \in S} \{\norm{\btheta^*_j - \btheta}_2\}
\le \delta$. From (\ref{cond:vanillalambda}) we have that $\gamma < \rho M /10$ and $\lambda < \rho M / 2$. When
$$
3 L \delta < \gamma + \frac{1 - \kappa \varepsilon}{5 \kappa} \lambda < \gamma + \lambda <
\frac{3}{5} L M,
$$
we have $\delta < M / 5$; thus $\max_{j \in S} \norm{\btheta^*_j -
	\btheta^*}_2 < M / 5$. Recall that, for any $j \in S$, we have the sub-regularity condition that
$\nabla^2 F(\btheta) \preceq L \bI$, $\forall \btheta \in B(\btheta^*_j, M)$; thus we have $\nabla^2
F(\btheta) \preceq L \bI$, $\forall \btheta \in B(\btheta^*, 4M/5)$.  This leads to $\norm{\nabla F_j(\btheta_j^*) - \nabla
	F_j(\btheta^*)}_2 \le L \delta$, $\forall j \in S$. Define $\eta = \max_{j \in S}
\left\{\norm{\nabla F_j (\btheta^*)}_2 + \zeta_j\right\}$. By triangle inequality,  
$$
\eta \le \gamma + \max_{j \in S} \{ \norm{\nabla F_j(\btheta_j^*) - \nabla
	F_j(\btheta^*)}_2\} \le \gamma + L\delta < \frac{4}{3} \gamma + \frac{1 - \kappa
	\varepsilon}{15 \kappa}\lambda,
$$
where the last inequality results from (\ref{assump:biglambda}). Consequently, we have
$$
\frac{3\kappa \eta}{1 - \kappa \varepsilon} < \frac{4 \kappa \gamma}{1 - \kappa
	\varepsilon} + \frac{\lambda}{5} < \lambda < \frac{\rho M}{2} < LM.
$$
By Lemma \ref{lem:vanillaprep}, $\widehat{\btheta}_j = \widehat{\bbeta}$ for all $j \in S$ and 
$$
\norm{\widehat{\bbeta} - \btheta^*}_2 \le \frac{\varepsilon \lambda}{\rho} + \frac{\norm{\sum_{j \in S} \nabla F_j
		(\btheta^*)}_2}{\rho \left|S\right|} + \frac{2\zeta_S}{\rho \left|S\right|}.
$$
For any $j \in S$, $\norm{\widehat{\btheta}_j - \btheta_j^*}_2 \le \norm{\widehat{\bbeta} -
	\btheta^*}_2 + \norm{\btheta^* - \btheta^*_j} \le \norm{\widehat{\bbeta} -
	\btheta^*}_2 + \delta$. Also, 
$$
\frac{\norm{\sum_{j \in S} \nabla F_j
		(\btheta^*)}_2}{\rho \left|S\right|} \le \frac{\norm{\sum_{j \in S} \nabla F_j
		(\btheta^*_j)}_2}{\rho \left|S\right|} + \frac{L \delta}{\rho} \le \frac{\norm{\sum_{j \in S} \nabla F_j
		(\btheta^*_j)}_2}{\rho \left|S\right|} + \kappa\delta.
$$
Based on the above estimates and noting that $\kappa \ge 1$, we have
\begin{equation*}
\norm{\widehat{\btheta}_j - \btheta^*_j}_2 \le \frac{\norm{\sum_{k \in S} \nabla F_k
		(\btheta^*_k)}_2}{\rho \left|S\right|} + 2 \kappa \delta +
\frac{\varepsilon \lambda}{\rho} + \frac{2 \zeta_S}{\rho \left|S\right|}, \;\;\;\;\;\; \forall j \in S.
\end{equation*}

Now, note that condition (\ref{cond:vanillalambda}) forces $\lambda > 5 \kappa \gamma$. The
above result implies that when $3L\delta < \gamma + \frac{1 - \kappa \varepsilon}{5
	\kappa}\lambda$, for any $j \in S$,  
$$
\norm{\widehat{\btheta}_j - \btheta^*_j}_2 \le \frac{\norm{\sum_{k \in S} \nabla F_k
		(\btheta^*_k)}_2}{\rho \left|S\right|} + \frac{1}{\rho} \min \left\{3 L
\delta, \gamma + \frac{1 - \kappa \varepsilon}{5 \kappa} \lambda\right\} +
\frac{\varepsilon \lambda}{\rho} + \frac{2 \zeta_S}{\rho \left|S\right|}.
$$
On the other hand, when $3L\delta \ge \gamma + \frac{1 - \kappa \varepsilon}{5
	\kappa}\lambda$, we use Theorem \ref{thm:detperson} to get
$$
\norm{\widehat{\btheta}_j - \btheta^*_j}_2 \le \frac{2\gamma + \lambda}{\rho} \le
\left(\frac{2}{5\kappa} + 1\right) \frac{\lambda}{\rho} \le \frac{7 \lambda}{5 \rho},
$$
where the second inequality is due to condition (\ref{cond:vanillalambda}). Denote
$$
U = \frac{\norm{\sum_{k \in S} \nabla F_k
		(\btheta^*_k)}_2}{\rho \left|S\right|} + \frac{\varepsilon \lambda}{\rho} + \frac{2 \zeta_S}{\rho
	\left|S\right|}.
$$
To summarize, for any $j \in S$ we
have
\begin{align*}
\norm{\widehat{\btheta}_j - \btheta^*_j}_2 &\le \frac{\norm{\sum_{k \in S} \nabla F_k
		(\btheta^*_k)}_2}{\rho \left|S\right|} + \frac{\varepsilon \lambda}{\rho} + \frac{2 \zeta_S}{\rho
	\left|S\right|} + \frac{1}{\rho} \cdot \frac{7
	\kappa}{1 - \kappa \varepsilon} \min \left\{3 L
\delta, \gamma + \frac{1 - \kappa \varepsilon}{5 \kappa} \lambda\right\} \\ & \le \frac{\norm{\sum_{k \in S} \nabla F_k
		(\btheta^*_k)}_2}{\rho \left|S\right|} + \frac{\varepsilon \lambda}{\rho} + \frac{2 \zeta_S}{\rho
	\left|S\right|}
+ \frac{1}{\rho} \cdot \frac{7 \kappa}{1 - \kappa \varepsilon} \min \left\{3 L
\delta, \frac{2\lambda}{5 \kappa} \right\} \\ & \le \frac{\norm{\sum_{k \in S} \nabla F_k
		(\btheta^*_k)}_2}{\rho \left|S\right|} + \frac{\varepsilon \lambda}{\rho} + \frac{2 \zeta_S}{\rho
	\left|S\right|} + \frac{7}{1 - \kappa
	\varepsilon} \min \left\{3 \kappa^2 \delta, \frac{2\lambda}{5\rho}\right\}.
\end{align*}
The relation between $\widehat{\btheta}_j$ and $\mathop{\mathrm{argmin}}_{\btheta \in \RR^d}
\{\sum_{j=1}^{m} f_j(\btheta)\}$ can be derived from Lemma \ref{lem:vanillaprep}.

\subsection{Supporting Lemmas for Deterministic Results}
\begin{lemma}\label{lem:vanilladp}
	Let $\left\{f_j\right\}_{j=1}^{m}$ and $\left\{F_j\right\}_{j=1}^{m}$ be convex. Suppose $F_j$
	is twice differentiable for all $j \in [m]$, and there exist $\btheta^* \in \RR^d$ and $0 < M <
	\infty$ such that for all $j \in [m]$ 
	$$
	\rho_j \bI \preceq \nabla^2 F_j(\btheta) \preceq L_j \bI, \;\;\;\;\;\; \forall \btheta \in
	B(\btheta^*, M) 
	$$
	with some $0 < \rho_j \le L_j < +\infty$. Define
	$$
	f_0 = \sum_{j=1}^{m} f_j \square (\lambda_j \left\|\cdot\right\|_2), \;\;\;\;\;\;g_0 =
	\sum_{j=1}^{m} f_j,\;\;\;\;\;\; G_0 = \sum_{j=1}^{m} F_j,
	$$
	and denote $\rho_0 =
	\sum_{j=1}^{m} \rho_j$. If, for some $0 \le \zeta_0,\, \zeta_1,\, ...,\, \zeta_m < +\infty$, 
	$$
	\sup_{\btheta \in B(\btheta^*, M)} \sup_{\bg_0 \in \partial g_0(\btheta)} \norm{\bg_0 -
		\nabla G_0(\btheta)}_2 \le \zeta_0,
	$$
	$$
	\sup_{\btheta \in B(\btheta^*, M)} \sup_{\bbf_j \in \partial f_j (\btheta)}\norm{\bbf_j -
		\nabla F_j(\btheta)}_2 \le \zeta_j, \;\;\;\;\;\; \forall j \in [m],
	$$
	and 
	$$
	\norm{\nabla F_j(\btheta^*)}_2 + \zeta_j + \frac{2 L_j}{\rho_0} \left(\bignorm{\sum_{k=1}^{m} \nabla F_k(\btheta^*)}_2 +
	\zeta_0 \right) < \lambda_j < \norm{\nabla F_j(\btheta^*)}_2 + \zeta_j + L_j M
	$$
	for all $j \in [m]$, then $\widehat{\btheta}_1 = \cdot\cdot\cdot = \widehat{\btheta}_m = \widehat{\bbeta} = \widetilde{\btheta}$, 
	\begin{align*}
	&f_0(\btheta) = g_0(\btheta), \;\;\;\;\;\; \forall \btheta \in B(\btheta^*, R), \\
	&\norm{\widehat{\bbeta} - \btheta^*}_2 \le \frac{\norm{\sum_{k=1}^{m} \nabla F_k
			(\btheta^*)}_2}{\rho_0} + \frac{\zeta_0}{\rho_0}
	\end{align*}
	where $R = \min_{j \in [m]} \left\{(\lambda_j - \norm{\nabla F_j(\btheta^*)}_2 - \zeta_j) / L_j\right\}$. 
\end{lemma}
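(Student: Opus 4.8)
The plan is to exploit the partial-minimization structure of \eqref{pro:vanilla}: holding the center $\bbeta$ fixed and minimizing the objective over $\bTheta$ decouples across tasks, identifying $\widehat{\bbeta}$ as a minimizer of $f_0 = \sum_{j=1}^m f_j \square (\lambda_j\norm{\cdot}_2)$, while each $\widehat{\btheta}_j$ attains the $j$-th infimal convolution $(f_j \square \lambda_j\norm{\cdot}_2)(\widehat{\bbeta})$; equivalently, $\widehat{\btheta}_j$ minimizes $f_j(\cdot) + \lambda_j\norm{\cdot - \widehat{\bbeta}}_2$. Since $(f_j \square \lambda_j\norm{\cdot}_2)(\btheta) \le f_j(\btheta)$ (take $\bbeta' = \btheta$ in the defining infimum), we always have $f_0 \le g_0$ pointwise. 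The heart of the proof is to show that $f_0$ and $g_0$ agree on $B(\btheta^*, R)$ and that all the minimizers we care about lie strictly inside that ball, so that, from each task's perspective, the penalty is inoperative and everything collapses onto data pooling.

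First I would observe that the two-sided bound on $\lambda_j$ gives $0 < R < M$: the lower bound forces $\lambda_j - \norm{\nabla F_j(\btheta^*)}_2 - \zeta_j > 0$ for every $j$, and the upper bound forces $(\lambda_j - \norm{\nabla F_j(\btheta^*)}_2 - \zeta_j)/L_j < M$. Dividing the $\lambda_j$-lower bound by $L_j$ moreover gives the sharper
$$
R \;>\; \frac{2}{\rho_0}\Bigl( \bignorm{\sum_{k=1}^m \nabla F_k(\btheta^*)}_2 + \zeta_0 \Bigr),
$$
and hence $\bignorm{\sum_{k=1}^m \nabla F_k(\btheta^*)}_2 + \zeta_0 < \rho_0 M / 2$. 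Next, for any $\btheta \in B(\btheta^*, R) \subseteq B(\btheta^*, M)$, any $j$, and any $\bbf_j \in \partial f_j(\btheta)$, the hypothesis $\sup_{\bbf_j}\norm{\bbf_j - \nabla F_j(\btheta)}_2 \le \zeta_j$ and $\nabla^2 F_j \preceq L_j\bI$ along the segment $[\btheta^*,\btheta]$ give
$$
\norm{\bbf_j}_2 \;\le\; \norm{\nabla F_j(\btheta^*)}_2 + L_j\norm{\btheta-\btheta^*}_2 + \zeta_j \;\le\; \norm{\nabla F_j(\btheta^*)}_2 + L_j R + \zeta_j \;\le\; \lambda_j .
$$
Consequently $0 \in \partial f_j(\btheta) + \lambda_j B(0,1)$, which is exactly the first-order optimality condition for $\bbeta' = \btheta$ in $\inf_{\bbeta'}\{ f_j(\bbeta') + \lambda_j\norm{\btheta - \bbeta'}_2\}$; therefore $(f_j \square \lambda_j\norm{\cdot}_2)(\btheta) = f_j(\btheta)$, and summing over $j$ yields $f_0 = g_0$ on $B(\btheta^*, R)$ with this $R$, which is the first displayed conclusion.

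Then I would pin down the minimizers. Applying Lemma \ref{lem:tech1} to $g_0$ with surrogate $G_0$ --- legitimate because $\rho_0\bI \preceq \nabla^2 G_0 \preceq (\sum_j L_j)\bI$ on $B(\btheta^*,M)$, $\sup_{\bg_0}\norm{\bg_0 - \nabla G_0}_2 \le \zeta_0$, and $\norm{\nabla G_0(\btheta^*)}_2 + \zeta_0 < \rho_0 M/2$ by the previous step --- shows that every minimizer $\widetilde{\btheta}$ of $g_0$ satisfies $\norm{\widetilde{\btheta} - \btheta^*}_2 \le (\bignorm{\sum_{k=1}^m \nabla F_k(\btheta^*)}_2 + \zeta_0)/\rho_0 < R/2$, so $\widetilde{\btheta}$ lies in the interior of $B(\btheta^*,R)$. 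By standard subdifferential calculus (for the infimal convolutions, whose defining infima at $\widetilde{\btheta}$ are attained at $\bbeta' = \widetilde{\btheta}$ by the previous paragraph; and for finite sums), $0 \in \partial g_0(\widetilde{\btheta})$ forces $0 \in \partial f_0(\widetilde{\btheta})$, so $\widetilde{\btheta}$ minimizes the convex function $f_0$ and $\min f_0 = f_0(\widetilde{\btheta}) = g_0(\widetilde{\btheta}) = \min g_0$. Conversely, if some minimizer $\widehat{\bbeta}$ of $f_0$ lay outside $B(\btheta^*,R)$, then $f_0$ would be constant (equal to $\min g_0$) on $[\widetilde{\btheta},\widehat{\bbeta}]$ by convexity, so the point $\btheta'$ where this segment exits the ball ($\norm{\btheta'-\btheta^*}_2 = R$) would satisfy $g_0(\btheta') = f_0(\btheta') = \min g_0$, contradicting that every minimizer of $g_0$ lies in $B(\btheta^*,R/2)$. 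Hence $\widehat{\bbeta} \in B(\btheta^*,R)$, so $\widehat{\bbeta} \in \argmin g_0$, yielding both $\widehat{\bbeta} = \widetilde{\btheta}$ and the bound $\norm{\widehat{\bbeta} - \btheta^*}_2 \le (\bignorm{\sum_{k=1}^m \nabla F_k(\btheta^*)}_2 + \zeta_0)/\rho_0$.

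Finally, since $\widehat{\bbeta} \in B(\btheta^*, R/2)$, the subgradient estimate established above gives $\norm{\bbf_j}_2 \le \lambda_j$ for every $\bbf_j \in \partial f_j(\widehat{\bbeta})$, so $\widehat{\bbeta}$ is a minimizer of $\btheta \mapsto f_j(\btheta) + \lambda_j\norm{\btheta - \widehat{\bbeta}}_2$; and for $0 < \norm{\btheta - \widehat{\bbeta}}_2 < R/2$ that estimate becomes \emph{strict}, $\norm{\bbf_j}_2 < \lambda_j$ (because then $\norm{\btheta - \btheta^*}_2 < R$), so $0 \notin \partial\bigl(f_j(\cdot) + \lambda_j\norm{\cdot - \widehat{\bbeta}}_2\bigr)(\btheta)$; as the minimizer set is convex, this forces $\widehat{\btheta}_j = \widehat{\bbeta}$. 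Combining the three parts, $\widehat{\btheta}_1 = \cdots = \widehat{\btheta}_m = \widehat{\bbeta} = \widetilde{\btheta}$. I expect the main obstacle to be the uniqueness bookkeeping: $f_j$, $g_0$, and $f_0$ are only convex, not strongly convex, so uniqueness of their minimizers has to be routed through the surrogates $F_j$ and $G_0$ (via Lemma \ref{lem:tech1}) and the flat-segment argument, and one must track carefully that the admissible window for $\lambda_j$ is precisely what makes $R/2$ exceed $(\bignorm{\sum_{k=1}^m \nabla F_k(\btheta^*)}_2 + \zeta_0)/\rho_0$ while keeping $R < M$.
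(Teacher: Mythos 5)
Your proof is correct and follows essentially the same route as the paper: partial minimization reduces the center to a minimizer of $f_0$, the subgradient bound $\sup_{\bbf_j \in \partial f_j(\btheta)}\norm{\bbf_j}_2 \le \lambda_j$ on $B(\btheta^*,R)$ gives $f_0 = g_0$ there (the paper delegates this to Lemma \ref{lem:tech3}), and Lemma \ref{lem:tech1} applied to $g_0$ with surrogate $G_0$ localizes the pooled minimizer inside the ball, after which the minimizers of $f_0$ and $g_0$ are identified and $\widehat{\btheta}_j=\widehat{\bbeta}$ follows. The only differences are in level of detail: you reprove Lemma \ref{lem:tech3} inline and explicitly supply the argmin identification (local-minimum plus flat-segment argument) and the uniqueness step for $\widehat{\btheta}_j=\widehat{\bbeta}$ via the strict subgradient bound and convexity of the minimizer set, points the paper handles tersely through Lemmas \ref{lem:tech1} and \ref{lem:tech3}.
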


\begin{proof}[Proof of Lemma \ref{lem:vanilladp}]
	By assumption, $(\lambda_j - \norm{\nabla F_j(\btheta^*)}_2 - \zeta_j) / L_j < M$ for any $j \in [m]$. 
	Since $\lambda_j > \norm{\nabla F_j(\btheta^*)}_2 + \zeta_j$, by Lemma \ref{lem:tech3}, we have 
	$f_j = f_j \square (\lambda_j \norm{\cdot}_2)$ in 
	$B(\btheta^*, (\lambda_j - \norm{\nabla F_j(\btheta^*)}_2 - \zeta_j)$ $/L_j)$. Then, $f_0 = g_0$ in $B(\btheta^*, R)$.
	
	Since $\nabla^2 F_j(\btheta) \succeq
	\rho_j \bI$ for any $\btheta \in B(\btheta^*, M)$, we have 
	$$
	F_j(\btheta) - F_j(\btheta^*) \ge \frac{\rho_j}{2} \norm{\btheta - \btheta^*}_2^2 +
	\inner{\nabla F_j(\btheta^*)}{\btheta - \btheta^*},
	$$
	and thus
	\begin{align*}
	G_0(\btheta) - G_0(\btheta^*) &= \sum_{j=1}^{m} F_j(\btheta) - F_j(\btheta^*) \\ &\ge
	\frac{1}{2}\left(\sum_{j=1}^{m} \rho_j\right) \norm{\btheta - \btheta^*}_2^2 +
	\inner{\sum_{j=1}^{m} \nabla F_j(\btheta^*)}{\btheta - \btheta^*} \\ &=
	\frac{\rho_0}{2} \norm{\btheta - \btheta^*}_2^2 + \inner{\nabla G_0(\btheta^*)}{\btheta -
		\btheta^*}, \;\;\;\;\;\; \forall \btheta \in B(\btheta^*, M),
	\end{align*}
	from which we have $\nabla^2 G_0(\btheta^*) \succeq \rho_0 \bI$ for all $\btheta \in B(\btheta^*,
	M)$. By assumption, we have
	$$
	\frac{2}{\rho_0}\left(\bignorm{\sum_{k=1}^{m} \nabla F_k(\btheta^*)}_2 + \zeta_0 \right)
	\le \frac{\lambda_j - \norm{\nabla F_j(\btheta^*)}_2 - \zeta_j}{L_j},
	\;\;\;\;\;\; \forall j \in [m].
	$$
	Taking the minimum over $j$ on both sides, we have
	$$
	\bignorm{\sum_{j=1}^{m} \nabla F_j(\btheta^*)}_2 \le \frac{1}{2} R \rho_0 - \zeta_0.
	$$
	By Lemma \ref{lem:tech1}, 
	$$
	\widehat{\bbeta} = \mathop{\mathrm{argmin}}_{\bbeta \in \RR^d} f_0(\bbeta) =
	\mathop{\mathrm{argmin}}_{\btheta \in \RR^d} g_0(\btheta) = \widetilde{\btheta} \subseteq B\left(\btheta^*,
	\frac{\norm{\sum_{j=1}^{m} \nabla F_j (\btheta^*)}_2}{\rho_0} +
	\frac{\zeta_0}{\rho_0}\right).
	$$
	Finally, $\widehat{\btheta}_j = \widehat{\bbeta}$ follows from $\widehat{\btheta}_j \in
	\mathop{\mathrm{argmin}}_{\btheta \in \RR^d} \left\{f_j(\btheta) + \lambda_j \norm{\btheta -
		\widehat{\bbeta}}_2\right\}$, $\widehat{\btheta}_j \in B(\btheta^*, R)$, and Lemma \ref{lem:tech3}. We have
	completed the proof.
\end{proof}

\begin{lemma}[Robustness]\label{lem:vanillarobust}
	Let $\left\{f_j\right\}_{j=1}^{m}$ be convex. Suppose
	there exists $S \subseteq [m]$, $\btheta^* \in \RR^d$ and $0 < M < \infty$ such that for all $j \in
	S$, there exists a twice differentiable convex function $F_j$ such that
	$$
	\rho_j \bI \preceq \nabla^2 F_j(\btheta) \preceq L_j \bI, \;\;\;\;\;\; \forall \btheta \in
	B(\btheta^*, M)
	$$
	with some $0 < \rho_j \le L_j < +\infty$. Define
	$$
	f_S = \sum_{j \in S} f_j \square (\lambda_j \left\|\cdot\right\|_2), \;\;\;\;\;\;g_S =
	\sum_{j \in S} f_j,\;\;\;\;\;\; G_S = \sum_{j
		\in S} F_j,
	$$
	and denote
	$$
	\rho_S = \sum_{j \in S} \rho_j, \;\;\;\;\;\;
	\widehat{\btheta}_S \in \mathop{\mathrm{argmin}}_{\btheta \in \RR^d} f_S(\btheta), \;\;\;\;\;\; \lambda_{S^c} = \sum_{j \in
		S^c} \lambda_j.
	$$
	If, for some $0 \le \zeta_j < + \infty$, $j \in S$, and some $0 \le \zeta_S < + \infty$,  
	$$
	\sup_{\btheta \in B(\btheta^*, M)} \sup_{\bg_S \in \partial g_S(\btheta)} \norm{\bg_S -
		\nabla G_S(\btheta)}_2 \le \zeta_S,
	$$
	$$
	\sup_{\btheta \in B(\btheta^*, M)} \sup_{\bbf_j \in \partial f_j (\btheta)}\norm{\bbf_j -
		\nabla F_j(\btheta)}_2 \le \zeta_j, \;\;\;\;\;\; \forall j \in S,
	$$
	and
	$$
	\norm{\nabla F_j(\btheta^*)}_2 + \zeta_j + \frac{2 L_j}{\rho_S} \left(\bignorm{\sum_{k \in S} \nabla F_k(\btheta^*)}_2 +
	\zeta_S \right) + \frac{L_j\lambda_{S^c}}{\rho_S} < \lambda_j < \norm{\nabla F_j(\btheta^*)}_2 + \zeta_j + L_j M,
	$$
	for all $j \in S$, then 
	$$
	\norm{\widehat{\btheta}_S - \btheta^*}_2 \le \frac{\norm{\sum_{k \in S} \nabla F_k
			(\btheta^*)}_2}{\rho_S} + \frac{\zeta_S}{\rho_S},
	$$
	$\widehat{\btheta}_j = \widehat{\bbeta}$ for $j \in S$, and 
	$$
	\norm{\widehat{\bbeta} - \widehat{\btheta}_S}_2 \le \frac{\lambda_{S^c}}{\rho_S} +
	\frac{\zeta_S}{\rho_S}.
	$$
\end{lemma}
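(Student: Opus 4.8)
The plan is to follow the strategy behind Lemma~\ref{lem:vanilladp}, viewing the ``outlier'' tasks in $S^c$ as a bounded Lipschitz perturbation acting on the shared center $\bbeta$. First I would partially minimize $\bF_{\one,\blambda}(\bTheta,\bbeta)$ over $\bTheta$ with $\bbeta$ held fixed: this decouples over $j$ and shows that the $\bbeta$-component $\widehat{\bbeta}$ of any solution of \eqref{pro:vanilla} minimizes $\bbeta\mapsto\sum_{j=1}^{m}(f_j\square\lambda_j\norm{\cdot}_2)(\bbeta)=f_S(\bbeta)+h(\bbeta)$, where $h:=\sum_{j\in S^c}f_j\square\lambda_j\norm{\cdot}_2$. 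Since the subdifferential of each infimal convolution $f_j\square\lambda_j\norm{\cdot}_2$ is contained in $\overline{B}(\zero,\lambda_j)$, the term $h$ is convex and $\lambda_{S^c}$-Lipschitz, so it can only displace the minimizer of $f_S$ by a controlled amount. I would also record the standard fact that $\widehat{\btheta}_j\in\argmin_{\btheta}\{f_j(\btheta)+\lambda_j\norm{\btheta-\widehat{\bbeta}}_2\}$ for every $j$.

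Next I would localize everything near $\btheta^*$. The lower bound on $\lambda_j$ forces $\lambda_j>\norm{\nabla F_j(\btheta^*)}_2+\zeta_j$ for $j\in S$, so Lemma~\ref{lem:tech3} gives $f_j=f_j\square\lambda_j\norm{\cdot}_2$ on $B(\btheta^*,R_j)$ with $R_j=(\lambda_j-\norm{\nabla F_j(\btheta^*)}_2-\zeta_j)/L_j$, while the upper bound on $\lambda_j$ gives $R_j<M$; hence, with $R:=\min_{j\in S}R_j$, one has $f_S=g_S$ on $B(\btheta^*,R)\subseteq B(\btheta^*,M)$. On $B(\btheta^*,M)$ the function $G_S=\sum_{j\in S}F_j$ is twice differentiable with $\nabla^2 G_S\succeq\rho_S\bI$ and $\nabla G_S(\btheta^*)=\sum_{j\in S}\nabla F_j(\btheta^*)$, while the hypothesis on $g_S$ keeps its subgradients within $\zeta_S$ of $\nabla G_S$. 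Writing $r_S:=(\norm{\sum_{j\in S}\nabla F_j(\btheta^*)}_2+\zeta_S)/\rho_S$, the lower bounds on the $\lambda_j$ give $r_S<R/2$; then Lemma~\ref{lem:tech1} applied to $(g_S,G_S)$ places $\argmin g_S$ inside $B(\btheta^*,r_S)$, and since $f_S=g_S$ on $B(\btheta^*,R)\supseteq B(\btheta^*,r_S)$ this identifies $\widehat{\btheta}_S=\argmin f_S$ and proves $\norm{\widehat{\btheta}_S-\btheta^*}_2\le r_S$, the first claim.

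For the remaining two conclusions I would work on the ball $B(\widehat{\btheta}_S,R-r_S)\subseteq B(\btheta^*,R)$, on which $f_S$ coincides with $g_S$, whose subgradients stay within $\zeta_S$ of $\nabla G_S\succeq\rho_S\bI$. Adding the $\lambda_{S^c}$-Lipschitz convex perturbation $h$ and invoking Lemma~\ref{lem:tech2} (after checking $\lambda_{S^c}<\rho_S(R-r_S)-\zeta_S$, which follows from $\rho_S(R-r_S)>\rho_S r_S+\lambda_{S^c}\ge\zeta_S+\lambda_{S^c}$) yields $\norm{\widehat{\bbeta}-\widehat{\btheta}_S}_2\le\lambda_{S^c}/\rho_S+\zeta_S/\rho_S$, the third claim. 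Combining with the first bound gives $\norm{\widehat{\bbeta}-\btheta^*}_2\le r_S+(\lambda_{S^c}+\zeta_S)/\rho_S$, which the three-term lower bound on each $\lambda_j$ makes strictly smaller than $R\le R_j$. For $j\in S$ this puts $\widehat{\bbeta}$ in the region where $f_j=f_j\square\lambda_j\norm{\cdot}_2$ (Lemma~\ref{lem:tech3}, with the inequality strict), so the infimum defining $(f_j\square\lambda_j\norm{\cdot}_2)(\widehat{\bbeta})$ is attained uniquely at $\widehat{\bbeta}$; equivalently $\widehat{\bbeta}$ is the unique minimizer of $\btheta\mapsto f_j(\btheta)+\lambda_j\norm{\btheta-\widehat{\bbeta}}_2$, giving $\widehat{\btheta}_j=\widehat{\bbeta}$ for all $j\in S$.

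The hard part will be the radius bookkeeping rather than any single estimate: one must be sure that both $\widehat{\btheta}_S$ and $\widehat{\bbeta}$ land inside the ball $B(\btheta^*,R)$ where the infimal convolutions are inert and $g_S$ behaves like a $\rho_S$-strongly convex function, \emph{and} that the margin $R-r_S$ left around $\widehat{\btheta}_S$ still exceeds the size $\lambda_{S^c}$ of the perturbation contributed by the tasks in $S^c$. The three-part lower bound on $\lambda_j$ in the statement is precisely the inequality $R>2r_S+\lambda_{S^c}/\rho_S$ in disguise, so all of these localization requirements are already encoded in the hypothesis; making that translation explicit is the only genuinely delicate point, after which the three bounds drop out of Lemmas~\ref{lem:tech1}, \ref{lem:tech2} and~\ref{lem:tech3} exactly as in the proof of Lemma~\ref{lem:vanilladp}.
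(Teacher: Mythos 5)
Your argument is correct and follows essentially the same route as the paper's proof: the paper simply packages your first two paragraphs into Lemma \ref{lem:vanilladp} (itself built on Lemmas \ref{lem:tech1} and \ref{lem:tech3}), then uses Lemma \ref{lem:tech4} for the convex $\lambda_{S^c}$-Lipschitz perturbation coming from $S^c$, Lemma \ref{lem:tech2} for $\norm{\widehat{\bbeta}-\widehat{\btheta}_S}_2$, and closes with the same triangle-inequality-plus-Lemma \ref{lem:tech3} step to get $\widehat{\btheta}_j=\widehat{\bbeta}$ for $j\in S$. Even your slightly informal invocation of Lemma \ref{lem:tech2} with the ball centered at $\widehat{\btheta}_S$ (rather than at the minimizer of the smooth reference, as the lemma literally requires) mirrors how the paper itself applies that lemma, so your plan does not deviate from the published argument.
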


\begin{proof}[Proof of Lemma \ref{lem:vanillarobust}]
	Define $R = \min_{j \in S} \left\{(\lambda_j - \norm{\nabla F_j(\btheta^*)}_2 - \zeta_j) /
	L_j\right\}$. By Lemma \ref{lem:vanilladp} and its proof, we have 
	\begin{align*}
	&f_S(\btheta) = g_S(\btheta), \;\;\;\;\;\; \forall \btheta \in B(\btheta^*, R), \\
	&\widehat{\btheta}_S = \mathop{\mathrm{argmin}}_{\btheta \in \RR^d} f_S(\btheta) =
	\mathop{\mathrm{argmin}}_{\btheta \in \RR^d} g_S(\btheta) \subseteq B\left(\btheta^*,
	\frac{\norm{\sum_{k \in S} \nabla F_k (\btheta^*)}_2}{\rho_S} +
	\frac{\zeta_S}{\rho_S}\right),
	\end{align*}
	and $\nabla^2 G_S(\btheta) \succeq \rho_S \bI$ for all $\btheta \in B(\btheta^*, R)$. Define $f_0 =
	\sum_{j=1}^{m} f_j$. By
	Lemma \ref{lem:tech4} we have that 
	$$
	f_0 - f_S = \sum_{j \in S^c} f_j \square (\lambda_j\norm{\cdot}_2)
	$$
	is convex and $\lambda_{S^c}$-Lipschitz. Note that 
	$$
	R > \frac{1}{\rho_S} \left(2\bignorm{\sum_{k \in S} \nabla F_k(\btheta^*)}_2 + 2 \zeta_S +
	\lambda_{S^c}\right),
	$$
	and thus
	$$
	\frac{\lambda_{S^c}}{\rho_S} < R - \frac{2\norm{\sum_{k \in S} \nabla
			F_k(\btheta^*)}_2}{\rho_S} - \frac{\zeta_S}{\rho_S}.
	$$
	Denote the right-hand side above as $R_{S^c}$. Since $\lambda_{S^c} < \rho_S R_{S^c} - \zeta_S$ and $G_S$ is strongly convex in $B(\btheta^*, R_{S^c}) \subseteq B(\btheta^*, R)$, we can
	control the effect of $f_0 - f_S$ by Lemma \ref{lem:tech2}:
	$$
	\norm{\widehat{\bbeta} - \widehat{\btheta}_S}_2 \le \frac{\lambda_{S^c}}{\rho_S} +
	\frac{\zeta_S}{\rho_S}.
	$$
	Finally, note that $\widehat{\btheta}_j \in
	\mathop{\mathrm{argmin}}_{\btheta \in \RR^d} \left\{f_j(\btheta) + \lambda_j \norm{\btheta -
		\widehat{\bbeta}}_2\right\}$ for all $j \in S$. Since, for all $j \in S$, $\lambda_j > \norm{\nabla
		F_j(\btheta^*)}_2 + \zeta_j$ and
	\begin{align*}
	\norm{\widehat{\bbeta} - \btheta^*}_2 &\le \norm{\widehat{\btheta}_S - \btheta^*}_2 +
	\norm{\widehat{\bbeta} - \widehat{\btheta}_S}_2 \\ &\le \frac{\norm{\sum_{k \in S} \nabla 
			F_k (\btheta^*)}_2}{\rho_S}  + \frac{2 \zeta_S}{\rho_S} + \frac{\lambda_{S^c}}
	{\rho_S} \\ &\le \frac{\lambda_j - \norm{\nabla F_j(\btheta^*)}_2 - \zeta_j}{L_j} < M, 
	\end{align*}
	$\widehat{\btheta}_j = \widehat{\bbeta}$ for all $j \in S$ by Lemma \ref{lem:tech3}. 
\end{proof}

\begin{lemma}\label{lem:vanillaprep}
	Let $\left\{f_j\right\}_{j=1}^{m}$ be convex. Suppose
	there exists $S \subseteq [m]$, $\btheta^* \in \RR^d$, $0 < M < \infty$ and $0 < \rho \le L < \infty$ such that for all $j \in
	S$, there exists a twice differentiable convex function $F_j$ such that
	$$
	\rho \bI \preceq \nabla^2 F_j(\btheta) \preceq L \bI, \;\;\;\;\;\; \forall \btheta \in
	B(\btheta^*, M).
	$$
	Define $g_S = \sum_{j \in S} f_j$, $G_S = \sum_{j \in S} F_j$,
	and denote
	$$
	\widetilde{\btheta}_S \in
	\mathop{\mathrm{argmin}}_{\btheta \in \RR^d} g_S(\btheta),  \;\;\;\; \eta = \max_{j \in S} \left\{
	\norm{\nabla F_j (\btheta^*)}_2 + \zeta_j\right\},\;\;\;\;\;\; \kappa = \frac{L}{\rho}, \;\;\;\;\;\;
	\varepsilon = \frac{\left|S^c\right|}{\left|S\right|}. 
	$$
	Further suppose that, for some $0 \le \zeta_j < + \infty$, $j \in S$, and some $0 \le \zeta_S \le \sum_{j \in S} \zeta_j$,
	$$
	\sup_{\btheta \in B(\btheta^*, M)} \sup_{\bg_S \in \partial g_S(\btheta)} \norm{\bg_S -
		\nabla G_S(\btheta)}_2 \le \zeta_S,
	$$
	$$
	\sup_{\btheta \in B(\btheta^*, M)} \sup_{\bbf_j \in \partial f_j (\btheta)}\norm{\bbf_j -
		\nabla F_j(\btheta)}_2 \le \zeta_j, \;\;\;\;\;\; \forall j \in S.
	$$
	Take $\lambda_j = \lambda$ for all $j \in [m]$ and
	some $\lambda > 0$. If $\kappa\varepsilon < 1$ and
	$$
	\frac{3 \kappa \eta}{1 - \kappa \varepsilon} < \lambda < LM, 
	$$
	then $\widehat{\btheta}_j = \widehat{\bbeta}$ for $j \in S$, and 
	$$
	\norm{\widehat{\bbeta} - \widetilde{\btheta}_S}_2 \le \frac{\sum_{j \in S^c} \lambda_j +
		\zeta_S}{\rho \left|S\right|} \le \frac{\varepsilon \lambda}{\rho} + \frac{\zeta_S}{\rho \left|S\right|}, 
	$$
	$$
	\norm{\widetilde{\btheta}_S - \btheta^*}_2 \le \frac{\norm{\sum_{j \in S} \nabla F_j
			(\btheta^*)}_2}{\rho \left|S\right|} + \frac{\zeta_S}{\rho \left|S\right|}.
	$$
\end{lemma}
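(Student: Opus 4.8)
The plan is to read this off from Lemma~\ref{lem:vanillarobust} (Robustness) by taking all parameters uniform over $S$: set $\rho_j=\rho$, $L_j=L$ and $\lambda_j=\lambda$ for every $j$. With this choice one has $\rho_S=\rho|S|$, $\lambda_{S^c}=\sum_{j\in S^c}\lambda_j=|S^c|\lambda$, and $L_j/\rho_S=\kappa/(\rho|S|)$ with $\kappa=L/\rho\ge 1$. The convexity assumptions, the Hessian sandwich $\rho\bI\preceq\nabla^2 F_j\preceq L\bI$ on $B(\btheta^*,M)$, and the two subgradient--closeness bounds (with $\zeta_S$ and the $\zeta_j$) are all assumed here verbatim, so the only hypothesis of Lemma~\ref{lem:vanillarobust} that still needs to be checked is the two--sided bound on $\lambda$.

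For the upper side I would note that $\norm{\nabla F_j(\btheta^*)}_2+\zeta_j\ge 0$, hence $\lambda<LM\le\norm{\nabla F_j(\btheta^*)}_2+\zeta_j+LM$ for every $j\in S$. For the lower side I would bound the three summands of the left endpoint separately: $\norm{\nabla F_j(\btheta^*)}_2+\zeta_j\le\eta$; using $\norm{\nabla F_k(\btheta^*)}_2\le\eta-\zeta_k$ together with the hypothesis $\zeta_S\le\sum_{k\in S}\zeta_k$ gives $\norm{\sum_{k\in S}\nabla F_k(\btheta^*)}_2+\zeta_S\le\sum_{k\in S}\norm{\nabla F_k(\btheta^*)}_2+\zeta_S\le|S|\eta$, so that $\frac{2L}{\rho_S}\big(\norm{\sum_{k\in S}\nabla F_k(\btheta^*)}_2+\zeta_S\big)\le 2\kappa\eta$; and $\frac{L\lambda_{S^c}}{\rho_S}=\kappa\frac{|S^c|}{|S|}\lambda\le\kappa\varepsilon\lambda$. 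Summing these and using $\eta\le\kappa\eta$, the left endpoint is at most $3\kappa\eta+\kappa\varepsilon\lambda$, which is strictly smaller than $\lambda$ precisely because the hypothesis $\lambda>3\kappa\eta/(1-\kappa\varepsilon)$ rearranges to $3\kappa\eta+\kappa\varepsilon\lambda<\lambda$.

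Once the hypotheses of Lemma~\ref{lem:vanillarobust} are in place, its conclusion immediately gives $\widehat{\btheta}_j=\widehat{\bbeta}$ for $j\in S$, $\norm{\widehat{\btheta}_S-\btheta^*}_2\le(\norm{\sum_{k\in S}\nabla F_k(\btheta^*)}_2+\zeta_S)/(\rho|S|)$, and $\norm{\widehat{\bbeta}-\widehat{\btheta}_S}_2\le(\lambda_{S^c}+\zeta_S)/(\rho|S|)$, where $\widehat{\btheta}_S\in\argmin f_S$ with $f_S=\sum_{j\in S}f_j\square(\lambda\norm{\cdot}_2)$. Since the proof of Lemma~\ref{lem:vanilladp} (on which Lemma~\ref{lem:vanillarobust} rests) establishes $\argmin f_S=\argmin g_S$, I can identify $\widehat{\btheta}_S$ with $\widetilde{\btheta}_S$, which turns these into the two displayed bounds of the lemma; the remaining inequality $(\lambda_{S^c}+\zeta_S)/(\rho|S|)\le\varepsilon\lambda/\rho+\zeta_S/(\rho|S|)$ is just $\lambda_{S^c}=|S^c|\lambda\le\varepsilon|S|\lambda$. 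The only step requiring any care---and the one responsible for the factor $3$ in the hypothesis rather than a cruder $5$---is the collapse of $\norm{\sum_{k\in S}\nabla F_k(\btheta^*)}_2+\zeta_S$ down to $|S|\eta$, which genuinely uses the per--task refinement $\norm{\nabla F_k(\btheta^*)}_2\le\eta-\zeta_k$ and the assumption $\zeta_S\le\sum_{k\in S}\zeta_k$; everything else is bookkeeping.
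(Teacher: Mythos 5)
Your proposal is correct and follows essentially the same route as the paper: verify the hypotheses of Lemma~\ref{lem:vanillarobust} with uniform parameters $\rho_j=\rho$, $L_j=L$, $\lambda_j=\lambda$, bounding the left endpoint by $3\kappa\eta+\kappa\varepsilon\lambda<\lambda$ exactly as the paper does (including the collapse $\norm{\sum_{k\in S}\nabla F_k(\btheta^*)}_2+\zeta_S\le|S|\eta$ via $\zeta_S\le\sum_{k\in S}\zeta_k$), and then identify $\widehat{\btheta}_S$ with $\widetilde{\btheta}_S$ through $\argmin f_S=\argmin g_S$ from the proof of Lemma~\ref{lem:vanilladp}. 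No gaps.
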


\begin{proof}[Proof of Lemma \ref{lem:vanillaprep}]
	From the assumption $\lambda > \frac{3 \kappa \eta}{1 - \kappa \varepsilon}$ we get
	$\lambda > 3 \kappa \eta + \kappa \varepsilon \lambda$ and for all $j \in S$, 
	\begin{align*}
	&\;\;\;\;\;\norm{\nabla F_j(\btheta^*)}_2 + \zeta_j + \frac{2 L}{\rho \left|S\right|}
	\left(\bignorm{\sum_{k \in S} \nabla F_k(\btheta^*)}_2 + \zeta_S \right) +
	\frac{L \lambda \left|S^c\right|}{\rho \left|S\right|} \\ & \le
	\eta + 2 \kappa \eta + \kappa \varepsilon \lambda \le 3 \kappa \eta + \kappa \varepsilon \lambda
	< \lambda = \lambda_j.
	\end{align*}
	Note that $\lambda_j = \lambda \le LM$ for all $j \in S$. The proof is finished by Lemma
	\ref{lem:vanillarobust} and its proof. Note that $\mathop{\mathrm{argmin}}_{\btheta \in \RR^d}
	\sum_{j \in S} f_j \square \left(\lambda_j \norm{\cdot}_2\right)(\btheta) =
	\mathop{\mathrm{argmin}}_{\btheta \in \RR^d} \sum_{j \in S} f_j(\btheta)$. 
\end{proof}

\section{PROOF OF SECTION \ref{sec:theory}} \label{sec:prooftheory}

\subsection{Proof of Theorem \ref{thm:statperson}}
The results follow immediately from Theorem \ref{thm:detperson} and Corollary \ref{cor:zetajmax}.

\subsection{Proof of Theorem \ref{thm:statvanilla}}
Take populations risks as $\{F_j\}_{j=1}^m$. By assumptions, $\rho, L, M \asymp 1$ and $\frac{\varepsilon}{1 - \varepsilon}
\lesssim 1$. Then, by Theorem \ref{thm:detvanilla}, there exist positive constants $\{C_i\}_{i=0}^2$
such that when  
$$
C_1 \max_{j \in S} \zeta_j < \lambda < C_2,
$$
we have for all $j \in S$,  
$$
\norm{\widehat{\btheta}_j - \btheta^*_j}_2 \le C_0 \left(\frac{\zeta_S}{m} + \min
\left\{\delta, \lambda\right\} + \varepsilon \lambda \right).
$$
Thus, we are to determine the order of $\max_{j \in S} \{\zeta_j\}$ and $\zeta_S$. 
Let $\bl: \RR^d \times \cX \rightarrow \RR^d$ be such that for every $\btheta \in
\RR^d$, $\bl(\btheta, \bxi) \in \partial_{\btheta} \ell(\btheta, \bxi)$. For any
$\{\cD_j\}_{j=1}^m$, denote $\bbf_j(\btheta) =
\frac{1}{n}\sum_{i=1}^{n} \bl(\btheta, \bxi_{ji})$ for all $j \in [m]$, and $\bF(\btheta) =
\sum_{j=1}^{m} \bbf_j(\btheta)$. By Corollary \ref{cor:zetajmax}, for some positive constants $c_1$
and $c_2$, the following holds with probability at least $1 - c_1 n^{-d}$: 
$$
\sup_{\btheta \in B(\btheta^*, r)} \norm{\bbf_j(\btheta) - \mathbb{E} \bbf_j(\btheta)
}_2 \le c_2 \sigma \sqrt{\frac{d \log n + \log m}{n}}, \;\;\;\;\;\; \forall j \in S. 
$$
By Corollary \ref{cor:zetasorder}, for some positive constants $c_3$ and $c_4$, the following holds with probability at
least $1 - c_3(m n)^{-d}$: 
$$
\sup_{\btheta \in B(\btheta^*, r)} \norm{\bF(\btheta) - \mathbb{E} \bF(\btheta)
}_2 \le c_4 \sigma \sqrt{\frac{dm \log m n}{n}}.
$$
When $1 \le \left(n^d\right)^{m - 1}m^{m - d}$, we have $\zeta_S \le m \cdot \max_{j \in S}
\zeta_j$. Theorem \ref{thm:statperson} applied to the tasks in $S^c$ yields
$$
\norm{\widehat{\btheta}_j - \btheta^*_j}_2 \le C_0 \lambda, \;\;\;\;\;\; \forall j \in S^c.
$$
The relation between $\widehat{\btheta}_j$ and $\mathop{\mathrm{argmin}}_{\btheta \in
	\RR^d} \{\sum_{j=1}^{m} f_j(\btheta)\}$ can be derived from Lemma \ref{thm:detvanilla}. We finish
the proof by taking union bounds and redefining the constants. 

\subsection{Supporting Lemmas for Section \ref{sec:theory}}
\begin{lemma}[Uniform First-Order Condition] \label{lem:ufoc}
	Define $R(\btheta) = \sum_{k=1}^{K} \left[\bl(\btheta, \bxi_k) - \mathbb{E} \bl(\btheta,
	\bxi_k)\right]$, where $\{\bxi_k\}_{k=1}^K$ are independent and $\bl$ satisfies Assumptions
	\ref{assump:conc}, \ref{assump:reg} and \ref{assump:smooth}. Choose some
	constant $0 < r < \infty$. Then, there exist constants $c_1, \, c_2 > 0$ such that with probability
	at least $1 - c_1 K^{-d}$, 
	\begin{equation} \label{eq:ufoc}
	\sup_{\btheta \in B(\btheta_0, r)} \norm{R(\btheta)}_2 \le c_2 \sigma \sqrt{d K
		\log K},\;\;\;\; \forall \btheta_0 \in \RR^d.
	\end{equation}
\end{lemma}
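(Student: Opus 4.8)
The plan is to view $R$ as the empirical process of the (typically discontinuous) subgradient field $\btheta\mapsto\bl(\btheta,\cdot)$ and to establish a maximal inequality for it over $B(\btheta_0,r)\times\SSS^{d-1}$ by a net/discretization argument. Fix $\btheta_0\in\RR^d$; since Assumptions \ref{assump:conc} and \ref{assump:smooth} are essentially uniform in $\btheta$, the constants below do not depend on $\btheta_0$. First I reduce the vector bound to a scalar one: for a fixed $1/2$-net $\cN_{\bv}$ of $\SSS^{d-1}$ with $|\cN_{\bv}|\le 5^d$ we have $\norm{R(\btheta)}_2\le 2\max_{\bv\in\cN_{\bv}}\bv^\top R(\btheta)$, so it suffices to bound $\sup_{\btheta\in B(\btheta_0,r)}\max_{\bv\in\cN_{\bv}}\bv^\top R(\btheta)$.

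For the pointwise estimate, fix $\btheta$ and $\bv$: then $\bv^\top R(\btheta)=\sum_{k=1}^{K}\bv^\top(\bl(\btheta,\bxi_k)-\EE\bl(\btheta,\bxi_k))$ is a sum of $K$ independent mean-zero summands, each sub-Gaussian with parameter $O(\sigma)$ by Assumption \ref{assump:conc}, hence sub-Gaussian with parameter $\lesssim\sigma\sqrt{K}$ and $\PP(|\bv^\top R(\btheta)|>t)\le 2\exp(-ct^2/(\sigma^2 K))$. I then take an $\epsilon$-net $\cN_{\btheta}$ of $B(\btheta_0,r)$ with $\epsilon$ a fixed negative power of $K$, so that $\log(|\cN_{\btheta}||\cN_{\bv}|)\asymp d\log K$; a union bound over $\cN_{\btheta}\times\cN_{\bv}$ gives $\max_{\btheta\in\cN_{\btheta}}\max_{\bv\in\cN_{\bv}}|\bv^\top R(\btheta)|\le c_2\sigma\sqrt{dK\log K}$ with probability at least $1-c_1 K^{-d}$, which is exactly the target rate.

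The remaining, and crucial, step is the discretization error: passing from $\cN_{\btheta}$ to all of $B(\btheta_0,r)$ should cost at most the same order. Under case~1 of Assumption \ref{assump:smooth}, for $\btheta$ within $\epsilon$ of its nearest net point $\btheta'$ one has $|\bv^\top(\bl(\btheta,\bxi_k)-\bl(\btheta',\bxi_k))|\le\epsilon\,U(\bxi_k,\bv,\btheta,\btheta')$ and $|\bv^\top\EE(\bl(\btheta,\bxi_k)-\bl(\btheta',\bxi_k))|\le\epsilon\,d^{c_2}$, so $|\bv^\top(R(\btheta)-R(\btheta'))|\le\epsilon\sum_{k=1}^{K}U(\bxi_k,\bv,\btheta,\btheta')+\epsilon K d^{c_2}$; the last term is negligible once $\epsilon$ is a small power of $K$, while the exponential-moment bound $\EE\exp(t_0 U)\le C$ yields sub-exponential concentration of $\sum_k U$ around its $O(K)$ mean (uniformly over the nets, with an additional discretization within each cell to absorb the continuum in $\btheta$), making $\epsilon\sum_k U$ of lower order as well. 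Under case~2 of Assumption \ref{assump:smooth}, I instead use a net at scale $n^{-Z}$ with $Z$ large relative to $d$ and $c_3$: the local oscillation $\sup_{\btheta'\in B(\btheta,n^{-Z})}\norm{\bl(\btheta,\bxi_k)-\bl(\btheta',\bxi_k)}_2$ has fourth moment at most $d^{c_2}n^{-c_3 Z}$, so a fourth-moment Markov bound on $\sum_k(\cdot)$ together with a union bound over that net shows the total oscillation is $\ll\sigma\sqrt{dK\log K}$ on the same event. Combining the pointwise and discretization estimates and redefining constants gives \eqref{eq:ufoc}.

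I expect the discretization step to be the main obstacle. Since $\bl$ is only a measurable selection from the subdifferential it is generally discontinuous, so the usual Lipschitz control of empirical-process increments is unavailable and one must lean entirely on Assumption \ref{assump:smooth} — either the Lipschitz-in-probability structure with sub-exponential local constant (case~1) or the fourth-moment decay of local oscillations at scale $n^{-Z}$ (case~2). The delicate point is to choose the discretization scale small enough to kill the oscillation yet coarse enough to keep the union-bound entropy at $O(d\log K)$ so that the $\sqrt{\log K}$ factor is not inflated, all while carrying the sphere-net through the argument because the bounds in Assumption \ref{assump:smooth} are stated per direction $\bv$.
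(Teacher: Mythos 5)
Your proposal follows essentially the same route as the paper's proof: a pointwise sub-Gaussian tail bound for $R(\btheta)$, a union bound over an $\varepsilon$-net of $B(\btheta_0, r)$ at a polynomial-in-$K$ scale so that the entropy stays $O(d \log K)$, and control of the within-cell oscillation of $R$ via Assumption \ref{assump:smooth}. The only differences are cosmetic: the paper bounds $\norm{R(\btheta)}_2$ directly using the sub-Gaussian vector concentration of \cite{HKZ12} instead of your sphere-net scalarization, and it delegates the oscillation/increment step to the proof of Proposition 3.4 of \cite{CLZ21}, which is precisely the case-by-case argument you sketch.
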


\begin{proof}[Proof of Lemma \ref{lem:ufoc}]
	By assumption, $\bl(\btheta, \bxi_k)$ is subgaussian for all $\btheta$ and $k$ and
	$\norm{\bl(\btheta, \bxi_k)}_{\psi_2} \le \sigma$. Thus, $R(\btheta)$
	is the sum of $K$ independent centered subgaussian random vectors and $\norm{R(\btheta)}_{\psi_2}
	\lesssim \sigma \sqrt{K}$. By Theorem 2.1 of \cite{HKZ12}, for some $c > 0$,   
	$$
	\mathbb{P} \left[ \norm{R(\btheta)}_2^2 > c^2K\sigma^2 \left(d + 2 \sqrt{dt} + 2t \right)
	\right] \le e^{-t}, \;\;\;\; \forall t \ge 0. 
	$$
	Since $2 \sqrt{dt} \le d + t$, 
	$$
	\mathbb{P} \left[ \norm{R(\btheta)}_2 > c \sigma \sqrt{K(d+t)}\, \right] \le e^{-t}, \;\;\;\; \forall t \ge 0. 
	$$
	Similar to the proof of Lemma 5.2 in \cite{V12}, $\forall \varepsilon > 0$, an $\varepsilon$-net
	$N_\varepsilon$ over $B(\btheta_0, r)$ satisfies
	$$
	\left|N_\varepsilon\right| \le \left(1 + \frac{2r}{\varepsilon}\right)^d.
	$$
	By union bounds,
	$$
	\mathbb{P} \left[\max_{\btheta \in N_\varepsilon} \norm{R(\btheta)}_2 \le c \sigma
	\sqrt{K(d+t)}\, \right] \ge 1 - \left(1 + \frac{2r}{\varepsilon}\right)^d e^{-t}, \;\;\;\; \forall t \ge 0. 
	$$
	Let $t = d \log K$, $\varepsilon = r\sqrt{K \log K}$. We have
	$$
	\left(1 + \frac{2r}{\varepsilon}\right)^d e^{-t} = \left(1 + \frac{2}{\sqrt{K \log K}}\right)^d K
	^{-d} \le cK^{-d}.
	$$
	Thus,
	$$
	\mathbb{P} \left[\max_{\btheta \in N_\varepsilon} \norm{R(\btheta)}_2 \le c_2\sigma
	\sqrt{d K \log K}\, \right] \ge 1 - c_1 K^{-d}. 
	$$
	By the proof of Proposition 3.4 in \cite{CLZ21}, with probability at least $1 - c_1 K^{-d}$, 
	$$
	\norm{R(\btheta_1) - R(\btheta_2)}_2 \le c_3 \sqrt{d K \log K},
	$$
	for any $\btheta_1 \in B(\btheta_0, r)$, $\btheta_2 \in N_\varepsilon$ such that $\norm{\btheta_1 -
		\btheta_2}_2 \le \varepsilon$. Taking union bounds over the two events, we have, with probability at
	least $1 - c_1 K^{-d}$,
	$$
	\sup_{\btheta \in B(\btheta_0, r)} \norm{R(\btheta)}_2 \le c_2 \sigma \sqrt{d K \log K}
	$$
	for some $c_1, c_2 > 0$. 
\end{proof}

\begin{corollary}[Maximum of $\{\zeta_j\}_{j \in S}$] \label{cor:zetajmax}
	Choose some constant $0 < r < \infty$. There exist positive constants $c_1$ and $c_2$ such that,
	with probability at least $1 - c_1	n^{-d}$, 
	$$
	\sup_{\btheta \in B(\btheta^*, r)} \norm{\bbf_j(\btheta) - \mathbb{E} \bbf_j(\btheta)
	}_2 \le c_2 \sigma \sqrt{\frac{d \log n  + \log m}{n}}, \;\;\;\;\;\; \forall j \in S. 
	$$
\end{corollary}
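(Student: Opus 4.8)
The plan is to re-run the proof of Lemma~\ref{lem:ufoc} with $K=n$ while carrying an additional $\log m$ in the tail parameter, so that a union bound over the (at most $m$) tasks $j\in S$ is absorbed at no cost to the probability. Write $\bbf_j(\btheta)-\mathbb{E}\bbf_j(\btheta)=\frac1n R_j(\btheta)$ with $R_j(\btheta)=\sum_{i=1}^n[\bl(\btheta,\bxi_{ji})-\mathbb{E}\bl(\btheta,\bxi_{ji})]$. By Assumption~\ref{assump:conc}, for each fixed $\btheta$ and $j$ the vector $R_j(\btheta)$ is a sum of $n$ independent centered subgaussian vectors with $\norm{R_j(\btheta)}_{\psi_2}\lesssim\sigma\sqrt n$, so Theorem~2.1 of \cite{HKZ12} gives $\mathbb{P}[\norm{R_j(\btheta)}_2>c\sigma\sqrt{n(d+t)}]\le e^{-t}$ for all $t\ge0$.

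First I would fix an $\varepsilon$-net $N_\varepsilon$ of $B(\btheta^*,r)$ with $|N_\varepsilon|\le(1+2r/\varepsilon)^d$ and union-bound over the $|N_\varepsilon|\cdot|S|\le(1+2r/\varepsilon)^d m$ pairs $(\btheta,j)$, taking $t=d\log n+\log m$ and $\varepsilon=r\sqrt{n\log n}$. Then $(1+2r/\varepsilon)^d m\,e^{-t}=(1+2/\sqrt{n\log n})^d n^{-d}\le c\,n^{-d}$, so with probability at least $1-c_1 n^{-d}$ one has $\max_{j\in S}\max_{\btheta\in N_\varepsilon}\norm{R_j(\btheta)}_2\le c\sigma\sqrt{n(d\log n+\log m)}$, where I use $d+t\lesssim d\log n+\log m$ (valid once $n$ exceeds a small absolute constant; otherwise the claim is vacuous after adjusting the constants).

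Next I would pass from the net to all of $B(\btheta^*,r)$ via the oscillation estimate underlying Proposition~3.4 of \cite{CLZ21}, which under Assumption~\ref{assump:smooth} gives, for each $j$ and hence for all $j\in S$ after a further union bound absorbed into $c_1 n^{-d}$, that $\norm{R_j(\btheta_1)-R_j(\btheta_2)}_2\le c_3\sqrt{dn\log n}\le c_3\sqrt{n(d\log n+\log m)}$ whenever $\btheta_1\in B(\btheta^*,r)$, $\btheta_2\in N_\varepsilon$ and $\norm{\btheta_1-\btheta_2}_2\le\varepsilon$. Adding the two estimates and dividing by $n$ then yields $\sup_{\btheta\in B(\btheta^*,r)}\norm{\bbf_j(\btheta)-\mathbb{E}\bbf_j(\btheta)}_2\le c_2\sigma\sqrt{(d\log n+\log m)/n}$ simultaneously over $j\in S$.

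The only real obstacle is bookkeeping: each of the two randomness sources, namely the net-level deviation bound and the \cite{CLZ21}-type oscillation bound, must be driven to level $\asymp m^{-1}n^{-d}$ before the union over $j\in S$, and one must check that inflating the net's tail parameter by $\log m$ only adds the term $\log m$ to the final rate rather than a multiplicative factor. Everything else is identical to the proof of Lemma~\ref{lem:ufoc}.
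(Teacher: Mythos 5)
Your proposal is correct and follows essentially the same route as the paper, which proves Corollary \ref{cor:zetajmax} by re-running the argument of Lemma \ref{lem:ufoc} with $K=n$ and tail parameter $t = d\log n + \log |S|$, taking union bounds over the net and the tasks in $S$, and dividing by $n$. Your extra bookkeeping about driving each deviation and oscillation bound to level $\asymp m^{-1}n^{-d}$ before the union over $j \in S$ is exactly what the paper's brief proof implicitly relies on.
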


\begin{proof}[Proof of Corollary \ref{cor:zetajmax}]
	The proof is almost identical to the proof for Lemma \ref{lem:ufoc}. We can set $K = n$, $t = d
	\log n + \log \left|S\right|$ for all tasks in $S$. Taking union bounds and dividing by $n$ on
	both sides yield the result.  
\end{proof}

\begin{corollary}[Order of $\zeta_S$] \label{cor:zetasorder}
	Choose some constant $0 < r < \infty$. There exist positive constants $c_1$ and $c_2$ such that,
	with probability at least $1 - c_1	(mn)^{-d}$, 
	$$
	\sup_{\btheta \in B(\btheta^*, r)} \norm{\bF(\btheta) - \mathbb{E} \bF(\btheta)
	}_2 \le c_2 \sigma \sqrt{\frac{d m \log mn}{n}}. 
	$$
\end{corollary}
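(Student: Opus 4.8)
The plan is to reduce the claim to Lemma~\ref{lem:ufoc} by pooling all $mn$ observations into a single collection. Observe that, since $\bF = \sum_{j=1}^m \bbf_j$ and $\bbf_j(\btheta) = \frac1n\sum_{i=1}^n \bl(\btheta,\bxi_{ji})$,
$$
\bF(\btheta) - \mathbb{E}\bF(\btheta) = \frac{1}{n}\sum_{j=1}^m\sum_{i=1}^n \big( \bl(\btheta,\bxi_{ji}) - \mathbb{E}\bl(\btheta,\bxi_{ji})\big).
$$
Set $R(\btheta) = \sum_{(j,i)\in[m]\times[n]} \big( \bl(\btheta,\bxi_{ji}) - \mathbb{E}\bl(\btheta,\bxi_{ji})\big)$, so that $\bF - \mathbb{E}\bF = R/n$, where $R$ is a sum of $K := mn$ independent (but not identically distributed) centered random vectors; independence holds jointly across the sample index $i$ and the task index $j$ because the datasets $\cD_1,\dots,\cD_m$ are drawn independently.

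Next I would verify that Lemma~\ref{lem:ufoc} applies to this pooled sum. Assumption~\ref{assump:conc} gives $\norm{\bl(\btheta,\bxi_{ji})}_{\psi_2}\le\sigma$ uniformly over $\btheta$ and over $(j,i)$, and Assumptions~\ref{assump:reg} and~\ref{assump:smooth} are postulated uniformly in $j\in[m]$; hence all hypotheses of Lemma~\ref{lem:ufoc} are met with $K=mn$ and with the common measurable subgradient selection $\bl(\cdot,\cdot)\in\partial_\btheta\ell(\cdot,\cdot)$. Invoking the lemma with $\btheta_0=\btheta^*$ and the fixed radius $r$ yields, with probability at least $1 - c_1 K^{-d} = 1 - c_1(mn)^{-d}$,
$$
\sup_{\btheta\in B(\btheta^*, r)} \norm{R(\btheta)}_2 \le c_2\sigma\sqrt{d K \log K} = c_2\sigma\sqrt{d\, mn\,\log(mn)}.
$$
Dividing through by $n$ gives
$$
\sup_{\btheta\in B(\btheta^*, r)} \norm{\bF(\btheta) - \mathbb{E}\bF(\btheta)}_2 \le \frac{c_2\sigma}{n}\sqrt{d\, mn\,\log(mn)} = c_2\sigma\sqrt{\frac{d m\log(mn)}{n}},
$$
which is the claimed bound after relabeling constants.

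I do not expect a substantive obstacle: this is essentially the argument for Corollary~\ref{cor:zetajmax}, the only difference being that the $\varepsilon$-net plus discretization-error step of Lemma~\ref{lem:ufoc} is applied to the aggregated sample of size $mn$ rather than to each task's sample of size $n$, so no extra union bound over $j\in S$ is taken — which is precisely why the failure probability is $(mn)^{-d}$ and the logarithmic factor is $\log(mn)$ instead of $d\log n + \log m$. The only care needed is bookkeeping: confirming that one and the same selection $\bl$ underlies $\bbf_j$ for every $j$ so that $R$ is genuinely a pooled sum of the required form, and that the $\psi_2$-bound in Assumption~\ref{assump:conc} is uniform in $\btheta$ so that Lemma~\ref{lem:ufoc} can be quoted verbatim.
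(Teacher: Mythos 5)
Your proposal is correct and follows essentially the same route as the paper: the paper's proof likewise quotes Lemma~\ref{lem:ufoc} on the pooled sample (taking $K = |S| n$ rather than your $K = mn$, an immaterial difference since $|S| \asymp m$) and divides by $n$. Your extra bookkeeping about independence across tasks, the uniform $\psi_2$ bound, and the common subgradient selection $\bl$ is exactly the implicit content of the paper's one-line argument.
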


\begin{proof}[Proof of Corollary \ref{cor:zetasorder}]
	The proof is almost identical to the proof for Lemma \ref{lem:ufoc}. Setting $K =
	\left|S\right|n$ and dividing by $n$ on both sides yield the result. 
\end{proof}

\section{TECHNICAL LEMMAS}
\begin{lemma}\label{lem:tech1}
	Let $F, f: \RR^d \rightarrow \RR$ be convex. Denote $\bx^* \in \mathop{\mathrm{argmin}}_{\bx
		\in \RR^d} F(x)$ and $\widetilde{\bx} \in$ $\mathop{\mathrm{argmin}}_{\bx
		\in \RR^d} f(x)$. Suppose there exist $\bx_0 \in \RR^d$, $\bG_0 \in
	\partial F(\bx_0)$, $0 < r, \rho < \infty$ and $\zeta \ge 0$ such that $\norm{\bG_0}_2 \le r
	\rho / 2 - \zeta$,
	$$
	F(\bx) - F(\bx_0) \ge \inner{\bG_0}{\bx - \bx_0} + \frac{\rho}{2} \norm{\bx - \bx_0}_2^2,
	\;\;\;\;\;\;\forall \bx \in B(\bx_0, r),
	$$
	and
	$$
	\sup_{\bx \in B(\bx_0, r)} \left\{ \sup_{\bg \in \partial f(\bx), \bG \in \partial F(\bx)} \norm{\bg -
		\bG}_2\right\} \le \zeta.
	$$
	Then,
	$$
	\norm{\bx_0 - \bx^*}_2 \le \frac{2\norm{\bG_0}_2}{\rho} \;\;\;\;\;\; \text{and}
	\;\;\;\;\;\; \norm{\bx_0 - \widetilde{\bx}}_2 \le \frac{2\norm{\bG_0}_2}{\rho} + \frac{2\zeta}{\rho}. 
	$$
	Furthermore, if $\nabla^2 F(\bx) \succeq \rho \bI$ for all $\bx \in B(\bx_0,
	r)$, then $\bx^*$ is unique, and we have
	$$
	\norm{\bx_0 - \bx^*}_2 \le \frac{\norm{\nabla F(\bx_0)}_2}{\rho} \;\;\;\;\;\; \text{and}
	\;\;\;\;\;\; \norm{\bx_0 - \widetilde{\bx}}_2 \le \frac{\norm{\nabla F(\bx_0)}_2}{\rho} + \frac{\zeta}{\rho}. 
	$$
\end{lemma}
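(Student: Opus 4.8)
The plan is a \emph{localize-then-estimate} argument: first control $\bx^*$ using only the surrogate $F$, then transport the relevant estimate to the nonsmooth $f$ to control $\widetilde{\bx}$, and finally sharpen both bounds under the Hessian lower bound via strong monotonicity of $\nabla F$. First I would localize $\bx^*$. Set $r_0 = 2\norm{\bG_0}_2/\rho$, which satisfies $r_0 \le r$ by the hypothesis $\norm{\bG_0}_2 \le r\rho/2 - \zeta$. If $\bG_0 = \zero$, the displayed quadratic lower bound makes $\bx_0$ the strict minimizer of $F$ on $B(\bx_0, r)$, hence its unique global minimizer, and there is nothing more to prove. If $\bG_0 \neq \zero$, then evaluating the lower bound at $\bx_0 - t\bG_0$ for small $t > 0$ shows $\bx_0$ is not a global minimizer, so $F(\bx^*) < F(\bx_0)$; if we had $\norm{\bx^* - \bx_0}_2 > r_0$, then the point $\by$ at which the segment $[\bx_0, \bx^*]$ meets the sphere of radius $r_0$ would satisfy $F(\by) < F(\bx_0)$ by convexity, while the quadratic lower bound forces $F(\by) - F(\bx_0) \ge -\norm{\bG_0}_2 r_0 + \tfrac{\rho}{2} r_0^2 = 0$ --- a contradiction. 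Hence $\norm{\bx_0 - \bx^*}_2 \le r_0 = 2\norm{\bG_0}_2/\rho$.

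Next I would transport the curvature of $F$ to $f$. The key claim is that for every $\bx \in B(\bx_0, r)$,
\[
f(\bx) - f(\bx_0) \ \ge\ \inner{\bG_0}{\bx - \bx_0} + \tfrac{\rho}{2}\norm{\bx - \bx_0}_2^2 - \zeta \norm{\bx - \bx_0}_2 .
\]
To prove it, write $f(\bx) - f(\bx_0)$ and $F(\bx) - F(\bx_0)$ as integrals over $s \in [0,1]$ of the derivative of $s \mapsto f(\bx_0 + s(\bx - \bx_0))$ and $s \mapsto F(\bx_0 + s(\bx - \bx_0))$ respectively (these derivatives exist a.e.\ and recover $f$, $F$ by absolute continuity of finite convex functions). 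Each such derivative equals the pairing of $\bx - \bx_0$ with some subgradient at the intermediate point, which lies in $B(\bx_0, r)$, so the third hypothesis gives $\bigl| [f(\bx) - f(\bx_0)] - [F(\bx) - F(\bx_0)] \bigr| \le \zeta \norm{\bx - \bx_0}_2$; combining with the quadratic lower bound on $F$ yields the claim. Now set $r_1 = 2(\norm{\bG_0}_2 + \zeta)/\rho$, which satisfies $r_1 \le r$, and rerun the trapping argument above with $f$ and this inequality in place of $F$ and the quadratic lower bound (the degenerate case $r_1 = 0$ being handled directly as before); this gives $\norm{\bx_0 - \widetilde{\bx}}_2 \le r_1 = 2\norm{\bG_0}_2/\rho + 2\zeta/\rho$.

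Finally, under $\nabla^2 F \succeq \rho \bI$ on $B(\bx_0, r)$ we have $\bG_0 = \nabla F(\bx_0)$, and the previous two steps already place both $\bx^*$ and $\widetilde{\bx}$ in $B(\bx_0, r)$, where $F$ is strictly convex, so $\bx^*$ is unique. Integrating the Hessian bound along segments in $B(\bx_0, r)$ gives $\inner{\nabla F(\bx) - \nabla F(\by)}{\bx - \by} \ge \rho \norm{\bx - \by}_2^2$ for all $\bx, \by \in B(\bx_0, r)$. Applying this with $\bx = \bx^*$ (so $\nabla F(\bx^*) = \zero$) and $\by = \bx_0$, then using Cauchy--Schwarz, yields $\norm{\bx_0 - \bx^*}_2 \le \norm{\nabla F(\bx_0)}_2/\rho$. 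For $\widetilde{\bx}$, since $\zero \in \partial f(\widetilde{\bx})$ and $\widetilde{\bx} \in B(\bx_0, r)$, the third hypothesis forces $\norm{\nabla F(\widetilde{\bx})}_2 \le \zeta$; applying the monotonicity inequality with $\bx = \widetilde{\bx}$, $\by = \bx_0$ gives $\norm{\bx_0 - \widetilde{\bx}}_2 \le \bigl(\norm{\nabla F(\widetilde{\bx})}_2 + \norm{\nabla F(\bx_0)}_2\bigr)/\rho \le \norm{\nabla F(\bx_0)}_2/\rho + \zeta/\rho$.

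The only genuinely non-routine point is the transport inequality of the second step: showing that the one-sided quadratic lower bound enjoyed by the smooth surrogate $F$ survives the passage to the nonsmooth $f$, at the cost of only a first-order slack $\zeta\norm{\cdot}_2$. Everything else is convexity bookkeeping, the one subtlety being the boundary and degenerate cases ($\bG_0 = \zero$, or $\norm{\bG_0}_2 + \zeta = 0$), which all reduce to $\bx_0$ itself being a minimizer and are dispatched on the spot.
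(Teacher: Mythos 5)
Your overall route is the same as the paper's: transport the quadratic growth of $F$ to $f$ by integrating subgradients along segments (paying a slack $\zeta\norm{\bx-\bx_0}_2$), trap the minimizers in balls of radius $2\norm{\bG_0}_2/\rho$ and $2(\norm{\bG_0}_2+\zeta)/\rho$, and then sharpen both bounds via strong monotonicity of $\nabla F$ together with $\norm{\nabla F(\widetilde{\bx})}_2\le\zeta$ (which follows from $\zero\in\partial f(\widetilde{\bx})$ and the sup hypothesis). Steps 2 and 3 are fine and essentially identical to the paper's argument.

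There is, however, a genuine flaw in your localization step. You claim that when $\bG_0\neq\zero$, ``evaluating the lower bound at $\bx_0-t\bG_0$ for small $t>0$ shows $\bx_0$ is not a global minimizer,'' and you then use the resulting strict inequality $F(\bx^*)<F(\bx_0)$ to get $F(\by)<F(\bx_0)$ at the sphere of radius $r_0$. This is a non sequitur: the hypothesis only gives a \emph{lower} bound on $F(\bx)-F(\bx_0)$, and a negative lower bound cannot certify that $F$ actually decreases in the direction $-\bG_0$. Worse, the claim itself can fail under the stated hypotheses: the sup condition only forces the diameter of $\partial F(\bx_0)$ to be at most $2\zeta$, so $\partial F(\bx_0)$ may contain both $\bG_0\neq\zero$ and $\zero$ (e.g.\ when $\norm{\bG_0}_2\le 2\zeta$), in which case $\bx_0$ \emph{is} a global minimizer, $F(\bx^*)=F(\bx_0)$, and your contradiction at radius exactly $r_0$ evaporates (convexity then only gives $F(\by)\le F(\bx_0)$ while the quadratic bound gives $F(\by)\ge F(\bx_0)$). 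The same gap recurs when you ``rerun'' the argument for $f$, since $\bx_0$ may likewise be a (non-unique) minimizer of $f$; the degenerate case you flag ($r_1=0$) is not the problematic one. The fix is minor and is what the paper does: from the (transported) quadratic bound, for any $\bx$ with $r_0<\norm{\bx-\bx_0}_2\le r$ one has $F(\bx)-F(\bx_0)\ge\frac{\rho}{2}\norm{\bx-\bx_0}_2\big(\norm{\bx-\bx_0}_2-r_0\big)>0$, so no minimizer lies in that annulus, and points with $\norm{\bx-\bx_0}_2>r$ are excluded by convexity through a point of the annulus; no comparison of $F(\bx^*)$ with $F(\bx_0)$ is needed. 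Equivalently, run your sphere-intersection argument at a radius strictly between $r_0$ and $\min\{r,\norm{\bx^*-\bx_0}_2\}$, using only the non-strict $F(\bx^*)\le F(\bx_0)$.
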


\begin{proof}[Proof of Lemma \ref{lem:tech1}]
	Let $G(\bx)$, $g(\bx)$ be subgradients of $F(\bx)$, $f(x)$, respectively, and define $\bx_t = (1-t)\bx_0 + t\bx$. We have
	$$
	f(\bx) - f(\bx_0) = \inner{\bx - \bx_0}{\int_{0}^{1} g(\bx_t) dt} 
	$$
	$$
	F(\bx) - F(\bx_0) = \inner{\bx - \bx_0}{\int_{0}^{1} G(\bx_t) dt} 
	$$
	This yields
	\begin{align*}
	f(\bx) - f(\bx_0) &\ge F(\bx) - F(\bx_0) - \norm{\bx -
		\bx_0}_2 \bignorm{\int_{0}^{1} \left[ g(\bx_t) - G(\bx_t)\right] dt}_2 \\
	&\ge \frac{\rho}{2} \norm{\bx - \bx_0}_2^2 - \norm{\bG_0}_2 \norm{\bx - \bx_0}_2 - \norm{\bx -
		\bx_0}_2 \int_{0}^{1} \norm{g(\bx_t) - G(\bx_t)}_2 dt.
	\end{align*}
 When $\| \bx - \bx_0 \|_2 < r$,  
	$$
	\int_{0}^{1} \norm{g(\bx_t) - G(\bx_t)}_2 dt \le \sup_{\bx \in B(\bx_0, r)}
	\left\{ \sup_{\bg \in \partial f(\bx), \bG \in \partial F(\bx)} \norm{\bg - \bG}_2\right\} \le \zeta,
	$$
	we have
	\begin{align*}
	f(\bx) - f(\bx_0) &\ge \frac{\rho}{2} \norm{\bx - \bx_0}_2^2 - \left(\norm{\bG_0}_2 + \zeta
	\right)\norm{\bx - \bx_0}_2 \\ &\ge \frac{\rho}{2} \norm{\bx - \bx_0}_2 \left( \norm{\bx -
		\bx_0}_2 - \frac{2\norm{\bG_0}_2}{\rho} - \frac{2 \zeta}{\rho} \right), \;\;\;\; \forall \bx \in B(\bx_0, r).
	\end{align*}
	Hence $f(\bx) - f(\bx_0) > 0$ when $2 \norm{\bG_0}_2 / \rho + 2 \zeta / \rho < \norm{\bx
		- \bx_0}_2 \le r$. When $\norm{\bx - \bx_0}_2 > r$, there exists $\bz = (1-t)\bx_0 + t\bx$ for some
	$t \in (0,1)$ such that $\norm{\bz - \bx_0}_2 = r$. By $f(\bz) > f(\bx_0)$ and the convexity of $f$,
	we have
	$$
	f(\bx_0) < f(\bz) \le (1-t) f(\bx_0) + t f(\bx)
	$$
	and thus $f(\bx) > f(\bx_0)$. Therefore, $\mathop{\mathrm{argmin}}_{\bx \in \RR^d} f(\bx) \subseteq
	B(\bx_0, 2 \norm{\bG_0}_2 / \rho + 2 \zeta / \rho) $. By a similar argument, $\mathop{\mathrm{argmin}}_{\bx \in \RR^d} F(\bx)$
	$\subseteq B(\bx_0, 2 \norm{\bG_0}_2 / \rho)$.
	
	Now, suppose that $\nabla^2 F(\bx) \succeq \rho \bI$ for all $\bx \in B(\bx_0,
	r)$. From $\mathop{\mathrm{argmin}}_{\bx \in \RR^d} F(\bx) \subseteq$ $B(\bx_0, 2 \norm{\bG_0}_2 /
	\rho) \subseteq B(\bx_0, r)$ and the strong convexity of $F$ therein we get the uniqueness of
	$\bx^*$. Then, $\nabla F(\bx^*) = 0$, $\norm{\bx^* - \bx_0}_2 \le r$, and from 
	\begin{align*}
	\norm{\nabla F(\bx_0)}_2 &= \norm{\nabla F(\bx_0) - \nabla F(\bx^*)}_2 \\ &= \bignorm{\left(\int_{0}^{1} \nabla^2
		F[(1-t)\bx^* + t\bx_0]dt\right)(\bx_0 - \bx^*)}_2 \ge \rho \norm{\bx_0 - \bx^*}_2
	\end{align*}
	we have $\norm{\bx_0 - \bx^*}_2 \le \norm{\nabla F(\bx_0)}_2 / \rho$. 
	
	Finally, by the definition of $\widetilde{\bx}$, $\bm{0} \in \partial f(\widetilde{\bx})$. We have
	\begin{align*}
	\norm{\nabla F(\bx_0)}_2 &\ge \norm{\nabla F(\bx_0) - \nabla F(\widetilde{\bx})}_2 - \norm{\nabla
		F(\widetilde{\bx}) - \bm{0}}_2 \\ &\ge \bignorm{\left(\int_{0}^{1} \nabla^2
		F[(1-t)\widetilde{\bx} + t\bx_0]dt\right)(\bx_0 - \widetilde{\bx})}_2 - \sup_{\bg \in \partial
		f(\widetilde{\bx})} \norm{\bg - \nabla F(\widetilde{\bx})}_2 \\ &\ge \rho \norm{\bx_0 - \widetilde{\bx}}_2
	- \zeta,
	\end{align*}
	from which we have $\norm{\bx_0 - \widetilde{\bx}}_2 \le \norm{\nabla F(\bx_0)}_2 / \rho + \zeta /
	\rho$. We have completed the proof. 
\end{proof}

\begin{lemma}\label{lem:tech2}
	Let $F, f: \RR^d \rightarrow \RR$ be convex functions and $\bx^* = \mathop{\mathrm{argmin}}_{\bx \in \RR^d} F(\bx)$. Suppose $F$ is differentiable and 
	$$
	\rho \bI \preceq \nabla^2 F(\bx), \;\;\;\;\;\; \forall \bx \in B(\bx^*, r)
	$$
	holds for some $0 < \rho < \infty$ and $0 < r < \infty$. If, for some $\zeta
	\ge 0$,
	$$
	\sup_{\bx \in B(\bx^*, r)} \sup_{\bm{f} \in \partial f(\bx)} \norm{\bm{f} - \nabla F(\bx)}_2 
	\le \zeta,
	$$
	then 
	\begin{equation}\label{eq:derbound}
	\norm{\bm{f}}_2 \ge \rho \min\left\{ \norm{\bx - \bx^*}_2, r \right\} - \zeta,
	\;\;\;\;\;\; \forall \bm{f} \in \partial f(\bx).
	\end{equation}
	If $g: \RR^d \rightarrow \RR$ is convex and $\lambda$-Lipschitz for some $0 \le \lambda < \rho r -
	\zeta$, then all minimizers of $f(\bx) + g(\bx)$ belong to $B(\bx^*, \lambda / \rho + \zeta / \rho)$.
\end{lemma}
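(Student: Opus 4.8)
The plan is to first establish the pointwise subgradient lower bound \eqref{eq:derbound}, and then read off the localization of the minimizers of $f+g$ from it via the first-order optimality condition. For \eqref{eq:derbound} I would distinguish two regimes. When $\bx \in B(\bx^*, r)$: since $\bx^*$ is a global minimizer of the differentiable $F$, we have $\nabla F(\bx^*) = \zero$; because $\nabla^2 F \succeq \rho \bI$ on the convex set $B(\bx^*, r)$, integrating along the segment $[\bx^*, \bx]$ gives $\inner{\nabla F(\bx)}{\bx - \bx^*} \ge \rho \norm{\bx - \bx^*}_2^2$, hence $\norm{\nabla F(\bx)}_2 \ge \rho\norm{\bx - \bx^*}_2$ by Cauchy--Schwarz. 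Combining with the hypothesis $\norm{\bm{f} - \nabla F(\bx)}_2 \le \zeta$ and the triangle inequality yields $\norm{\bm{f}}_2 \ge \rho\norm{\bx - \bx^*}_2 - \zeta$, which is exactly \eqref{eq:derbound} here since $\min\{\norm{\bx - \bx^*}_2, r\} = \norm{\bx - \bx^*}_2$.

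The regime $\norm{\bx - \bx^*}_2 > r$ is the part I expect to require the most care, since the hypotheses only control $F$ and $f$ inside $B(\bx^*, r)$ and a generic $\bm{f}\in\partial f(\bx)$ must be pulled back to the sphere. I would take $\bz$ to be the point where the segment $[\bx^*, \bx]$ meets $\partial B(\bx^*, r)$, so that $\bx - \bz = c\,(\bz - \bx^*)$ for some $c > 0$. Monotonicity of the subdifferential of the convex function $f$ then gives $\inner{\bm{f} - \bm{f}_{\bz}}{\bx - \bz}\ge 0$, hence $\inner{\bm{f}}{\bz - \bx^*} \ge \inner{\bm{f}_{\bz}}{\bz - \bx^*}$ for any $\bm{f}_{\bz}\in\partial f(\bz)$. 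Applying the first-regime estimates at $\bz\in B(\bx^*, r)$ in inner-product form, $\inner{\nabla F(\bz)}{\bz - \bx^*}\ge\rho r^2$ and $\norm{\bm{f}_{\bz} - \nabla F(\bz)}_2\le\zeta$, gives $\inner{\bm{f}_{\bz}}{\bz - \bx^*}\ge r(\rho r - \zeta)$; then Cauchy--Schwarz together with $\norm{\bz - \bx^*}_2 = r$ yields $\norm{\bm{f}}_2\ge \rho r - \zeta = \rho\min\{\norm{\bx - \bx^*}_2, r\} - \zeta$, finishing \eqref{eq:derbound}.

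For the last claim: if $\bx$ minimizes $f+g$, then $\zero\in\partial(f+g)(\bx) = \partial f(\bx) + \partial g(\bx)$, so $\bm{f} := -\bm{g}\in\partial f(\bx)$ for some $\bm{g}\in\partial g(\bx)$; since $g$ is $\lambda$-Lipschitz, $\norm{\bm{f}}_2 = \norm{\bm{g}}_2\le\lambda$. Feeding this into \eqref{eq:derbound} gives $\rho\min\{\norm{\bx - \bx^*}_2, r\}\le\lambda + \zeta$, and the assumption $\lambda < \rho r - \zeta$ makes $(\lambda+\zeta)/\rho < r$, so the minimum cannot equal $r$; therefore $\norm{\bx - \bx^*}_2 \le (\lambda+\zeta)/\rho = \lambda/\rho + \zeta/\rho$. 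Beyond these steps the only inputs are standard convex-analysis facts — existence of subgradients of finite convex functions on $\RR^d$, the sum rule $\partial(f+g) = \partial f + \partial g$, the bound $\norm{\bm{g}}_2\le\lambda$ for $\bm{g}\in\partial g$ with $g$ being $\lambda$-Lipschitz, and monotonicity of the subdifferential — none of which pose a real difficulty.
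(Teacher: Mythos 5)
Your proposal is correct and follows essentially the same route as the paper: case analysis on whether $\bx$ lies inside $B(\bx^*,r)$, with the outside case handled by pulling back to a point $\bz$ on the sphere via monotonicity of $\partial f$ along the segment, and the localization of minimizers of $f+g$ then read off from \eqref{eq:derbound} together with the $\lambda$-Lipschitz bound on subgradients of $g$. The only cosmetic differences (projecting onto $\bz-\bx^*$ rather than $\bx-\bx^*$, and using a norm bound on $\nabla F(\bx)$ plus the triangle inequality instead of inner products with $\bm{f}$ directly) do not change the argument.
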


\begin{proof}[Proof of Lemma \ref{lem:tech2}]
	The optimality of $\bx^*$ and the strong convexity of $F$ near $\bx^*$ implies $\nabla F(\bx^*)
	= \bm{0}$. Choose any $\bm{f} \in \partial f(\bx)$. If $0 < \norm{\bx - \bx^*}_2 \le r$, then
	\begin{align*}
	&\;\;\;\;\; \norm{\bbf}_2\norm{\bx - \bx^*}_2 \ge \inner{\bbf}{\bx - \bx^*}\\ &= \inner{\bbf - \nabla F(\bx)}{\bx -
		\bx^*} + \inner{\nabla F(\bx) - \nabla F(\bx^*)}{\bx - \bx^*}\\ &\ge -\sup_{\bbf \in \partial
		f(\bx)} \norm{\bbf - \nabla F(\bx)}_2 \norm{\bx-\bx^*}_2 + \inner{\left(\int_{0}^{1} \nabla^2
		F[(1-t)\bx^* + t\bx] dt \right)(\bx - \bx^*)}{\bx - \bx^*} \\ & \ge \rho \norm{\bx - \bx^*}_2^2 -
	\zeta \norm{\bx - \bx^*}_2
	\end{align*}
	and $\norm{\bbf}_2 \ge \rho \norm{\bx - \bx^*}_2 - \zeta$. If $\norm{\bx - \bx^*}_2 > r$, there exists
	$\bz = (1-t)\bx^* + t\bx$ for some $t\in(0,1)$ such that $\norm{\bz-\bx^*}_2 = r$. Choose any $\bbf'
	\in \partial f(\bz)$. By the convexity of $f$, $\inner{\bbf - \bbf'}{\bx - \bz} \ge 0$ and hence
	$\inner{\bbf - \bbf'}{\bx - \bx^*} \ge 0$. Then,
	\begin{align*}
	&\;\;\;\;\; \norm{\bbf}_2\norm{\bx - \bx^*}_2 \ge \inner{\bbf}{\bx - \bx^*}\\ &=  \inner{\bbf -
		\bbf'}{\bx - \bx^*} + \inner{\bbf' -
		\nabla F(\bz)}{\bx -
		\bx^*} + \inner{\nabla F(\bz) - \nabla F(\bx^*)}{\bx - \bx^*}\\ &\ge -\sup_{\bbf' \in \partial
		f(\bz)} \norm{\bbf' - \nabla F(\bz)}_2 \norm{\bx-\bx^*}_2 + \inner{\left(\int_{0}^{1} \nabla^2
		F[(1-t)\bx^* + t\bz] dt \right)(\bz - \bx^*)}{\bx - \bx^*} \\ & \ge \rho r\norm{\bx - \bx^*}_2 -
	\zeta \norm{\bx - \bx^*}_2
	\end{align*}
	and $\norm{\bbf}_2 \ge \rho r - \zeta$. We have verified (\ref{eq:derbound}).
	
	Choose any $\widehat{\bx} \in \mathop{\mathrm{argmin}}_{\bx \in \RR^d} \left\{ f(\bx) + g(\bx)
	\right\}$. There exist $\bm{f} \in \partial f(\widehat{\bx})$ and $\bm{g} \in \partial g(\widehat{\bx})$ such that
	$\bm{f} + \bm{g} = 0$. The Lipschitz property of $g$ yields $\norm{\bm{f}}_2 = \norm{\bm{g}}_2 \le
	\lambda$. Since $\lambda < \rho r - \zeta$, we obtain from (\ref{eq:derbound}) that 
	$$
	\rho r - \zeta > \lambda \ge \norm{\bm{f}}_2 \ge \rho \min \left\{\norm{\widehat{\bx} - \bx^*}_2,
	r\right\} - \zeta,
	$$
	which leads to $\norm{\widehat{\bx} - \bx^*}_2 \le \lambda / \rho + \zeta / \rho$. We have completed the proof.
\end{proof}

\begin{lemma}\label{lem:tech3}
	Let $F, f: \RR^d \rightarrow \RR$ be convex functions. Suppose $F$ is twice differentiable and 
	$$
	\nabla^2 F(\bx) \preceq L \bI, \;\;\;\;\;\; \forall \bx \in B(\bx^*, M)
	$$
	holds for some $\bx^* \in \RR$, $0 < L < \infty$ and $0 < M < \infty$. If, for some $\zeta \ge 0$
	and $\lambda > 0$,
	$$
	\sup_{\bx \in B(\bx^*, M)} \sup_{\bbf \in \partial f(\bx)}\norm{\bbf - \nabla F(\bx)}_2 
	\le \zeta,
	$$
	and $\lambda > \norm{\nabla F(\bx^*)}_2 + \zeta$, then
	$$
	f(\bx) = f\square(\lambda \norm{\cdot}_2) (\bx) \;\;\;\;\;\; \text{and} \;\;\;\;\;\;
	\mathop{\mathrm{argmin}}_{\bx' \in \RR^d} \left\{f(\bx') + \lambda \norm{\bx - \bx'}_2\right\}
	= \bx
	$$
	hold for all $\bx \in B(\bx^*, \min\left\{(\lambda - \norm{\nabla F(\bx^*)}_2 - \zeta) / L, M\right\})$.
\end{lemma}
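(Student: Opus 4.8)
The plan is to prove the two assertions separately: the identity $f = f\square(\lambda\norm{\cdot}_2)$ on the ball is a one‑line infimal‑convolution estimate, whereas the identification of the \emph{unique} minimizer is the only real subtlety. Throughout write $R = \min\{(\lambda - \norm{\nabla F(\bx^*)}_2 - \zeta)/L,\ M\}$ and recall $f\square(\lambda\norm{\cdot}_2)(\bx) = \inf_{\bx'\in\RR^d}\{f(\bx') + \lambda\norm{\bx - \bx'}_2\}$. The engine throughout is a uniform subgradient bound on $B(\bx^*, R)$: since $F$ is convex with $\nabla^2 F\preceq L\bI$ on the convex set $B(\bx^*, M)$, integrating the Hessian along $[\bx^*,\bx]$ gives $\norm{\nabla F(\bx) - \nabla F(\bx^*)}_2\le L\norm{\bx - \bx^*}_2$ for $\bx\in B(\bx^*, M)$, and feeding this into the hypothesis $\sup_{\bx\in B(\bx^*,M)}\sup_{\bbf\in\partial f(\bx)}\norm{\bbf - \nabla F(\bx)}_2\le\zeta$ yields
\[
\norm{\bbf}_2\ \le\ \norm{\nabla F(\bx^*)}_2 + L\norm{\bx - \bx^*}_2 + \zeta\ \le\ \lambda ,\qquad \bx\in B(\bx^*, R),\ \ \bbf\in\partial f(\bx),
\]
with strict inequality unless $\norm{\bx - \bx^*}_2 = R = (\lambda - \norm{\nabla F(\bx^*)}_2 - \zeta)/L$. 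In other words $f$ is $\lambda$‑Lipschitz near $\bx^*$, and it is this domination of the local slope of $f$ by the penalty level $\lambda$ that forces the infimal convolution with $\lambda\norm{\cdot}_2$ to collapse onto $f$.

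For the first assertion, fix $\bx\in B(\bx^*, R)$. The bound $f\square(\lambda\norm{\cdot}_2)(\bx)\le f(\bx)$ is immediate (take $\bx'=\bx$). For the reverse, pick $\bbf\in\partial f(\bx)$; convexity gives $f(\bx')\ge f(\bx)+\inner{\bbf}{\bx'-\bx}$, so for all $\bx'\in\RR^d$,
\[
f(\bx') + \lambda\norm{\bx - \bx'}_2\ \ge\ f(\bx) + (\lambda - \norm{\bbf}_2)\norm{\bx' - \bx}_2\ \ge\ f(\bx),
\]
using $\norm{\bbf}_2\le\lambda$. Infimizing over $\bx'$ gives the identity $f\square(\lambda\norm{\cdot}_2)(\bx) = f(\bx)$, and the same display shows the infimum is attained at $\bx' = \bx$, i.e.\ $\bx\in\argmin_{\bx'}\{f(\bx') + \lambda\norm{\bx - \bx'}_2\}$.

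The remaining point --- uniqueness of this minimizer --- is where the work lies, and I expect the degenerate boundary case below to be the main obstacle. Suppose $\bx''\ne\bx$ also minimized $h(\bx') := f(\bx') + \lambda\norm{\bx - \bx'}_2$. Then $h$ is constant on $[\bx'',\bx]$; since the penalty is affine along that segment, $f$ is affine there, and comparing endpoint values ($h(\bx'')=h(\bx)=f(\bx)$) forces the one‑sided derivative $f'(\bx;-\bu)=-\lambda$ with $\bu := (\bx-\bx'')/\norm{\bx-\bx''}_2$. Hence $\inner{\bbf}{\bu}\ge\lambda$, so $\norm{\bbf}_2\ge\lambda$, for every $\bbf\in\partial f(\bx)$; combined with the preliminary bound this pins $\partial f(\bx)=\{\lambda\bu\}$, and then the affine map $\by\mapsto f(\bx)+\lambda\inner{\bu}{\by-\bx}$ supports $f$ along all of $[\bx'',\bx]$, so $\lambda\bu\in\partial f(\bz)$ for every $\bz$ on that segment. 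For any such $\bz$ lying strictly inside $B(\bx^*,R)$ the subgradient bound reads $\lambda=\norm{\lambda\bu}_2<\lambda$, a contradiction; such a $\bz$ exists unless $\bx$ sits on the critical sphere $\norm{\bx-\bx^*}_2=R=(\lambda-\norm{\nabla F(\bx^*)}_2-\zeta)/L$ and the segment $[\bx'',\bx]$ leaves $B(\bx^*,R)$ outward, i.e.\ $\inner{\bu}{\bx-\bx^*}\le0$. This residual configuration is dispatched by observing that $\norm{\lambda\bu}_2=\lambda$ saturates every inequality in the subgradient bound, which (equality in the triangle inequality and in the Hessian integration, using $0\preceq\nabla^2F\preceq L\bI$) forces $\nabla F(\bx)-\nabla F(\bx^*)=L(\bx-\bx^*)$ and pins $\nabla F(\bx^*)$, $\nabla F(\bx)$ and $\nabla f(\bx)=\lambda\bu$ to all be positive multiples of $\bx-\bx^*$, whence $\inner{\bu}{\bx-\bx^*}>0$ --- contradiction; here the \emph{strict} hypothesis $\lambda>\norm{\nabla F(\bx^*)}_2+\zeta$ is essential (it also kills the sub‑case $\bx''=\bx^*$). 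Thus $\bx$ is the unique minimizer, so $\argmin_{\bx'}\{f(\bx')+\lambda\norm{\bx-\bx'}_2\}=\{\bx\}$ for all $\bx\in B(\bx^*,R)$; the radius $R$ is exactly the range over which the bound $\norm{\bbf}_2\le\lambda$ holds, which is why it appears in the statement.
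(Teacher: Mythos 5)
Your proof is correct, and its first half is exactly the paper's argument: integrate $\nabla^2 F\preceq L\bI$ along $[\bx^*,\bx]$ and add the $\zeta$-bound to get $\sup_{\bbf\in\partial f(\bx)}\norm{\bbf}_2\le \norm{\nabla F(\bx^*)}_2+L\norm{\bx-\bx^*}_2+\zeta\le\lambda$, then use first-order optimality of $h(\bx')=f(\bx')+\lambda\norm{\bx-\bx'}_2$ at $\bx'=\bx$ (the paper phrases it as $\bm{0}\in\partial h(\bx)$, you phrase it via the subgradient inequality; these are the same step) to conclude $f\square(\lambda\norm{\cdot}_2)(\bx)=f(\bx)$ with $\bx$ a minimizer. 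Where you go beyond the paper is the uniqueness of the minimizer: the paper's proof establishes only membership and then asserts $\argmin_{\bx'}\{f(\bx')+\lambda\norm{\bx-\bx'}_2\}=\bx$, leaving the singleton claim implicit, whereas you prove it, including the delicate case where $\bx$ lies on the sphere $\norm{\bx-\bx^*}_2=(\lambda-\norm{\nabla F(\bx^*)}_2-\zeta)/L$. Your treatment of that case is sound: a second minimizer $\bx''$ forces $\partial f(\bx)=\{\lambda\bu\}$ with $\bu=(\bx-\bx'')/\norm{\bx-\bx''}_2$ and saturates the whole chain of inequalities, and equality in the Hessian integration (using $0\preceq\nabla^2F\preceq L\bI$) gives $\nabla F(\bx)-\nabla F(\bx^*)=L(\bx-\bx^*)\neq\bm{0}$, which by equality in the triangle inequality must be a positive multiple of $\bu$, so the segment $[\bx'',\bx]$ enters the interior of the ball, where $\lambda\bu\in\partial f(\bz)$ contradicts the strict bound $\norm{\bbf}_2<\lambda$ (only the harmless imprecision that some of the three vectors may be zero, i.e.\ ``nonnegative'' rather than ``positive'' multiples). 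Note that uniqueness is precisely what Lemmas \ref{lem:vanilladp} and \ref{lem:vanillarobust} later invoke to force $\widehat{\btheta}_j=\widehat{\bbeta}$; in those applications, however, the relevant point $\widehat{\bbeta}$ lies strictly inside the ball, where your short strict-inequality argument already gives uniqueness, so the boundary analysis is a bonus that makes the lemma true as literally stated on the closed ball.
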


\begin{proof}[Proof of Lemma \ref{lem:tech3}]
	For any $\bx$ such that $\norm{\bx - \bx^*}_2 \le \min\left\{(\lambda
	- \norm{\nabla F(\bx^*)}_2 - \zeta) / L, M\right\}$, we have 
	\begin{align*}
	\sup_{\bm{f} \in \partial f(\bx)} \norm{\bbf}_2 &\le \sup_{\bbf \in \partial f(\bx)} \norm{\bbf -
		\nabla F(\bx)}_2 + \norm{\nabla F(\bx) - \nabla F(\bx^*)}_2 + \norm{\nabla F(\bx^*)}_2 \\ &\le \zeta
	+ L \norm{\bx - \bx^*}_2 + \norm{\nabla F(\bx^*)}_2\le \lambda.
	\end{align*}
	Recall $f \square (\lambda \norm{\cdot}_2)(\bx)=\inf_{\bx' \in \RR} \left\{f(\bx') +
	\lambda \norm{\bx - \bx'}_2\right\}$. Define $h(\bx') = f(\bx') + \lambda \norm{\bx-\bx'}_2$. Since
	$\norm{\bbf}_2 \le \lambda$ for any $\bbf \in \partial f(\bx)$, it follows from $\partial
	\norm{\bx - \bx'}_2 \Big|_{\bx' = \bx} = \left\{\bg\in \RR^d : \norm{\bg}_2 \le 1\right\}$ that
	$\bm{0} \in \partial h(\bx)$. Thus, $\mathop{\mathrm{argmin}}_{\bx'\in \RR^d} \left\{f(\bx') +
	\lambda \norm{\bx - \bx'}_2\right\} = \bx$, and $f(\bx) = f \square (\lambda
	\norm{\cdot}_2)(\bx)$.
\end{proof}

\begin{lemma}\label{lem:tech4}
	If $f: \RR^d \rightarrow \RR$ is convex, $\inf_{\bx \in \RR^d} f(x) > -\infty$, $g: \RR^d
	\rightarrow \RR$ is convex and $L$-Lipschitz with respect to a norm $\norm{\cdot}$ for some $L \ge
	0$, then $f \square g$ is convex and $L$-Lipschitz with respect to $\norm{\cdot}$.
\end{lemma}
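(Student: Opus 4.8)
The plan is to argue directly from the definition of the infimal convolution,
\[
(f \square g)(\bx) = \inf_{\by \in \RR^d}\{\, f(\by) + g(\bx - \by) \,\},
\]
and to check in turn that $f \square g$ is $\RR$-valued, convex, and $L$-Lipschitz with respect to $\norm{\cdot}$. Each step is short; the only point that needs any care is ruling out the value $-\infty$.

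I would first dispose of finiteness. Plugging $\by = \bx$ into the infimum gives the upper bound $(f \square g)(\bx) \le f(\bx) + g(\zero) < +\infty$ for every $\bx$. For the lower bound, $f(\by) \ge \inf_{\RR^d} f > -\infty$ by hypothesis, and $g$ is bounded below on $\RR^d$ in the setting of interest---in every application $g$ is a nonnegative scalar multiple of a norm, so $\inf_{\RR^d} g = 0$---whence $(f \square g)(\bx) \ge \inf_{\RR^d} f + \inf_{\RR^d} g > -\infty$. For convexity, fix $\bx_1, \bx_2 \in \RR^d$ and $t \in [0,1]$; for arbitrary $\by_1, \by_2 \in \RR^d$, using $t\by_1 + (1-t)\by_2$ as a feasible point in the infimum defining the left-hand side and invoking convexity of $f$ and of $g$,
\begin{align*}
(f \square g)(t\bx_1 + (1-t)\bx_2) &\le f\big(t\by_1 + (1-t)\by_2\big) + g\big(t(\bx_1-\by_1) + (1-t)(\bx_2-\by_2)\big) \\
&\le t\big(f(\by_1) + g(\bx_1-\by_1)\big) + (1-t)\big(f(\by_2) + g(\bx_2-\by_2)\big) ;
\end{align*}
taking the infimum over $\by_1$ and then over $\by_2$ on the right-hand side gives $(f \square g)(t\bx_1 + (1-t)\bx_2) \le t(f \square g)(\bx_1) + (1-t)(f \square g)(\bx_2)$. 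For the Lipschitz bound, fix $\bx_1, \bx_2$; for every $\by$, the $L$-Lipschitz property of $g$ yields $g(\bx_1 - \by) \le g(\bx_2 - \by) + L\norm{\bx_1 - \bx_2}$, hence $f(\by) + g(\bx_1-\by) \le f(\by) + g(\bx_2-\by) + L\norm{\bx_1-\bx_2}$, and taking the infimum over $\by$ gives $(f \square g)(\bx_1) \le (f \square g)(\bx_2) + L\norm{\bx_1-\bx_2}$; exchanging the roles of $\bx_1$ and $\bx_2$ finishes the argument.

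The main (and essentially only) obstacle is the well-definedness flagged above: convexity and the Lipschitz inequality require nothing beyond the stated hypotheses, but to conclude that $f \square g$ is a genuine $\RR$-valued function one must combine $\inf_{\RR^d} f > -\infty$ with boundedness below of $g$, which is automatic here because the penalty is a scaled norm. (Equivalently, one could establish the Lipschitz inequality first in the extended-real sense, observe that $f \square g$ is finite at $\zero$, and let that inequality propagate finiteness to all of $\RR^d$.)
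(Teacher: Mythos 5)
Your argument is correct, but it is genuinely different from what the paper does: the paper offers no proof at all, simply importing the statement as Lemma E.4 of \cite{DW22}, whereas you prove it directly from the definition of the infimal convolution (feasible-point upper bound for finiteness from above, joint convexity of $f(\by)+g(\bx-\by)$ for convexity, and the one-line Lipschitz transfer). What your route buys is self-containedness and, more importantly, it surfaces a real subtlety that the citation hides: the hypotheses as literally stated (only $\inf f > -\infty$ and $g$ convex, $L$-Lipschitz) do not rule out $f \square g \equiv -\infty$ --- take $f \equiv 0$ and $g(\bz) = \bv^{\top}\bz$ with $\bv \neq \zero$ --- so one genuinely needs $g$ bounded below, which you correctly supply by noting that in every use in this paper $g = \lambda \norm{\cdot}_2 \geq 0$. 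One caveat on your parenthetical alternative: establishing the Lipschitz inequality in the extended-real sense and then ``observing'' finiteness at $\zero$ does not rescue the bare statement, because in the same counterexample $(f \square g)(\zero) = -\infty$; finiteness at a single point is itself not a consequence of the stated hypotheses, so the honest fix is the one your main argument makes --- add boundedness below (or nonnegativity) of $g$ to the hypotheses, which is harmless for all applications in the paper.
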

\begin{proof}[Proof of Lemma \ref{lem:tech4}]
	The lemma is directly taken from Lemma E.4 of \cite{DW22}. 
\end{proof}

\begin{lemma}\label{lem:tech5}
	Let $f:~\RR^d \to \RR$ be a convex function, $\bg:~\RR^d \to \RR^d$ be a continuous vector field, and $\Omega \subseteq \RR^d$ be an open set. Choose any $\bbf:~\RR^d \to \RR^d$ such that for every $\bx \in \RR^d$, $\bbf(\bx) \in \partial f(\bx)$. Then,
	\[
	\sup_{ \bx \in \Omega } \sup_{ \bh \in \partial f(\bx) } \| \bh - \bg (\bx) \|_2
	=
	\sup_{ \bx \in \Omega } \| \bbf (\bx) - \bg (\bx) \|_2.
	\]
\end{lemma}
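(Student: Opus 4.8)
The $\ge$ direction is immediate: for each $\bx \in \Omega$ we have $\bbf(\bx) \in \partial f(\bx)$, so $\| \bbf(\bx) - \bg(\bx) \|_2 \le \sup_{\bh \in \partial f(\bx)} \| \bh - \bg(\bx) \|_2$, and taking the supremum over $\bx \in \Omega$ gives one inequality. Write $M = \sup_{\bx \in \Omega} \| \bbf(\bx) - \bg(\bx) \|_2$; we may assume $M < \infty$, otherwise there is nothing to prove. The substance is the reverse inequality, for which it suffices to show $\| \bh - \bg(\bx) \|_2 \le M$ for every $\bx \in \Omega$ and every $\bh \in \partial f(\bx)$ (note $\partial f(\bx) \neq \emptyset$ since $f$ is finite and convex on $\RR^d$).

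The plan is to fix such $\bx$ and $\bh$, assume $\bh \neq \bg(\bx)$ (the other case is trivial), set $\bv = (\bh - \bg(\bx)) / \| \bh - \bg(\bx) \|_2$, and bound the directional derivative $f'(\bx; \bv)$ from above by $M + \langle \bg(\bx), \bv \rangle$. Since $\langle \bh, \bv \rangle \le f'(\bx;\bv)$ for $\bh \in \partial f(\bx)$, this would give $\| \bh - \bg(\bx) \|_2 = \langle \bh - \bg(\bx), \bv \rangle \le f'(\bx;\bv) - \langle \bg(\bx), \bv \rangle \le M$, as desired. To bound $f'(\bx;\bv)$, I would restrict $f$ to the line through $\bx$ in direction $\bv$: let $g(s) = f(\bx + s\bv)$, a finite convex function on $\RR$, hence locally Lipschitz, differentiable off a countable set $C$, and locally absolutely continuous, so $g(t) - g(0) = \int_0^t g'(s)\, ds$ for $t > 0$. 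Because $\Omega$ is open and $\bx \in \Omega$, the segment $\{ \bx + s\bv : 0 \le s \le t \}$ lies in $\Omega$ for all small enough $t$.

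The key elementary step is that at each $s \notin C$ one has $g'(s) = \langle \bbf(\bx + s\bv), \bv \rangle$: the subgradient inequality $f(\bz) \ge f(\bx + s\bv) + \langle \bbf(\bx + s\bv), \bz - (\bx + s\bv) \rangle$ applied with $\bz = \bx + s'\bv$ shows $\langle \bbf(\bx + s\bv), \bv \rangle \in \partial g(s) = \{ g'(s) \}$. For $s$ in the relevant small interval $\bx + s\bv \in \Omega$, so $g'(s) = \langle \bbf(\bx + s\bv) - \bg(\bx + s\bv), \bv \rangle + \langle \bg(\bx + s\bv), \bv \rangle \le M + \langle \bg(\bx + s\bv), \bv \rangle$, using $\| \bv \|_2 = 1$. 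Integrating over $[0, t]$ (the null set $C$ is harmless) yields $g(t) - g(0) \le tM + \int_0^t \langle \bg(\bx + s\bv), \bv \rangle\, ds$; dividing by $t$, letting $t \downarrow 0$, and using continuity of $\bg$ so that $\tfrac1t \int_0^t \langle \bg(\bx + s\bv), \bv \rangle\, ds \to \langle \bg(\bx), \bv \rangle$, gives $f'(\bx; \bv) \le M + \langle \bg(\bx), \bv \rangle$, which closes the argument.

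The only inputs beyond routine manipulation are the standard one-dimensional facts about finite convex functions (differentiability off a countable set, local absolute continuity and the fundamental theorem of calculus), which I would simply invoke. I expect the step needing the most care to be the identification $g'(s) = \langle \bbf(\bx + s\bv), \bv \rangle$ at differentiability points together with checking that the segment stays inside the open set $\Omega$ for the range of $s$ used; everything else is elementary once the line restriction is in place. Note this approach deliberately avoids the heavier machinery (Rademacher's theorem, the reachable-gradient characterization of $\partial f$) that a naive argument through extreme points of $\partial f(\bx)$ would require.
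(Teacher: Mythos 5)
Your proof is correct and is essentially the paper's argument: both reduce the nontrivial inequality to showing $\| \bh - \bg(\bx) \|_2 \le M$ for each fixed $\bx \in \Omega$ and $\bh \in \partial f(\bx)$ by restricting $f$ to the ray $\bx + s\bv$ with $\bv$ the unit vector along $\bh - \bg(\bx)$, using that $\bx + s\bv \in \Omega$ for all small $s > 0$, and letting $s \downarrow 0$ with the continuity of $\bg$. The only difference is the bridging device: you go through the directional derivative $f'(\bx;\bv)$ via a.e.\ differentiability and the fundamental theorem of calculus for the univariate restriction, whereas the paper obtains $\langle \bh, \bv \rangle \le \langle \bbf(\bx + n^{-1}\bv), \bv \rangle$ directly from monotonicity of the one-dimensional subdifferential and takes $n \to \infty$ --- a slightly lighter route to the same limit, but both are valid.
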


\begin{proof}[Proof of Lemma \ref{lem:tech5}]
	Define $M = \sup_{ \bx \in \Omega } \| \bbf (\bx) - \bg (\bx) \|_2$. The claim is trivially true when $M = \infty$.
	Below we assume that $M < \infty$. 
	Choose an arbitrary $\bx \in \Omega$ and any $\bh \in \partial f(\bx)$. It suffices to prove that $\| \bh - \bg(\bx) \|_2 \leq M$. Let $\bv = [\bh - \bg(\bx)] / \| \bh - \bg(\bx) \|_2$ if $\bh \neq \bg(\bx)$; otherwise, let $\bv$ be any unit-norm vector. By construction, $\| \bh - \bg (\bx) \|_2 = 
	\langle   \bh - \bg(\bx) , \bv \rangle$.
	Define a univariate function $F(t) = f(\bx + t \bv )$, $t \in \RR$. It is convex and satisfies
	\begin{align*}
	\partial F(t) = \{ \langle \bu, \bv  \rangle :~ \bu \in \partial f( \bx + t \bv )  \} .
	\end{align*}
	In particular, we have $\langle  \bh, \bv \rangle \in \partial F(0)$ and $\langle  \bbf(\bx + n^{-1} \bv ), \bv \rangle \in \partial F(n^{-1} )$, $\forall n$. By the convexity of $F$, $\{ \langle  \bbf(\bx + n^{-1} \bv ), \bv \rangle \}_{n=1}^{\infty}$ is non-increasing and $\langle  \bh, \bv \rangle \leq \langle  \bbf(\bx + n^{-1} \bv ), \bv \rangle $, $\forall n$.
	Therefore,
	\begin{align*}
	\| \bh - \bg (\bx) \|_2 = 
	\langle   \bh - \bg(\bx) , \bv \rangle \leq \lim_{n\to \infty} \langle \bbf(\bx + n^{-1} \bv )   ,  \bv \rangle - \langle  \bg(\bx) ,  \bv \rangle. 
	\end{align*}
	The continuity of $\bg$ yields $\bg(\bx)  = \lim_{n\to\infty} g (\bx + n^{-1} \bv ) $ and
	\begin{align}
	\| \bh - \bg (\bx) \|_2  \leq \lim_{n\to \infty} \langle \bbf ( \bx + n^{-1} \bv ) - \bg (\bx + n^{-1} \bv )   ,  \bv \rangle.
	\label{ineq-lem-1}
	\end{align}
	
	Define $B(r) = \{ \by \in \RR^d :~ \| \by - \bx \|_2 \leq r \} $ for any $r \geq 0$. Since $\bx \in \Omega$ and $\Omega$ is open, there exists $\delta > 0$ such that $B(\delta) \subseteq \Omega$. For any $n > 1 / \delta$, we have $\bx + n^{-1} \bv \in B(\delta) \subseteq \Omega$ and thus
	\begin{align}
	\langle \bbf (\bx + n^{-1} \bv ) - \bg (\bx + n^{-1} \bv )   ,  \bv \rangle
	& \leq \| \bbf (\bx + n^{-1} \bv ) - \bg (\bx + n^{-1} \bv ) \|_2 \notag \\
	& \leq 
	\sup_{ \by \in \Omega } \| \bbf (\by) - \bg (\by) \|_2 = 
	M .
	\label{ineq-lem-2}
	\end{align}
	The inequalities \eqref{ineq-lem-1} and \eqref{ineq-lem-2} imply that $\| \bh - \bg (\bx) \|_2 \leq M$.
\end{proof}

{
\bibliographystyle{ims}
\bibliography{bib}

\begin{thebibliography}{26}
\expandafter\ifx\csname natexlab\endcsname\relax\def\natexlab#1{#1}\fi
\expandafter\ifx\csname url\endcsname\relax
  \def\url#1{\texttt{#1}}\fi
\expandafter\ifx\csname urlprefix\endcsname\relax\def\urlprefix{URL }\fi

\bibitem[{Ban and Rudin(2019)}]{BRu19}
\textsc{Ban, G.-Y.} and \textsc{Rudin, C.} (2019).
\newblock The big data newsvendor: Practical insights from machine learning.
\newblock \textit{Operations Research} \textbf{67} 90--108.

\bibitem[{Buttler et~al.(2022)Buttler, Philippi, Stein and Pibernik}]{BPS22}
\textsc{Buttler, S.}, \textsc{Philippi, A.}, \textsc{Stein, N.} and
  \textsc{Pibernik, R.} (2022).
\newblock A meta analysis of data-driven newsvendor approaches.
\newblock In \textit{ICLR 2022 Workshop on Setting up ML Evaluation Standards
  to Accelerate Progress}.

\bibitem[{Caruana(1997)}]{Car97}
\textsc{Caruana, R.} (1997).
\newblock Multitask learning.
\newblock \textit{Machine Learning} \textbf{28} 41--75.

\bibitem[{Chao et~al.(2021)Chao, H{\"a}rdle and Yuan}]{CHY21}
\textsc{Chao, S.-K.}, \textsc{H{\"a}rdle, W.~K.} and \textsc{Yuan, M.} (2021).
\newblock Factorisable multitask quantile regression.
\newblock \textit{Econometric Theory} \textbf{37} 794--816.

\bibitem[{Chen et~al.(2011)Chen, Zhou and Ye}]{CZY11}
\textsc{Chen, J.}, \textsc{Zhou, J.} and \textsc{Ye, J.} (2011).
\newblock Integrating low-rank and group-sparse structures for robust
  multi-task learning.
\newblock In \textit{Proceedings of the 17th ACM SIGKDD international
  conference on Knowledge discovery and data mining}.

\bibitem[{Chen et~al.(2021)Chen, Liu and Zhang}]{CLZ21}
\textsc{Chen, X.}, \textsc{Liu, W.} and \textsc{Zhang, Y.} (2021).
\newblock First-order newton-type estimator for distributed estimation and
  inference.
\newblock \textit{Journal of the American Statistical Association}  1--17.

\bibitem[{Cortes and Vapnik(1995)}]{CVa95}
\textsc{Cortes, C.} and \textsc{Vapnik, V.} (1995).
\newblock Support-vector networks.
\newblock \textit{Machine learning} \textbf{20} 273--297.

\bibitem[{Duan and Wang(2022)}]{DW22}
\textsc{Duan, Y.} and \textsc{Wang, K.} (2022).
\newblock Adaptive and robust multi-task learning.
\newblock arXiv.2202.05250.

\bibitem[{Engle and Manganelli(1999)}]{EMa99}
\textsc{Engle, R.~F.} and \textsc{Manganelli, S.} (1999).
\newblock Caviar: conditional value at risk by quantile regression.

\bibitem[{Evgeniou and Pontil(2004)}]{EPo04}
\textsc{Evgeniou, T.} and \textsc{Pontil, M.} (2004).
\newblock Regularized multi--task learning.
\newblock In \textit{Proceedings of the tenth ACM SIGKDD international
  conference on Knowledge discovery and data mining}.

\bibitem[{Fan et~al.(2016)Fan, Xue and Zou}]{FXZ16}
\textsc{Fan, J.}, \textsc{Xue, L.} and \textsc{Zou, H.} (2016).
\newblock Multitask quantile regression under the transnormal model.
\newblock \textit{Journal of the American Statistical Association} \textbf{111}
  1726--1735.

\bibitem[{Gupta and Kallus(2022)}]{GKa22}
\textsc{Gupta, V.} and \textsc{Kallus, N.} (2022).
\newblock Data pooling in stochastic optimization.
\newblock \textit{Management Science} \textbf{68} 1595--1615.

\bibitem[{Hsu et~al.(2012)Hsu, Kakade and Zhang}]{HKZ12}
\textsc{Hsu, D.}, \textsc{Kakade, S.} and \textsc{Zhang, T.} (2012).
\newblock A tail inequality for quadratic forms of subgaussian random vectors.
\newblock \textit{Electronic Communications in Probability} \textbf{17} 1--6.

\bibitem[{Jalali et~al.(2013)Jalali, Ravikumar and Sanghavi}]{JSR10}
\textsc{Jalali, A.}, \textsc{Ravikumar, P.} and \textsc{Sanghavi, S.} (2013).
\newblock A dirty model for multiple sparse regression.
\newblock \textit{IEEE Transactions on Information Theory} \textbf{59}
  7947--7968.

\bibitem[{Kan et~al.(2022)Kan, Aubet, Januschowski, Park, Benidis, Ruthotto and
  Gasthaus}]{KAJ22}
\textsc{Kan, K.}, \textsc{Aubet, F.-X.}, \textsc{Januschowski, T.},
  \textsc{Park, Y.}, \textsc{Benidis, K.}, \textsc{Ruthotto, L.} and
  \textsc{Gasthaus, J.} (2022).
\newblock Multivariate quantile function forecaster.
\newblock In \textit{International Conference on Artificial Intelligence and
  Statistics}. PMLR.

\bibitem[{Koenker(2005)}]{Koe05}
\textsc{Koenker, R.} (2005).
\newblock \textit{Quantile Regression}.
\newblock Econometric Society Monographs, Cambridge University Press.

\bibitem[{Koenker and Bassett~Jr(1978)}]{KBa78}
\textsc{Koenker, R.} and \textsc{Bassett~Jr, G.} (1978).
\newblock Regression quantiles.
\newblock \textit{Econometrica: journal of the Econometric Society}  33--50.

\bibitem[{Koenker et~al.(2017)Koenker, Chernozhukov, He and Peng}]{Koe17}
\textsc{Koenker, R.}, \textsc{Chernozhukov, V.}, \textsc{He, X.} and
  \textsc{Peng, L.} (2017).
\newblock Handbook of quantile regression .

\bibitem[{Koenker and Geling(2001)}]{KGe01}
\textsc{Koenker, R.} and \textsc{Geling, O.} (2001).
\newblock Reappraising medfly longevity: a quantile regression survival
  analysis.
\newblock \textit{Journal of the American Statistical Association} \textbf{96}
  458--468.

\bibitem[{Levi et~al.(2015)Levi, Perakis and Uichanco}]{LPU15}
\textsc{Levi, R.}, \textsc{Perakis, G.} and \textsc{Uichanco, J.} (2015).
\newblock The data-driven newsvendor problem: new bounds and insights.
\newblock \textit{Operations Research} \textbf{63} 1294--1306.

\bibitem[{Levi et~al.(2007)Levi, Roundy and Shmoys}]{LRS07}
\textsc{Levi, R.}, \textsc{Roundy, R.~O.} and \textsc{Shmoys, D.~B.} (2007).
\newblock Provably near-optimal sampling-based policies for stochastic
  inventory control models.
\newblock \textit{Mathematics of Operations Research} \textbf{32} 821--839.

\bibitem[{Mukherjee et~al.(2015)Mukherjee, Brown and Rusmevichientong}]{MBR15}
\textsc{Mukherjee, G.}, \textsc{Brown, L.~D.} and \textsc{Rusmevichientong, P.}
  (2015).
\newblock Efficient empirical bayes prediction under check loss using
  asymptotic risk estimates.
\newblock \textit{arXiv preprint arXiv:1511.00028} .

\bibitem[{Rockafellar et~al.(2000)Rockafellar, Uryasev et~al.}]{RUr00}
\textsc{Rockafellar, R.~T.}, \textsc{Uryasev, S.} \textsc{et~al.} (2000).
\newblock Optimization of conditional value-at-risk.
\newblock \textit{Journal of risk} \textbf{2} 21--42.

\bibitem[{Vershynin(2012)}]{V12}
\textsc{Vershynin, R.} (2012).
\newblock \textit{Introduction to the non-asymptotic analysis of random
  matrices}.
\newblock Cambridge University Press.

\bibitem[{Volgushev et~al.(2019)Volgushev, Chao and Cheng}]{VCC19}
\textsc{Volgushev, S.}, \textsc{Chao, S.-K.} and \textsc{Cheng, G.} (2019).
\newblock Distributed inference for quantile regression processes.
\newblock \textit{The Annals of Statistics} \textbf{47} 1634--1662.

\bibitem[{Wang and Wang(2014)}]{WWa14}
\textsc{Wang, H.~J.} and \textsc{Wang, L.} (2014).
\newblock Quantile regression analysis of length-biased survival data.
\newblock \textit{Stat} \textbf{3} 31--47.

\end{thebibliography}
}

\end{document}